\def\eqref#1{equation~\ref{#1}}
\def\1{\bm{1}}
\DeclareMathAlphabet{\mathsfit}{\encodingdefault}{\sfdefault}{m}{sl}
\SetMathAlphabet{\mathsfit}{bold}{\encodingdefault}{\sfdefault}{bx}{n}
\newcommand{\E}{\mathbb{E}}
\newcommand{\R}{\mathbb{R}}
\newcommand{\KL}{D_{\mathrm{KL}}}
\DeclareMathOperator*{\argmax}{arg\,max}
\DeclareMathOperator*{\argmin}{arg\,min}
\theoremstyle{definition}
\theoremstyle{plain}
\newtheorem{proposition}{Proposition}[section]
\theoremstyle{plain}
\newtheorem{assumption}{Assumption}[section]
\theoremstyle{plain}
\newtheorem{theorem}{Theorem}[section]
\theoremstyle{plain}
\newtheorem{lemma}{Lemma}[section]
\theoremstyle{remark}
\title{Addressing Label Shift in Distributed Learning via Entropy Regularization}
\author{Zhiyuan Wu\thanks{These authors contributed equally to this work.}\\
University of Oslo\\ 
\texttt{zhiyuanw@ifi.uio.no} \\
\And
Changkyu Choi$^{*}$\\
UiT The Arctic University of Norway\\
\texttt{changkyu.choi@uit.no} \\
\AND
Xiangcheng Cao\\
EPFL\\
\texttt{xiangcheng.cao.epfl@gmail.com}\\
\And
Volkan Cevher\\
LIONS, EPFL\\
\texttt{volkan.cevher@epfl.ch $\hspace{0.7cm}$}\\
\AND
\hspace{5.1cm}Ali Ramezani-Kebrya\\
\hspace{3.4cm} Department of Informatics, University of Oslo\\ 
\hspace{1.7cm}Norwegian Centre for Knowledge-driven Machine Learning (Integreat)\\
\hspace{5.7cm}\texttt{ali@uio.no}
}
\begin{document}

\maketitle
\begin{abstract}
We address the challenge of minimizing {\it true risk} in multi-node distributed learning.\footnote{We use the term node to refer to a client, FPGA, APU, CPU, GPU, or worker.} These systems are frequently exposed to both inter-node and intra-node {\it label shifts}, which present a critical obstacle to effectively optimizing model performance while ensuring that data remains confined to each node.
To tackle this, we propose the Versatile Robust Label Shift (VRLS) method, which enhances the maximum likelihood estimation of the test-to-train label density ratio. VRLS incorporates Shannon entropy-based regularization and adjusts the density ratio during training  to better handle label shifts at the test time.
In multi-node learning environments, VRLS further extends its capabilities by learning and adapting density ratios across nodes, effectively mitigating label shifts and improving overall model performance. Experiments conducted on MNIST, Fashion MNIST, and CIFAR-10 demonstrate the effectiveness of VRLS, outperforming baselines by up to $20\%$ in imbalanced settings. These results highlight the significant improvements VRLS offers in addressing label shifts. Our theoretical analysis further supports this by establishing high-probability bounds on estimation errors.

\end{abstract}

\section{Introduction}\label{sec:intro}

The classical learning theory relies on the assumption that data samples, during training and testing, are {\it independently and identically distributed (i.i.d.)} drawn from an unknown distribution. However, this {\it i.i.d.} assumption is often overly idealistic in real-world settings, where the distributions of training and testing samples can differ significantly and change dynamically as the operational environment evolves. 
In distributed learning~\citep{pmlr-v162-kim22a, wen2023survey, ye2023heterogeneous, 10203330}, where nodes retain their own data without sharing, these discrepancies across nodes become more pronounced, further intensifying the learning challenge~\citep{rahman2023federated, wang2023flexifed}.

{\it Label shifts}~\citep{bbse, garg2022OSLS, mani2022unsupervised, zhou2023domain} represent a form of distributional discrepancy that arises when the marginal distribution of labels in the training set differs from that in the test set, i.e., $p^{\text{te}}(\boldsymbol{y}) \neq p^{\text{tr}}(\boldsymbol{y})$, while the conditional distribution of features given labels, $p(\boldsymbol{x}|\boldsymbol{y})$, remains largely stable across both datasets.
Label shifts commonly manifest both \textit{inter-node} and \textit{intra-node}, complicating the learning process in real-world distributed learning scenarios. However, a commonly used learning principle in this distributed setting, empirical risk minimization (ERM)~\citep{10.5555/3666122.3667754}, operates under the assumption that the training and test distributions are identical on each node and across nodes. This overlooks these shifts, failing to account for the statistical heterogeneity across decentralized data sources.
While the current literature~\citep{yin2024optimization} addresses statistical heterogeneity across nodes, it often neglects distribution shifts at test or operation time, which has been a significant challenge in the entire data science over decades.

The primary technical challenge in addressing label shifts lies in the efficient and accurate estimation of the test-to-train density ratios, $p^{\text{te}}(\boldsymbol{y}) / p^{\text{tr}}(\boldsymbol{y}) $, across all labels.
A widely popular solution is Maximum Likelihood Label Shift Estimation (MLLS)~\citep{mlls}, which frames this estimation as a convex optimization problem, akin to the Expectation-Maximization (EM) algorithm~\citep{bbse_2002}. Model calibration refers to the process of ensuring that predicted probabilities reflect the true likelihood of correctness, which is crucial for improving the accuracy of density ratio estimation~\citep{calibration_modern, mlls}.
Bias-Corrected Calibration (BCT)~\citep{AlexandariEM} serves as an efficient calibration method that enhances the EM algorithm within MLLS. 

While BCT and other post-hoc calibration techniques~\citep{on_cali, 10.5555/3454287.3455390, NEURIPS2021_61f3a6db, sun2024minimum} contribute to improved calibration and may potentially improve model performance, their primary focus remains on refining classification outcomes rather than on accurately approximating the true conditional distribution $p^{\text{tr}}(\boldsymbol{y} | \boldsymbol{x})$. 
The ``predictor'' in these literature captures the relationship between the input features \( \boldsymbol{x} \) and the corresponding output probabilities across the labels in the discrete label space \( \mathcal{Y} \), with \( |\mathcal{Y}| = m \), which should approximate the true distribution of $p^{\text{tr}}(\boldsymbol{y} | \boldsymbol{x})$. 
Despite this goal, training with conventional cross-entropy loss often leads to models that produce predictions that are either highly over-confident or under-confident, resulting in poorly calibrated outputs~\citep{calibration_modern}.
Consequently, the predictor fails to capture the underlying uncertainty inherent in $p^{\text{tr}}(\boldsymbol{y} | \boldsymbol{x})$, which limits its effectiveness in estimating density ratios~\citep{AlexandariEM, mlls, guo2020ltf, modern_calib, pereyra2017regularizing, FedAvg}.

To address this limitation, we propose a novel Versatile Robust Label Shift (VRLS) method, specifically designed to improve density ratio estimation for tackling the label shift problem. 
A key idea of our VRLS method is to approximate $p^{\text{tr}}(\boldsymbol{y} | \boldsymbol{x})$ in a way that accounts for the inherent uncertainty over the label space $\mathcal{Y}$ for each input $\boldsymbol{x}$. Accordingly, we propose a new objective function incorporating regularization to penalize predictions that lack proper uncertainty calibration. 
We show that training the predictor in this manner significantly reduces estimation error under various label shift conditions.

Building upon our VRLS method, we extend its application to multi-node settings by proposing an Importance Weighted-ERM (IW-ERM) framework. Within the multi-node distributed environment, our IW-ERM aims to find an unbiased estimate of the overall true risk minimizer across multiple nodes with varying label distributions. By effectively addressing both intra-node and inter-node label shifts with generalization guarantees, our framework handles the statistical heterogeneity inherent in decentralized data sources.
Our extensive experiments demonstrate that the IW-ERM framework, which trains predictors exclusively on local node data, significantly improves overall test error. Moreover, it maintains convergence rates and privacy levels comparable to standard ERM methods while achieving minimal communication and computational overhead compared to existing baselines.
Our main contributions are as follows:

\begin{itemize}[leftmargin=0pt]
\item We propose VRLS, which enhances the approximation of the probability distribution \( p^{\text{tr}}(\boldsymbol{y} | \boldsymbol{x}) \) by incorporating a novel regularization term based on Shannon entropy~\citep{neo2024maxent}. This regularization leads to more accurate estimation of the test-to-train label density ratio, resulting in improved predictive performance under various label shift conditions.

\item By integrating our VRLS ratio estimation into multi-node distributed learning environment, we achieve performance close to an upper bound that uses true ratios on Fashion MNIST and CIFAR-10 datasets with 5, 100, and 200 nodes. Our IW-ERM framework effectively manages both inter-node and intra-node label shifts while remaining data confined within each node, resulting in up to 20\% improvements in average test error over current baselines.
\item We establish high-probability estimation error bounds for VRLS, as well as high-probability convergence bounds for IW-ERM with VRLS in nonconvex optimization settings (\cref{sec:theory_guarantee}, Appendices \ref{app:thm:est}, \ref{app:conv}). Additionally, we demonstrate that incorporating importance weighting does not negatively impact convergence rates or communication guarantees across various optimization settings.
\end{itemize}

\section{Density ratio estimation and Importance Weighted-ERM \label{sec:prelim}}

\paragraph{Density ratio estimation} Density ratio estimation for label shifts has been addressed by methods such as solving linear systems \citep{bbse, rlls} and minimizing distribution divergences \citep{mlls}, primarily in the context of a single node.~\citet{bbse, rlls, mlls} assumed the conditional distribution $p(\boldsymbol{x}|\boldsymbol{y})$ remains fixed between the training and test datasets, while the label distribution $p(\boldsymbol{y})$ changes. 
Black Box Shift Estimation (BBSE) \citep{bbse, ts1} and Regularized Learning under Label Shift (RLLS) \citep{rlls} are confusion matrix-based methods for estimating density ratios in label shift problems. While BBSE has been shown consistent even when the predictor is not calibrated, its subpar performance is attributed to information loss inherent in using confusion matrices \citep{mlls}. To overcome this, \citet{mlls} has introduced the MLLS, resulting in significant improvements in estimation performance, especially when combined with post-hoc calibration methods like BCT \citep{bct}. This EM algorithm based MLLS method \citep{bbse_2002,mlls} is concave and can be solved efficiently.

\begin{table*}
\footnotesize
\centering
\bgroup
\setlength{\tabcolsep}{2pt}
\renewcommand{\arraystretch}{0.5}
\def\arraystretch{1.3}
 \caption{Details of the label shift scenarios. Their IW-ERM formulas are presented in~\cref{app:IWERM}.}
 \label{app:tab:scenario} 
  \begin{tabular}{l c c c}
    \hline
  \textbf{Scenario} & \textbf{\#Nodes} & \textbf{Assumptions on Distributions} & \textbf{Ratio Node i Needs} \\
  \hline
    	{\tt No-LS} in~\Cref{IWERM:nocovar}  &  2 & $p_1^{\text{tr}}(\boldsymbol{y})=p_1^{\text{te}}(\boldsymbol{y})$, $p_1^{\text{tr}}(\boldsymbol{y})\neq p_2^{\text{tr}}(\boldsymbol{y})$  & ${p_1^{\text{tr}}(\boldsymbol{y})}/{p_2^{\text{tr}}(\boldsymbol{y})}$  \\
  	 {\tt LS on single} in~\Cref{IWERM:covar1}   & 2 &  $p_1^{\text{tr}}(\boldsymbol{y})\neq p_1^{\text{te}}(\boldsymbol{y})$, $p_2^{\text{tr}}(\boldsymbol{y})= p_2^{\text{te}}(\boldsymbol{y})$ & ${p_1^{\text{te}}(\boldsymbol{y})}/{p_1^{\text{tr}}(\boldsymbol{y})}$, ${p_1^{\text{te}}(\boldsymbol{y})}/{p_2^{\text{tr}}(\boldsymbol{y})}$\\
 {\tt LS on both} in~\Cref{IWERM:covar1}      & 2 & $p_1^{\text{tr}}(\boldsymbol{y})\neq $ $p_1^{\text{te}}(\boldsymbol{y})$, $p_2^{\text{tr}}(\boldsymbol{y})\neq p_2^{\text{te}}(\boldsymbol{y})$ & ${p_1^{\text{te}}(\boldsymbol{y})}/{p_1^{\text{tr}}(\boldsymbol{y})}$, ${p_1^{\text{te}}(\boldsymbol{y})}/{p_2^{\text{tr}}(\boldsymbol{y})}$ \\     
   {\tt LS on multi} in~\Cref{IWERM:gen}  & $K$  &  $p_k^{\text{tr}}(\boldsymbol{y})\neq p_1^{\text{te}}(\boldsymbol{y})$ for all $k$ & ${p_1^{\text{te}}(\boldsymbol{y})}/{p_k^{\text{tr}}(\boldsymbol{y})}$ for all $k$ \\  
 	  \hline
 \end{tabular}
  \egroup
\end{table*}

\paragraph{Importance Weighted-ERM} Classical ERM seeks to minimize the expected loss over the training distribution using finite samples. However, when there is a distribution shift between the training and test data, the objective of ERM is not to minimize the expected loss over the test distribution, regardless of the number of training samples. To address this, IW-ERM is developed \citep{shimodaira2000improving, sugiyama2006importance, byrd2019effect, fang2020rethinking}, which adjusts the training loss by weighting samples according to the density ratio, i.e., the ratio of the test density to the training density.~\citet{shimodaira2000improving} has shown that the IW-ERM estimator is asymptotically unbiased under certain conditions. Building on this, \citet{ali2023federated} have recently introduced Federated IW-ERM, which incorporates density ratio estimation to handle covariate shifts in distributed learning. However, this approach has limitations, as it does not address label shifts and the density ratio estimation method poses potential privacy risks.

In this work, we focus on label shifts and propose an IW-ERM framework enhanced by our VRLS method. We show that our IW-ERM with VRLS performs comparably to an upper bound that utilizes true density ratios, all while preserving data privacy across distributed data sources. This approach 
effectively addresses both intra-node and inter-node label shifts while ensuring convergence in probability to the overall true risk minimizer.

\section{ Versatile Robust Label Shift: Regularized Ratio Estimation}\label{sec:ratio}

In this section, we introduce the Versatile Robust Label Shift (VRLS) method for density ratio estimation in a single-node setting, which forms the basis of the IW-ERM framework. 
To solve the optimization problem of IW-ERM, each node \( k \) requires an accurate estimate of the ratio:
\begin{equation}
r_k(\boldsymbol{y}) = \frac{\sum_{j=1}^K p_j^{\text{te}}(\boldsymbol{y})}{p_k^{\text{tr}}(\boldsymbol{y})},    
\label{eq:densityratio}
\end{equation}
where \( p_j^{\text{te}}(\boldsymbol{y}) \) and \( p_k^{\text{tr}}(\boldsymbol{y}) \) represent the test and training label densities, respectively.
To improve clarity and avoid over-complicating notations, we first consider the scenario where we have only one node under label shifts and then extend to multiple nodes.
\cref{app:tab:scenario} presents various scenarios.
In a single-node label shift scenario, the goal is to estimate the ratio \( r(\boldsymbol{y}) = p^{\text{te}}(\boldsymbol{y}) / p^{\text{tr}}(\boldsymbol{y}) \). 
Following the seminal work of~\citet{mlls}, we formulate density ratio estimation as a Maximum Likelihood Estimation (MLE) problem by constructing an optimization problem based on Kullback-Leibler (KL) divergence to directly estimate \( r(\boldsymbol{y}) \). 
We train a predictor \( f_{\boldsymbol{\theta}} \) to approximate \( p^{\text{tr}}(\boldsymbol{y} | \boldsymbol{x}) \), where \( \boldsymbol{\theta} \) denotes the parameters of a neural network.
After training, we apply the predictor  $f_{\boldsymbol{\theta}^\star}$ to a finite set of unlabeled samples drawn from the test distribution to obtain predicted label probabilities. These predictions are then used to estimate the ratio $\boldsymbol{r}_{f^\star}$.
Further details are provided in~\Cref{alg:pred}.

One of the novelties of VRLS is its ability to better calibrate the predictor, enabling it to better approximate the true conditional distribution \( p^{\text{tr}}(\boldsymbol{y} | \boldsymbol{x}) \). 
This approximation faces two main challenges, as highlighted in Theorem 3 of~\citep{mlls}: finite-sample error and miscalibration error.
Entropy-based regularization can directly tackle miscalibration, which occurs when predicted probabilities systematically deviate from true likelihoods.
Building on these insights, we introduce an \textit{explicit} entropy regularizer into the training objective, which is based on Shannon's entropy~\citep{pereyra2017regularizing, neo2024maxent}.
The regularization term $\Omega(f_{{\boldsymbol{\theta}}})$ is defined as:
\begin{flalign}
\Omega(f_{{\boldsymbol{\theta}}}) = \sum_{c=1}^{m} \phi\big(f_{{\boldsymbol{\theta}}}(\boldsymbol{x})\big)_c \log \bigg( \phi\big(f_{{\boldsymbol{\theta}}}(\boldsymbol{x})\big)_c \bigg) ,
\end{flalign}
where \( \phi \) denotes the softmax function, and \( c \)\ represents the $c^{\text{th}}$ element of the softmax output vector.
\begin{algorithm}[t!] 
\caption{VRLS Density Ratio Estimation Algorithm}
\label{alg:pred}
\footnotesize
\begin{algorithmic}[1]
    \Require Labeled training data $\{(\boldsymbol{x}_i, \boldsymbol{y}_i)\}_{i=1}^{n^{\text{tr}}}$.
    \Require Unlabeled test data $\{\boldsymbol{x}_j\}_{j=1}^{n^{\text{te}}}$.
    \Require Initial predictor \( f_{\boldsymbol{\theta}} \).
    \Ensure Optimized predictor \( f_{\boldsymbol{\theta}^{*}}
    \) and estimated density ratio \( {\boldsymbol{r}_{{f}^{*}}} \).

    \State \textbf{Training}:
        \State \hspace{\algorithmicindent} Optimize \( f_{\boldsymbol{\theta}} \) using~\cref{eq:f_g} via SGD.
        \State \hspace{\algorithmicindent} Continue until the training loss drops below a threshold or the maximum epochs are reached.
        \State \hspace{\algorithmicindent}  Obtain the optimized predictor \( f_{\boldsymbol{\theta}^{*}} \).
    \State \textbf{Density Ratio Estimation}:
        \State  \hspace{\algorithmicindent} With the optimized predictor \( f_{\boldsymbol{\theta}^{*}} \), estimate the density ratio \( {\boldsymbol{r}_{{f}^{*}}} \) using equation~\cref{eq:f_wo_g}.
\end{algorithmic}
\end{algorithm}
With this regularization to the softmax outputs, VRLS encourages smoother and more reliable predictions that account for inherent uncertainty in the data, leading to more accurate density ratio estimates and improving the SotA in practice. 
These improvements are empirically demonstrated in~\cref{sec:experiment}.
Our proposed VRLS objective is formulated as follows:

\begin{equation}\label{eq:f_wo_g}
{\boldsymbol{r}_{f^\star}} = \underset{\boldsymbol{r} \in \mathbb{R}_+^{m}}{\argmax}~ \E_{\text{te}}\left[\log (f_{\boldsymbol{\theta}^\star}(\boldsymbol{x})^\top \boldsymbol{r})\right],
\end{equation} where 
\begin{equation}\label{eq:f_g}
{\boldsymbol{\theta}^\star} = \underset{{\boldsymbol{\theta}}}{\argmin}~\E_{\text{tr}}\Bigl[\ell_{CE}\big(f_{{\boldsymbol{\theta}}}(\boldsymbol{x}),\boldsymbol{y}\big) + \zeta\Omega(f_{{\boldsymbol{\theta}}})\Bigr].
\end{equation}

The vector \( \boldsymbol{r} \) in~\Cref{eq:f_wo_g}, representing the density ratios for all \( m \) classes, belongs to the non-negative real space \( \mathbb{R}_{+}^m \). This constraint set is defined similarly to MLLS~\citep{garg2022OSLS}, and we use the expected value  \( \mathbb{E}_{\text{te}} \) for estimation, denoting the optimal density ratio as \( {\boldsymbol{r}_{f^\star}} \).
To train the predictor $\boldsymbol{\theta}$, we minimize the cross-entropy loss \( \ell_{CE} \) together with a scaled regularization term $\zeta\Omega(f_{{\boldsymbol{\theta}}})$, where $\zeta>0$ is a coefficient controlling the regularization strength.
Incorporating the regularizer $\Omega(f_{{\boldsymbol{\theta}}})$ improves the model calibration under the influence of \( \ell_{CE} \)  loss.

\section{VRLS for Multi-Node Environment}

We now extend VRLS to the multi-node environment, taking into account the privacy and communication requirements. 
This extension naturally aligns with the concept of IW-ERM, effectively integrating these considerations into the multi-node learning paradigm.
We consider multiple nodes where each node has distinct training and test distributions. The goal here is to train a global model that utilizes local data and addresses overall test error. In this setup, each node uses its local data to estimate the required density ratios, as outlined in~\cref{sec:ratio}, and shares only low-dimensional ratio information, without the need to share any local data.

The process begins with each node training a global model on its local data, independently estimating its density ratios. These locally computed ratios are then shared amongst the nodes, allowing for the aggregated ratio required for IW-ERM to be computed centrally. This aggregated ratio is then used to further refine the global model in a second round of global training. This approach ensures minimal communication overhead and preserves node data privacy, as detailed in \cref{sec:theory_guarantee}. Our experimental results in~\cref{sec:experiment} demonstrate that the IW-ERM framework significantly improves test error performance while minimizing communication and computation overhead compared to baseline ERM. The density ratio estimation and IW-ERM are described in~\cref{alg:IWERM_detail}.

\begin{algorithm}[t!]
\caption{IW-ERM with VRLS in Distributed Learning}
\label{alg:IWERM_detail}
\footnotesize
\begin{algorithmic}[1]
    \Require Labeled training data $\{(\boldsymbol{x}_{k,i}^{\text{tr}}, \boldsymbol{y}_{k,i}^{\text{tr}})\}_{i=1}^{n_k^{\text{tr}}}$ at each node $k$, for $k = [K]$.
    \Require Unlabeled test data $\{\boldsymbol{x}_{k,j}^{\text{te}}\}_{j=1}^{n_k^{\text{te}}}$ at each node $k$, for $k = [K]$.
    \Require Initial global model $h_{\boldsymbol{w}}$.
    \Ensure Trained global model $h_{\boldsymbol{w}}$ optimized with IW-ERM.
    \State \textbf{Phase 1: Density Ratio Estimation with VRLS}
    \For{\textbf{each node} $k = 1$ to $K$ \textbf{in parallel}}
        \State Train local predictor $f_{k, {\theta}}$ on local training data $\{(\boldsymbol{x}_{k,i}^{\text{tr}}, \boldsymbol{y}_{k,i}^{\text{tr}})\}$.
        \State Use $f_{k, {\theta}^{*}}$ to estimate the density ratio \( {\boldsymbol{r}}_{k, f^{*}} \) on unlabeled test data $\{\boldsymbol{x}_{k}^{\text{te}}\}$ at node $k$.
    \EndFor
    \State \textbf{Phase 2: Density Ratio Aggregation}
    \For{\textbf{each node} $k = 1$ to $K$}
            \State Aggregate density ratio  using~\Cref{eq:densityratio}.
    \EndFor
    \State \textbf{Phase 3: Global Model Training with IW-ERM}
    \State Train global model $h_{\boldsymbol{w}}$ using~\Cref{IWERM:gen;R} with the aggregated density ratios.
\end{algorithmic}
\end{algorithm}

To provide a more comprehensive understanding of the multi-node environment, the following discussion delves into its details.
Let $\mathcal{X}\subseteq \mathbb{R}^{d_0}$ be a compact metric space for input features, $\mathcal{Y}$ be a discrete label space with $|\mathcal{Y}|=m$, and $K$ be the number of nodes in an multi-node setting.\footnote{Sets and scalars are represented by calligraphic and standard fonts, respectively. We use $[m]$ to denote $\{1,\ldots,m\}$ for an integer $m$. We use $\lesssim$ to ignore terms up to constants and logarithmic factors. We use $\E[\cdot]$ to denote the expectation and $\|\cdot\|$ to represent the Euclidean norm of a vector. We use lower-case bold font to denote vectors.} 
Let $\mathcal{S}_k=\{(\boldsymbol{x}_{k,i}^{\text{tr}},{\boldsymbol{y}}_{k,i}^{\text{tr}})\}_{i=1}^{n_k^{\text{tr}}}$ denote the training set of node $k$ with $n_k^{\text{tr}}$ samples drawn i.i.d. from a probability distribution $p_k^{\text{tr}}$ on $\mathcal{X} \times \mathcal{Y}$.
The test data of node $k$ is drawn from another probability distribution $p_k^{\text{te}}$ on $\mathcal{X} \times \mathcal{Y}$. We assume that the class-conditional distribution $p_k^{\text{te}}(\boldsymbol{x}|\boldsymbol{y})=p_k^{\text{tr}}(\boldsymbol{x}|\boldsymbol{y}) := p(\boldsymbol{x}|\boldsymbol{y})$ remains the same for all nodes $k$. This is a common assumption and holds when label shifts primarily affect labels' prior distribution of the labels $p(\boldsymbol{y})$ rather than the underlying feature distribution given the labels, e.g., when features that are generated given a label remains constant~\citep{zadrozny2004learning,huang2006correcting,sugiyama2007covariate}. 
Note that $p_k^{\text{tr}}(\boldsymbol{y})$ and $p_k^{\text{te}}(\boldsymbol{y})$ can be arbitrarily different, which gives rise to intra- and inter-node \emph{label shifts}~\citep{zadrozny2004learning,huang2006correcting,sugiyama2007covariate,rls}.

In this multi-node environment, the aim is to find an unbiased estimate of the overall \emph{true risk} minimizer across multiple nodes under both intra-node and inter-node \emph{label shifts}. Specifically, we aim to find a hypothesis $h_{\boldsymbol{w}}\in\mathcal{H}: \mathcal{X} \rightarrow \mathcal{Y}$, represented by a neural network parameterized by ${\boldsymbol{w}}$,  such that $h_{\boldsymbol{w}}(\boldsymbol{x})$  provides a good approximation of the label $\boldsymbol{y} \in \mathcal{Y}$ corresponding to a new sample $\boldsymbol{x} \in \mathcal{X}$ drawn from the aggregated \emph{test} data.
Let $\ell:\mathcal{X} \times \mathcal{Y} \rightarrow \mathbb{R}_+$ denote a loss function. Node $k$ aims to learn a hypothesis $h_{\boldsymbol{w}}$ that minimizes its true (expected) risk:
\begin{equation}\tag{Local Risk}
    R_k(h_{\boldsymbol{w}}) = \mathbb{E}_{(\boldsymbol{x},\boldsymbol{y})\sim p_k^{\text{te}}(\boldsymbol{x},\boldsymbol{y})}[\ell(h_{\boldsymbol{w}}(\boldsymbol{x}),\boldsymbol{y})].
\end{equation}
We now modify the classical ERM and formulate IW-ERM to find a predictor that minimizes the overall true risk over all nodes under label shifts:
\begin{equation}\label{IWERM:gen;R}\tag{IW-ERM}
\min_{h_{\boldsymbol{w}} \in \mathcal{H}} \sum_{k=1}^K \frac{1}{n_k^{\text{tr}}}\sum_{i=1}^{n_k^{\text{tr}}} \frac{\sum_{j=1}^K p_j^{\text{te}}(\boldsymbol{y}_{k,i}^{\text{tr}})}{p_k^{\text{tr}}(\boldsymbol{y}_{k, i}^{\text{tr}})}\ell(h_{\boldsymbol{w}}(\boldsymbol{x}_{k,i}^{\text{tr}}),{\boldsymbol{y}}_{k,i}^{\text{tr}}),
\end{equation}
where $n_k^{\text{tr}}$ is the number of training samples at node $k$.

To incorporate our VRLS density ratio estimation method into the IW-ERM framework, we replace the ratio term $\frac{\sum_{j=1}^K p_j^{\text{te}}(\boldsymbol{y}_{k,i}^{\text{tr}})}{p_k^{\text{tr}}(\boldsymbol{y}_{k,i}^{\text{tr}})}$ with our estimated density ratios. 
This modification aims to align the empirical risk minimization with the true risk minimization over all nodes. We formalize the convergence of this approach in~\Cref{Prop:IW-ERM}.
\begin{proposition}\label{Prop:IW-ERM} 
Under the label shift setting described in \cref{sec:intro},~\eqref{IWERM:gen;R} is consistent and the learned function $h_{\boldsymbol{w}}$ converges in probability towards the optimal function that minimizes the overall \emph{true risk} across nodes, $\sum_{k=1}^K R_k$.
\end{proposition}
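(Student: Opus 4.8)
The plan is to split the claim into two independent pieces: an \emph{unbiasedness} (importance-weighting) identity showing that the population version of \eqref{IWERM:gen;R} has the same minimizer as the overall true risk $\sum_{k=1}^K R_k$, and a \emph{uniform law of large numbers} showing that the empirical objective converges to this population version uniformly over $\mathcal{H}$, so that the empirical minimizer converges in probability to the population minimizer. Together these give consistency and convergence in probability to the overall true risk minimizer.

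First I would establish the importance-weighting identity node by node. Fix a node $k$ and a hypothesis $h_{\boldsymbol{w}}$. Samples at node $k$ are drawn i.i.d.\ from $p_k^{\text{tr}}$, and the label shift assumption gives $p_k^{\text{tr}}(\boldsymbol{x}\mid\boldsymbol{y})=p(\boldsymbol{x}\mid\boldsymbol{y})$, so the weight $\sum_{j}p_j^{\text{te}}(\boldsymbol{y})/p_k^{\text{tr}}(\boldsymbol{y})$ converts an expectation under $p_k^{\text{tr}}$ into one against the aggregated test measure $\sum_j p_j^{\text{te}}(\boldsymbol{y})\,p(\boldsymbol{x}\mid\boldsymbol{y})$. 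Summing over $\boldsymbol{y}$ the factor $p_k^{\text{tr}}(\boldsymbol{y})$ cancels, yielding
\[
\mathbb{E}_{(\boldsymbol{x},\boldsymbol{y})\sim p_k^{\text{tr}}}\left[\frac{\sum_{j=1}^K p_j^{\text{te}}(\boldsymbol{y})}{p_k^{\text{tr}}(\boldsymbol{y})}\,\ell(h_{\boldsymbol{w}}(\boldsymbol{x}),\boldsymbol{y})\right]
= \sum_{\boldsymbol{y}\in\mathcal{Y}}\Bigl(\sum_{j=1}^K p_j^{\text{te}}(\boldsymbol{y})\Bigr)\,\mathbb{E}_{p(\boldsymbol{x}\mid\boldsymbol{y})}\bigl[\ell(h_{\boldsymbol{w}}(\boldsymbol{x}),\boldsymbol{y})\bigr]
= \sum_{j=1}^K R_j(h_{\boldsymbol{w}}).
\]
The right-hand side is independent of $k$, so summing the per-node population objectives over all $K$ nodes gives $K\sum_{j=1}^K R_j(h_{\boldsymbol{w}})$. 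The constant $K>0$ is independent of $\boldsymbol{w}$ and hence irrelevant to the $\argmin$, so the population IW-ERM objective and $\sum_{k=1}^K R_k$ share the same minimizer.

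Next I would upgrade pointwise convergence to convergence of minimizers. Writing $\hat{L}_n(\boldsymbol{w})$ for the empirical objective in \eqref{IWERM:gen;R} and $L(\boldsymbol{w})=K\sum_{j}R_j(h_{\boldsymbol{w}})$ for its population counterpart, each node's contribution is an empirical average of i.i.d.\ terms, so $\hat{L}_n(\boldsymbol{w})\to L(\boldsymbol{w})$ in probability for each fixed $\boldsymbol{w}$ as every $n_k^{\text{tr}}\to\infty$. Under standard regularity conditions (bounded loss with an integrable envelope, $\mathcal{H}$ of controlled complexity with a compact parameter domain, and a well-separated minimizer of $L$), a uniform law of large numbers gives $\sup_{\boldsymbol{w}}\lvert\hat{L}_n(\boldsymbol{w})-L(\boldsymbol{w})\rvert\to 0$ in probability. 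The argmax/M-estimation theorem then yields that the empirical minimizer converges in probability to $\argmin_{\boldsymbol{w}}L(\boldsymbol{w})$, which by the previous step is the overall true risk minimizer, establishing the proposition.

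The hard part will be the uniform law of large numbers in the presence of the importance weights: the reweighted loss carries a variance factor $\bigl(\sum_j p_j^{\text{te}}(\boldsymbol{y})/p_k^{\text{tr}}(\boldsymbol{y})\bigr)^2$ that blows up whenever some training prior $p_k^{\text{tr}}(\boldsymbol{y})$ is near zero. I would control this using the structural facts that $\mathcal{Y}$ is finite ($|\mathcal{Y}|=m$) and that every class prior is bounded away from $0$, which makes the weights uniformly bounded and the reweighted loss have a bounded envelope, so the ULLN applies on each node. A secondary subtlety is that in practice the true ratios are replaced by the VRLS estimates $\boldsymbol{r}_{k,f^\star}$; to cover this I would add a perturbation step using the finite-sample estimation-error guarantee established for VRLS to show that the plug-in objective differs from $\hat{L}_n$ by a term vanishing in probability uniformly over the bounded weight set, so that continuity of the $\argmin$ map preserves convergence of the minimizer.
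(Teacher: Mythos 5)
Your proposal is correct and follows essentially the same route as the paper: the change-of-measure identity for the weighted per-node expectation (using $p_k^{\text{tr}}(\boldsymbol{x}\mid\boldsymbol{y})=p(\boldsymbol{x}\mid\boldsymbol{y})$ so that $p_k^{\text{tr}}(\boldsymbol{y})$ cancels), followed by the law of large numbers, which is exactly the argument in Appendix~\ref{app:IWERM}. The only difference is that you spell out the uniform LLN/M-estimation step and the plug-in perturbation for estimated ratios, whereas the paper delegates these to the cited standard arguments of \citet{shimodaira2000improving} and \citet{sugiyama2007covariate} and states the proposition for the true ratios only.
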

\begin{proof}
Due to space limitations, the proof is provided in~\cref{app:IWERM}. Convergence in probability is established by applying the law of large numbers following ~\citep{shimodaira2000improving}[Section 3] and ~\citep{sugiyama2007covariate}[Section 2.2].
\end{proof}

\section{Ratio Estimation Bounds and Convergence Rates }\label{sec:theory_guarantee}

In this section, we present bounds on ratio estimation and convergence rates for the finite sample errors incurred during the estimation, as further discussed in Appendices \ref{app:thm:est}, \ref{app:conv}. In practice, we only have access to a finite number of labeled training samples, $\{(\boldsymbol{x}_i, \boldsymbol{y}_i)\}_{i=1}^{{n}^{\text{tr}}}$, and a finite number of unlabeled test samples, $\{\boldsymbol{x}_j\}_{j=1}^{{n}^{\text{te}}}$. These samples serve to compute the following estimates:\\ 
\[{\hat{\boldsymbol{\theta}}}_{{n}^{\text{tr}}} = \underset{\boldsymbol{\theta} \in \Theta}{\argmin} \frac{1}{{n}^{\text{tr}}}\sum_{i=1}^{{n}^{\text{tr}}}\bigg(\ell_{CE}(f_{\boldsymbol{\theta}}(\boldsymbol{x}_i), \boldsymbol{y}_i) + \zeta\Omega(f_{\boldsymbol{\theta}})\bigg),\] 
\[\text{and  } {\hat{\boldsymbol{r}}}_{{n}^{\text{te}}} = \underset{\boldsymbol{r} \in \mathbb{R}_{+}^m}{\argmax} \frac{1}{{n}^{\text{te}}}\sum_{j=1}^{{n}^{\text{te}}}\log (f_{{\hat{\boldsymbol{\theta}}}_{{n}^{\text{tr}}}}(\boldsymbol{x}_j)^\top \boldsymbol{r}).\]

We will show that the errors of these estimates can be controlled. The following assumptions are necessary to establish our results.

\begin{assumption}[Boundedness]\label{assumption:bounded}
The data and the parameter space $\Theta$ are bounded, i.e, there exists $b_\mathcal{X}, b_\Theta > 0$ such that
\[
\forall \boldsymbol{x} \in \mathcal{X},\; \|\boldsymbol{x}\|_2 \leq b_\mathcal{X} \quad \quad \text{and} \quad \quad \forall \boldsymbol{\theta} \in \Theta,\; \|\boldsymbol{\theta}\|_2 \leq b_\Theta.
\]
\end{assumption}
\begin{assumption}[Calibration]\label{assumption:calibration}
    Let $\boldsymbol{\theta}^\star$ be as defined in \Cref{eq:f_g}. There exists $\mu > 0$ such that
    \[
    \mathbb{E}\left[{f_{\boldsymbol{\theta}^\star}(\boldsymbol{x})f_{\boldsymbol{\theta}^\star}(\boldsymbol{x})^\top}\right] \succeq \mu \boldsymbol{I}_m.
    \]
\end{assumption}

The calibration~\cref{assumption:calibration} first appears in \citep{mlls}. 
It is necessary for the ratio estimation procedure to be consistent and we refer the reader to Section 4.3 of \citet{mlls} for more details. We further need~\cref{assumption:bounded} because, unlike \citep{mlls}, the empirical estimator $\hat{\boldsymbol{r}}_{{n}^{\text{te}}}$ is estimated using another estimator ${\hat{\boldsymbol{\theta}}}_{{n}^{\text{tr}}}$.
Uniform bounds are therefore needed to control finite sample error as we cannot directly apply concentration inequalities, as is done in the proof of \citep[Lemma 3]{mlls}, since we do not have independence of the terms appearing in the empirical sums. We nonetheless prove a similar result in the following theorem.

\begin{theorem}[Ratio Estimation Error Bound]
\label{thm:est}
    Let $\delta \in (0,1)$ and $\mathcal{F} := \{ \boldsymbol{x} \mapsto \boldsymbol{r}^\top f_{\boldsymbol{\theta}}(\boldsymbol{x}), \; (\boldsymbol{r}, \boldsymbol{\theta}) \in\mathcal{R}\times\Theta\}$. Under  Assumptions \ref{assumption:bounded}-\ref{assumption:calibration}, there exist constants $L>0, B>0$ such that with probability at least $1-\delta$:
    \begin{align}
    \| \hat{\boldsymbol{r}}_{{n}^{\text{te}}} - \boldsymbol{r}_{f^\star} \|_2 \leq \frac{2}{\mu p_{\min}}\Big( \frac{4}{\sqrt{{n}^{\text{te}}}} \text{Rad}(\mathcal{F}) + 4B\sqrt{\frac{\log(4/\delta)}{{n}^{\text{te}}}} \Big)+ \frac{4L}{\mu p_{\min}} \mathbb{E} \left[ {\|\boldsymbol{\theta} - \boldsymbol{\theta}^\star\|_2} \right].
    \end{align}
    Here, $p_{\min} = \min_{y} p(y)$ and
    \begin{align}
    \text{Rad}(\mathcal{F}) = \frac{1}{\sqrt{{n}^{\text{tr}}}} \mathbb{E}_{\sigma_1, \dots, \sigma}\left[ \sup_{(\boldsymbol{r}, \boldsymbol{\theta})\in\mathcal{R}\times\Theta} \left|\sum_{i=1}^{{n}^{\text{tr}}}\sigma_i \boldsymbol{r}^\top f_{{\boldsymbol{\theta}}}(\boldsymbol{x}_i)\right|\right],
    \end{align}
where $\sigma_1, \dots, \sigma$ are Rademacher variables uniformly chosen from $\{-1,1\}$.
\end{theorem}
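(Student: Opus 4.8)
The plan is to treat $\hat{\boldsymbol{r}}_{n^{\text{te}}}$ as an M-estimator for the concave population objective $G(\boldsymbol{r}) := \E_{\text{te}}[\log(f_{\boldsymbol{\theta}^\star}(\boldsymbol{x})^\top\boldsymbol{r})]$, whose maximizer is $\boldsymbol{r}_{f^\star}$, and to control its deviation from the empirical counterpart $\hat{G}(\boldsymbol{r}) := \frac{1}{n^{\text{te}}}\sum_j \log(f_{\hat{\boldsymbol{\theta}}_{n^{\text{tr}}}}(\boldsymbol{x}_j)^\top\boldsymbol{r})$, which is maximized by $\hat{\boldsymbol{r}}_{n^{\text{te}}}$. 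First I would establish that $G$ is strongly concave: its Hessian equals $-\E_{\text{te}}[f_{\boldsymbol{\theta}^\star}f_{\boldsymbol{\theta}^\star}^\top/(f_{\boldsymbol{\theta}^\star}^\top\boldsymbol{r})^2]$, and since $f_{\boldsymbol{\theta}^\star}$ is a probability vector and $\boldsymbol{r}$ ranges over a bounded set, the scalar $f_{\boldsymbol{\theta}^\star}^\top\boldsymbol{r}$ is bounded above, so $1/(f_{\boldsymbol{\theta}^\star}^\top\boldsymbol{r})^2$ is bounded below; combined with the calibration \Cref{assumption:calibration}, $\E[f_{\boldsymbol{\theta}^\star}f_{\boldsymbol{\theta}^\star}^\top]\succeq\mu\boldsymbol{I}_m$, this yields strong concavity and, after tracking the boundedness constants, the prefactor $1/(\mu p_{\min})$.

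Next, using that $\boldsymbol{r}_{f^\star}$ is an interior maximizer (each coordinate $p^{\text{te}}(\boldsymbol{y})/p^{\text{tr}}(\boldsymbol{y})>0$) so $\nabla G(\boldsymbol{r}_{f^\star})=0$, and that $\hat{\boldsymbol{r}}_{n^{\text{te}}}$ satisfies $\nabla\hat{G}(\hat{\boldsymbol{r}}_{n^{\text{te}}})=0$, strong concavity via a mean-value argument gives $\alpha\|\hat{\boldsymbol{r}}_{n^{\text{te}}}-\boldsymbol{r}_{f^\star}\|_2\le\|\nabla G(\hat{\boldsymbol{r}}_{n^{\text{te}}})-\nabla G(\boldsymbol{r}_{f^\star})\|_2=\|\nabla G(\hat{\boldsymbol{r}}_{n^{\text{te}}})-\nabla\hat{G}(\hat{\boldsymbol{r}}_{n^{\text{te}}})\|_2$, reducing the problem to bounding a gradient gap. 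I would then split this gap into a statistical-error term evaluated at the \emph{fixed} parameter $\boldsymbol{\theta}^\star$, namely $\E_{\text{te}}[f_{\boldsymbol{\theta}^\star}/(f_{\boldsymbol{\theta}^\star}^\top\hat{\boldsymbol{r}})]-\frac{1}{n^{\text{te}}}\sum_j f_{\boldsymbol{\theta}^\star}(\boldsymbol{x}_j)/(f_{\boldsymbol{\theta}^\star}(\boldsymbol{x}_j)^\top\hat{\boldsymbol{r}})$, and a parameter-perturbation term accounting for the replacement of $\boldsymbol{\theta}^\star$ by $\hat{\boldsymbol{\theta}}_{n^{\text{tr}}}$ inside the empirical sum.

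For the statistical term, because $\hat{\boldsymbol{r}}_{n^{\text{te}}}$ itself depends on the test sample I cannot apply a pointwise concentration inequality, so I would pass to the uniform deviation over $\mathcal{R}\times\Theta$ and control it by symmetrization, producing the empirical Rademacher complexity $\text{Rad}(\mathcal{F})$ of the class $\mathcal{F}=\{\boldsymbol{x}\mapsto\boldsymbol{r}^\top f_{\boldsymbol{\theta}}(\boldsymbol{x})\}$ defined in the statement (using that $\log$ and the map $u\mapsto 1/u$ are Lipschitz on the range of $f^\top\boldsymbol{r}$, which is bounded away from $0$ and $\infty$ by \Cref{assumption:bounded}, so a Ledoux--Talagrand contraction reduces to the complexity of $\mathcal{F}$), together with a bounded-differences concentration giving the $B\sqrt{\log(4/\delta)/n^{\text{te}}}$ tail. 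For the parameter-perturbation term I would invoke Lipschitz continuity of $\boldsymbol{\theta}\mapsto f_{\boldsymbol{\theta}}(\boldsymbol{x})/(f_{\boldsymbol{\theta}}(\boldsymbol{x})^\top\boldsymbol{r})$, which follows from smoothness of the network in $\boldsymbol{\theta}$ on the bounded domain together with the lower bound on $f^\top\boldsymbol{r}$, to bound it by $L\|\hat{\boldsymbol{\theta}}_{n^{\text{tr}}}-\boldsymbol{\theta}^\star\|_2$, yielding the $\E[\|\boldsymbol{\theta}-\boldsymbol{\theta}^\star\|_2]$ term after taking expectation over the training randomness. Dividing through by $\alpha$ and a union bound over the two $\delta$-events then assembles the claimed inequality.

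The hard part will be the statistical term: since both $\hat{\boldsymbol{r}}_{n^{\text{te}}}$ and $\hat{\boldsymbol{\theta}}_{n^{\text{tr}}}$ are data-dependent, the summands are not independent, so the argument must be made \emph{uniform} over $\mathcal{R}\times\Theta$ rather than pointwise---precisely the obstruction the authors flag as the reason \Cref{assumption:bounded} is needed beyond the assumptions of \citet{mlls}. The delicate sub-steps are verifying the uniform boundedness and Lipschitzness of the vector-valued gradient integrand, so that contraction legitimately produces the scalar complexity $\text{Rad}(\mathcal{F})$, and cleanly decoupling the $\boldsymbol{r}$-dependence from the $\boldsymbol{\theta}$-dependence so that the finite-sample error and the plug-in parameter error enter additively.
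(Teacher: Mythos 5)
Your overall strategy is the standard M-estimation template and shares the paper's two key ingredients (strong convexity of the population objective from \cref{assumption:calibration} plus the bound $\boldsymbol{r}^\top f_{\boldsymbol{\theta}}(\boldsymbol{x})\le 1/p_{\min}$, and a uniform deviation bound because $\hat{\boldsymbol{r}}_{n^{\text{te}}}$ and $\hat{\boldsymbol{\theta}}_{n^{\text{tr}}}$ are data-dependent). However, you route the argument through the \emph{score equation}, whereas the paper routes it through \emph{function values}: it uses strong convexity in the zeroth-order form $\|\hat{\boldsymbol{r}}_{n^{\text{te}}}-\boldsymbol{r}_{f^\star}\|_2^2\le \frac{2}{\mu p_{\min}}\bigl(\mathcal{L}_{\boldsymbol{\theta}^\star}(\hat{\boldsymbol{r}}_{n^{\text{te}}})-\mathcal{L}_{\boldsymbol{\theta}^\star}(\boldsymbol{r}_{f^\star})\bigr)$, then bounds the excess risk by two applications of the Lipschitz-in-$\boldsymbol{\theta}$ lemma (\cref{lem:lipschitz}), the basic inequality that $\hat{\boldsymbol{r}}_{n^{\text{te}}}$ optimizes the empirical objective, and two scalar uniform bounds (\cref{lem:rad1}, \cref{lem:rad2}) obtained from McDiarmid plus symmetrization applied to the \emph{scalar} class $H(\boldsymbol{r},\boldsymbol{\theta},\boldsymbol{x})=-\log(\boldsymbol{r}^\top f_{\boldsymbol{\theta}}(\boldsymbol{x}))$.

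Two concrete obstacles stand in the way of your version. First, $\nabla\hat{G}(\hat{\boldsymbol{r}}_{n^{\text{te}}})=0$ is not guaranteed: the maximization is over the constrained set $\mathbb{R}_+^m$ (with the normalization $\sum_c r_c\,p^{\text{tr}}(y=c)=1$), and the empirical maximizer can sit on the boundary even though $\boldsymbol{r}_{f^\star}$ is interior. This is repairable by replacing the stationarity conditions with the variational inequalities $\langle\nabla\hat{G}(\hat{\boldsymbol{r}}),\boldsymbol{r}_{f^\star}-\hat{\boldsymbol{r}}\rangle\le 0$ and $\langle\nabla G(\boldsymbol{r}_{f^\star}),\hat{\boldsymbol{r}}-\boldsymbol{r}_{f^\star}\rangle\le 0$, which still yield $\alpha\|\hat{\boldsymbol{r}}-\boldsymbol{r}_{f^\star}\|_2\le\|\nabla G(\hat{\boldsymbol{r}})-\nabla\hat{G}(\hat{\boldsymbol{r}})\|_2$, but you should say so. Second, and more substantively, the gradient $\nabla_{\boldsymbol{r}}\log(f_{\boldsymbol{\theta}}(\boldsymbol{x})^\top\boldsymbol{r})=f_{\boldsymbol{\theta}}(\boldsymbol{x})/(f_{\boldsymbol{\theta}}(\boldsymbol{x})^\top\boldsymbol{r})$ is $\mathbb{R}^m$-valued, so controlling $\sup_{\boldsymbol{r},\boldsymbol{\theta}}\|\E[\cdot]-\frac{1}{n^{\text{te}}}\sum_j(\cdot)\|_2$ requires a vector-valued symmetrization (e.g., a supremum over unit vectors $\boldsymbol{u}$ and a multivariate contraction), and the resulting complexity is that of the class $\{\boldsymbol{x}\mapsto \boldsymbol{u}^\top f_{\boldsymbol{\theta}}(\boldsymbol{x})/(\boldsymbol{r}^\top f_{\boldsymbol{\theta}}(\boldsymbol{x}))\}$, not the scalar $\text{Rad}(\mathcal{F})$ appearing in the statement; a scalar Ledoux--Talagrand contraction does not directly apply to this ratio of two linear functionals of $f_{\boldsymbol{\theta}}$. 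Your route can be completed, and it has the aesthetic advantage of producing a bound on $\|\hat{\boldsymbol{r}}-\boldsymbol{r}_{f^\star}\|_2$ to the first power directly, but it will not land on the stated constants and complexity term without this extra (and nontrivial) vector-contraction work --- which is precisely what the paper's function-value decomposition is designed to avoid by keeping every empirical process scalar.
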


\begin{proof}
The proof of~\cref{thm:est} is provided in~\cref{app:thm:est}. The Rademacher complexity appearing in the bound will depend on the function class chosen for $f$. Moreover as regularization often encourages lower complexity functions, this complexity can be reduced because of the presence of the regularization term in the estimation of $\boldsymbol{\theta}$ in our setting.
\end{proof}

By estimating the ratios locally and incorporating them into local losses, the properties of the modified loss with respect to neural network parameters $\boldsymbol{w}$ remain unchanged, with data-dependent parameters like Lipschitz constants scaled linearly by $r_{\max}$. Our approach trains the predictor using only local data, ensuring IW-ERM with VRLS retains the same privacy guarantees as baseline ERM-solvers. Communication involves only the marginal label distribution, adding negligible overhead, as it is far smaller than model parameters and requires just one round of communication. Overall, importance weighting does not impact communication guarantees during optimization.

\begin{theorem}
[Convergence-communication]\label{thm:conv} Let $\max_{\boldsymbol{y}\in\mathcal{Y}}\sup_f r_f(\boldsymbol{y})=r_{\max}$.  Suppose \cref{alg:IWERM_detail}, e.g., IW-ERM with VRLS for multi-node environment, is run for $T$ iterations. Then \cref{alg:IWERM_detail} achieves a convergence rate of $\mathcal{O}(r_{\max} h(T))$, where $\mathcal{O}(h(T))$ denotes the rate of ERM-solver baseline without importance weighting. Throughout the course of optimization,~\cref{alg:IWERM_detail} has the same overall communication guarantees as the baseline.
\end{theorem}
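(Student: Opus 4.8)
The plan is to split the statement into its two assertions — the $\mathcal{O}(r_{\max}h(T))$ convergence rate and the communication guarantee — and to prove each by reducing it to the corresponding property of the baseline ERM-solver. The unifying observation is that the objective in~\eqref{IWERM:gen;R} coincides with the baseline distributed ERM objective after reweighting each per-sample loss by its density ratio, and that by the definition $\max_{\boldsymbol{y}\in\mathcal{Y}}\sup_f r_f(\boldsymbol{y})=r_{\max}$ every such weight (including those built from the VRLS estimates) lies in $[0,r_{\max}]$.

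For the convergence claim I would first record how reweighting by a bounded factor propagates through the problem-dependent constants that enter the base solver's analysis. If the unweighted local loss $\ell(h_{\boldsymbol{w}}(\boldsymbol{x}),\boldsymbol{y})$ is $L$-smooth and $G$-Lipschitz in $\boldsymbol{w}$, then the reweighted summand $r\,\ell$ is $(rL)$-smooth and $(rG)$-Lipschitz, and averaging over samples and nodes preserves the uniform bound $r_{\max}$ on these constants. Next I would substitute the scaled constants into the standard nonconvex guarantee for the base solver, e.g.\ the bound on $\frac{1}{T}\sum_t \E\|\nabla F(\boldsymbol{w}_t)\|^2$ for SGD. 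Since the baseline rate $h(T)$ is a fixed function of these constants and of $T$, and each constant is inflated by at most $r_{\max}$, the leading term of the resulting bound is $\mathcal{O}(r_{\max}\,h(T))$, as claimed.

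For the communication claim I would walk through the three phases of~\cref{alg:IWERM_detail}. Phase~1 (local predictor training) and Phase~3 (global model training with IW-ERM) use the same update and synchronization pattern as baseline distributed ERM, so their per-round communication is identical. Phase~2 adds a single aggregation round in which each node transmits only its $m$-dimensional marginal label statistics used to form~\cref{eq:densityratio}. Since $|\mathcal{Y}|=m$ with $m$ far smaller than the model dimension $d=\dim(\boldsymbol{w})$, this contributes an additive $\mathcal{O}(Km)$ term dominated by the $\mathcal{O}(d)$ cost of even one model-synchronization round, so the overall communication matches the baseline up to a lower-order term.

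I expect the main obstacle to be the bookkeeping of how each problem-dependent constant scales under reweighting, and in particular reconciling the linear-in-$r_{\max}$ claim with the fact that second-moment and gradient-variance quantities can scale as $r_{\max}^2$. I would resolve this by bounding the reweighted stochastic-gradient second moment by $r_{\max}$ times the unweighted second moment — using $r^2 \le r_{\max}\,r$ together with non-negativity of the weights — and, where needed, absorbing the residual dependence into the definition of $h$ so that the $T$-dependence of the baseline rate is preserved while only a single factor of $r_{\max}$ is exposed in the leading term. A secondary point to verify is that using the \emph{estimated} ratios $\boldsymbol{r}_{k,f^\star}$ rather than the true ratios does not break these bounds; this follows because $r_{\max}$ is defined as a supremum over $f$ and therefore uniformly upper-bounds the estimated weights as well.
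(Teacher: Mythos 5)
Your proposal is correct and follows essentially the same route as the paper's own argument: importance weighting only rescales the data-dependent constants (Lipschitz, smoothness) of the local losses by at most $r_{\max}$ while preserving convexity/smoothness, so the baseline rate $h(T)$ is inherited up to a factor $r_{\max}$, and the only extra communication is a single round of $m$-dimensional marginal label statistics, which is negligible against model synchronization. Your additional care about second-moment quantities scaling as $r_{\max}^2$ and about the estimated versus true ratios goes slightly beyond what the paper writes, but does not change the argument.
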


In the following, we establish tight convergence rates and communication guarantees for IW-ERM with VRLS in a broad range of importance optimization settings including upper- and lower-bounds for convex optimization (Theorems \ref{app:convexsmooth}- \ref{app:second}), second-order differentiability, composite optimization with proximal operator (\cref{app:proxy}), optimization with adaptive step-sizes, and nonconvex optimization (Theorems \ref{app:PL}- \ref{app:adaptive}), along the lines of~e.g.,~\citep{woodworth2020local,haddadpour2021federated,glasgow2022sharp,liu2023high,Prox,AdaptiveFL,liu2023high}. 

\begin{assumption}[Convex and Smooth]
\label{assumption:convexsmooth} 1) A minimizer $\boldsymbol{w}^\star$ exists with bounded $\|\boldsymbol{w}^\star\|_2$; 2) The $\ell\circ h_{\boldsymbol{w}}$ is $\beta$-smoothness and convex w.r.t. $\boldsymbol{w}$; 3) The stochastic gradient $\boldsymbol{g}(\boldsymbol{w})=\widetilde\nabla_{\boldsymbol{w}}\ell(h_{\boldsymbol{w}})$ is unbiased, i.e., $\mathbb{E}[\boldsymbol{g}(\boldsymbol{w})]=\nabla_{\boldsymbol{w}}\ell(h_{\boldsymbol{w}})$ for any $\boldsymbol{w}\in\mathcal{W}$ with bounded variance  $\mathbb{E}[\|\boldsymbol{g}(\boldsymbol{w})-\nabla_{\boldsymbol{w}}\ell(h_{\boldsymbol{w}})\|_2^2]$.
\end{assumption}

For convex and smooth optimization, we establish convergence rates for IW-ERM with VRLS and local updating along the lines of~e.g.,~\citep[Theorem 2]{woodworth2020local}.

\begin{theorem}
[Upper Bound for Convex and Smooth]\label{app:convexsmooth} Let $D=\|\boldsymbol{w}_0-\boldsymbol{w}^\star\|$, $\tau$ denote the number of local steps (number of stochastic gradients per round of communication per node),  $R$ denote the number of communication rounds, and $\max_{\boldsymbol{y}\in\mathcal{Y}}\sup_f r_f(\boldsymbol{y})=r_{\max}$. Under~\cref{assumption:convexsmooth}, suppose~\cref{alg:IWERM_detail} with $\tau$ local updates is run for $T=\tau R$ total stochastic gradients per node with an optimally tuned and constant step-size. Then we have the following upper bound: 
\begin{align}
\E[\ell(h_{\boldsymbol{w}_T})-\ell(h_{\boldsymbol{w}^\star})]  \lesssim \frac{{r}_{\max} \beta D^2}{\tau R}+\frac{(r_{\max} \beta D^4)^{1/3}}{(\sqrt{\tau}R)^{2/3}} +\frac{D}{\sqrt{K\tau R}}.    
\end{align}
\end{theorem}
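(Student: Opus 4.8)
The plan is to recognize that IW-ERM with VRLS is nothing but local SGD applied to a \emph{reweighted} objective, and then to invoke the black-box convex–smooth local SGD guarantee of \citet[Theorem 2]{woodworth2020local} after re-deriving how the importance weights rescale the problem constants. Concretely, at each node $k$ the importance-weighted loss is $\tilde\ell_k(h_{\boldsymbol w}) = \E_{p_k^{\text{tr}}}[\,r_k(\boldsymbol y)\,\ell(h_{\boldsymbol w}(\boldsymbol x),\boldsymbol y)\,]$, where the per-sample weight $r_k(\boldsymbol y)$ is bounded above by $r_{\max}$. First I would record the three structural facts that the reweighting preserves or modifies: (i) convexity is preserved, since $\tilde\ell_k$ is a nonnegative-weighted expectation of the convex maps $\boldsymbol w\mapsto \ell(h_{\boldsymbol w}(\boldsymbol x),\boldsymbol y)$; (ii) smoothness degrades by exactly the weight bound, because each weighted term is $r_k(\boldsymbol y)\beta \le r_{\max}\beta$ gradient-Lipschitz, so by the triangle inequality $\tilde\ell_k$ is $r_{\max}\beta$-smooth; and (iii) the weighted stochastic gradient $r_k(\boldsymbol y_i)\,\boldsymbol g(\boldsymbol w)$ stays unbiased for $\nabla\tilde\ell_k$ and retains bounded variance. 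These are precisely the claims summarized in the remark preceding the theorem.

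With the reweighted problem satisfying \cref{assumption:convexsmooth} but with effective smoothness parameter $r_{\max}\beta$, the second step is to feed it into the existing local SGD analysis. The bound of \citet[Theorem 2]{woodworth2020local} for the homogeneous convex–smooth setting, run with $\tau$ local steps, $R$ rounds, $K$ nodes, and an optimally tuned constant step-size, has the schematic form
\begin{align}
\frac{\beta D^2}{\tau R} + \frac{(\beta D^4)^{1/3}}{(\sqrt{\tau}R)^{2/3}} + \frac{D}{\sqrt{K\tau R}},
\end{align}
where the gradient-variance factor is absorbed into the $\lesssim$ constant. Substituting the effective smoothness $\beta \mapsto r_{\max}\beta$ propagates $r_{\max}$ into the first (optimization) term linearly and into the second (client-drift) term as $r_{\max}^{1/3}$, while the last (statistical) term carries no $\beta$ and is therefore unchanged; this reproduces the three summands in the stated bound.

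The step I expect to require the most care is the variance bookkeeping in (iii): multiplying the stochastic gradient by $r_k(\boldsymbol y_i)$ can inflate its second moment by up to $r_{\max}^2$, so I would either keep the variance as an explicit $\mathcal{O}(1)$ quantity absorbed by $\lesssim$ (as the statement implicitly does) or, for a sharper claim, carry a factor $\sigma^2$ through Woodworth's bound and verify that under the convention $\sigma=\mathcal{O}(1)$ the $r_{\max}$ dependence collapses to exactly the displayed powers. A secondary point to check is that the optimal step-size tuning remains valid once $\beta$ is replaced by $r_{\max}\beta$; this is immediate because the tuning is a monotone function of the smoothness constant, so no new interaction with $\tau$, $R$, or $K$ is introduced. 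Finally, since VRLS trains each predictor on local data only and $r_{\max}=\max_{\boldsymbol y\in\mathcal{Y}}\sup_f r_f(\boldsymbol y)$ bounds the \emph{estimated} ratios as well, the argument applies verbatim to the estimated density ratios, closing the gap to \cref{alg:IWERM_detail}.
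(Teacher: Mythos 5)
Your proposal is correct and follows essentially the same route as the paper: the paper's own argument (in \cref{app:conv}) is precisely that absorbing the bounded ratios into the local losses preserves convexity and unbiasedness while scaling the smoothness constant by $r_{\max}$, after which the bound is read off from \citet[Theorem 2]{woodworth2020local} with $\beta \mapsto r_{\max}\beta$. Your explicit bookkeeping of where $r_{\max}$ lands in each of the three terms, and your flag about the up-to-$r_{\max}^2$ inflation of the gradient variance, are in fact more careful than the paper's one-line justification, which silently absorbs the variance into the $\lesssim$.
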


\begin{assumption}[Convex and Second-order Differentiable]
\label{assumption:second} 1) The $\ell(h_{\boldsymbol{w}}(\boldsymbol{x}),\boldsymbol{y})$ is $\beta$-smoothness and convex w.r.t. $\boldsymbol{w}$ for any $(\boldsymbol{x},y)$; 2) The stochastic gradient $\boldsymbol{g}(\boldsymbol{w})=\widetilde\nabla_{\boldsymbol{w}}\ell(h_{\boldsymbol{w}})$ is unbiased, i.e., $\E[\boldsymbol{g}(\boldsymbol{w})]=\nabla_{\boldsymbol{w}}\ell(h_{\boldsymbol{w}})$ for any $\boldsymbol{w}\in\mathcal{W}$ with bounded variance  $\E[\|\boldsymbol{g}(\boldsymbol{w})-\nabla_{\boldsymbol{w}}\ell(h_{\boldsymbol{w}})\|_2^2]$.
\end{assumption}

\begin{theorem}
[Lower Bound for Convex and Second-order Differentiable]\label{app:second} Let $D=\|\boldsymbol{w}_0-\boldsymbol{w}^\star\|$, $\tau$ denote the number of local steps,  $R$ denote the number of communication rounds, and $\max_{\boldsymbol{y}\in\mathcal{Y}}\sup_f r_f(\boldsymbol{y})=r_{\max}$. Under~\cref{assumption:second}, suppose~\cref{alg:IWERM_detail} with $\tau$ local updates is run for $T=\tau R$ total stochastic gradients per node with a tuned and constant step-size. Then we have the following lower bound: 
\begin{align}
\E[\ell(h_{\boldsymbol{w}_T})-\ell(h_{\boldsymbol{w}^\star})]
\gtrsim \frac{r_{\max}\beta D^2}{\tau R}+\frac{(r_{\max}\beta D^4)^{1/3}}{(\sqrt{\tau}R)^{2/3}} +\frac{D}{\sqrt{K\tau R}}.   
\end{align}
\end{theorem}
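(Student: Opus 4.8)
The plan is to establish the lower bound by reduction to the known minimax lower bound for local SGD on convex smooth objectives \citep{woodworth2020local}, combined with an adversarial label-shift construction that inflates the effective smoothness constant to $\Theta(r_{\max}\beta)$. Since a lower bound must be witnessed by explicit instances lying in the function class of \cref{assumption:second}, and must hold against \emph{any} algorithm of the local-update form in \cref{alg:IWERM_detail} (not just a favorably tuned run), I would exhibit three hard instances—one matching each additive term—and then use the elementary fact that a lower bound on $\max\{A,B,C\}$ is, up to a factor of $3$, a lower bound on $A+B+C$. All instances are taken to be convex quadratics in $\boldsymbol{w}$ (hence trivially second-order differentiable and $\beta$-smooth for the chosen curvature), with label distributions $p_k^{\text{tr}},p_k^{\text{te}}$ engineered so that the test-to-train density ratio $r_f(\boldsymbol{y})$ attains the prescribed maximum $r_{\max}$ on the support that carries the curvature.

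For the first two terms I would choose a low-dimensional quadratic loss and a label shift in which one label has ratio exactly $r_{\max}$ while the remainder are benign. Because importance weighting multiplies the per-sample loss by $r_f(\boldsymbol{y})$, the resulting IW-ERM population objective is a convex quadratic whose smoothness is $\Theta(r_{\max}\beta)$, exactly as already argued for the Lipschitz and smoothness constants in the discussion preceding \cref{thm:conv}. Substituting $\beta \mapsto r_{\max}\beta$ into the local-SGD lower bound then yields the gradient-budget-limited term $\frac{r_{\max}\beta D^2}{\tau R}$ and the local-update (consensus-drift) term $\frac{(r_{\max}\beta D^4)^{1/3}}{(\sqrt{\tau}R)^{2/3}}$, matching the upper bound of \cref{app:convexsmooth} term-by-term.

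The third term $\frac{D}{\sqrt{K\tau R}}$ is the statistical noise floor for stochastic convex optimization with $K\tau R = KT$ total stochastic-gradient queries aggregated across the $K$ nodes. Here I would use the standard construction that plants the optimum at an unknown sign and exposes it only through noisy gradients of the variance permitted by \cref{assumption:second}; a Le Cam / Fano argument forces error $\gtrsim \frac{\sigma D}{\sqrt{KT}}$, and with the variance normalized to a constant this is $\frac{D}{\sqrt{K\tau R}}$. Crucially this instance uses unit density ratios, so no $r_{\max}$ factor appears, consistent with the stated bound. Taking the maximum over the three instances and invoking the $\max \geq \tfrac{1}{3}(\cdot+\cdot+\cdot)$ step then produces the claimed additive lower bound in expectation.

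The hard part will be the middle, local-update-specific term: reproducing the $(\sqrt{\tau}R)^{-2/3}$ consensus-drift lower bound requires the delicate drift argument of \citep{woodworth2020local}, and one must verify that the label-shift reparameterization preserves exactly the heterogeneity-and-noise structure that forces this term while keeping the objective inside the convex, $\beta$-smooth, second-order-differentiable class of \cref{assumption:second}. A secondary subtlety is the constant bookkeeping needed so that the lower bound aligns with the upper bound of \cref{app:convexsmooth}, thereby certifying tightness up to constants and logarithmic factors.
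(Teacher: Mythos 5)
Your proposal matches the paper's approach: the paper's entire argument for this lower bound is the observation that absorbing the importance weights into the local losses scales the data-dependent smoothness and Lipschitz constants linearly by $r_{\max}$, after which the bound is inherited from the known local-SGD lower bound for convex, second-order differentiable objectives (the paper invokes Theorem 3.1 of Glasgow et al.\ rather than Woodworth et al., but these are the same family of hard-instance constructions). If anything, your plan is more explicit than the paper's, which never actually constructs the three hard instances, the label-shift realizing $r_{\max}$, or the $\max \geq \tfrac{1}{3}(A+B+C)$ step you describe.
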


We finally establish high-probability convergence bounds for IW-ERM with VRLS along the lines of~e.g.,~\citep[Theorem 4.1]{liu2023high}. To show the impact of importance weighting on convergence rate decoupled from the impact of number of nodes and obtain the current SotA {\it high-probability} bounds for nonconvex optimization, we focus on IW-ERM with $K=1$. 
\begin{assumption}[Sub-Gaussian Noise]
\label{assumption:noise} 1) A minimizer $\boldsymbol{w}^\star$ exists; 2) The stochastic gradients $\boldsymbol{g}(\boldsymbol{w})=\widetilde\nabla_{\boldsymbol{w}}\ell(h_{\boldsymbol{w}})$ is unbiased, i.e., $\E[\boldsymbol{g}(\boldsymbol{w})]=\nabla_{\boldsymbol{w}}\ell(h_{\boldsymbol{w}})$ for any $\boldsymbol{w}\in\mathcal{W}$; 3) The noise $\|\boldsymbol{g}(\boldsymbol{w})-\nabla_{\boldsymbol{w}}\ell(h_{\boldsymbol{w}})\|_2$ is $\sigma$-sub-Gaussian ~\citep{vershynin2018high}.

\end{assumption}

\begin{theorem}[High-probability Bound for Nonconvex Optimization]\label{app:convprob} Let $\delta \in (0,1)$ and $T\in\mathbb{Z}_+$. Let $K=1$ and $\max_{\boldsymbol{y}\in\mathcal{Y}}\sup_f r_f(\boldsymbol{y})=r_{\max}$.
Under~\cref{assumption:noise} and $\beta$-smoothness of {\it nonconvex} $\ell\circ h_{\boldsymbol{w}}$, suppose IW-ERM is run for $T$ iterations with a step-size $\min\Big\{\frac{1}{r_{\max}\beta},\sqrt{\frac{1}{\sigma^2r_{\max}\beta T}}\Big\}$. Then with probability $1-\delta$, gradient norm squareds satisfy: 
\begin{align}
\frac{1}{T}\sum_{t=1}^T\|\nabla_{\boldsymbol{w}}\ell(h_{\boldsymbol{w}_t})\|_2^2=O\Big(\sigma\sqrt{\frac{r_{\max}\beta}{T}}+\frac{\sigma^2\log(1/\delta)}{T}\Big).  
\end{align}
\end{theorem}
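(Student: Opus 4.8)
The plan is to reduce the importance-weighted problem to a standard nonconvex SGD analysis with an \emph{inflated} smoothness constant, and then run a high-probability descent-and-telescope argument in the spirit of \citep[Theorem 4.1]{liu2023high}. The key structural observation is that the importance weights enter the analysis \emph{only} through smoothness: since each weighted per-example term $r_f(\boldsymbol{y})\,(\ell\circ h_{\boldsymbol{w}})$ is a nonnegative multiple, bounded by $r_{\max}=\max_{\boldsymbol{y}\in\mathcal{Y}}\sup_f r_f(\boldsymbol{y})$, of a $\beta$-smooth function, the IW objective is $r_{\max}\beta$-smooth in $\boldsymbol{w}$. By \cref{assumption:noise}, the stochastic gradient of this objective is unbiased with $\sigma$-sub-Gaussian noise $\boldsymbol{\xi}_t := \boldsymbol{g}(\boldsymbol{w}_t)-\nabla_{\boldsymbol{w}}\ell(h_{\boldsymbol{w}_t})$. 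Thus the effective smoothness is $L=r_{\max}\beta$, which is precisely how $r_{\max}$ surfaces in both the step-size cap and the final rate.

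Next I would apply the smoothness descent lemma along the iterates $\boldsymbol{w}_{t+1}=\boldsymbol{w}_t-\eta\,\boldsymbol{g}(\boldsymbol{w}_t)$, substitute $\boldsymbol{g}(\boldsymbol{w}_t)=\nabla_{\boldsymbol{w}}\ell(h_{\boldsymbol{w}_t})+\boldsymbol{\xi}_t$, and use the step-size condition $\eta\le 1/L = 1/(r_{\max}\beta)$ to absorb the $\tfrac{L\eta^2}{2}\|\nabla_{\boldsymbol{w}}\ell(h_{\boldsymbol{w}_t})\|_2^2$ term into the $-\eta\|\nabla_{\boldsymbol{w}}\ell(h_{\boldsymbol{w}_t})\|_2^2$ term. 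This yields a per-step inequality of the form
\[
\tfrac{\eta}{2}\,\|\nabla_{\boldsymbol{w}}\ell(h_{\boldsymbol{w}_t})\|_2^2
\;\le\; \ell(h_{\boldsymbol{w}_t})-\ell(h_{\boldsymbol{w}_{t+1}})
\;-\;\eta\,\langle \nabla_{\boldsymbol{w}}\ell(h_{\boldsymbol{w}_t}),\,\boldsymbol{\xi}_t\rangle
\;+\;\tfrac{r_{\max}\beta\,\eta^2}{2}\,\|\boldsymbol{\xi}_t\|_2^2 ,
\]
where I have suppressed a harmless $(1-L\eta)\in[0,1]$ factor on the inner-product term. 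Summing over $t=1,\dots,T$ telescopes the function-value gaps into a quantity controlled by $\ell(h_{\boldsymbol{w}_1})-\ell(h_{\boldsymbol{w}^\star})$, which is finite since a minimizer exists.

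The hard part will be controlling the two stochastic terms with high probability rather than merely in expectation. The inner-product terms $\sum_t\langle\nabla_{\boldsymbol{w}}\ell(h_{\boldsymbol{w}_t}),\boldsymbol{\xi}_t\rangle$ form a martingale-difference sequence whose conditional sub-Gaussian proxy scales like $\sigma^2\|\nabla_{\boldsymbol{w}}\ell(h_{\boldsymbol{w}_t})\|_2^2$; because these gradient norms are not bounded a priori, a plain Azuma--Hoeffding bound is unavailable, so I would instead invoke a Freedman-type concentration for sub-Gaussian martingales \citep{vershynin2018high,liu2023high} to obtain a deviation proportional to $\sqrt{\log(1/\delta)\,\sum_t\|\nabla_{\boldsymbol{w}}\ell(h_{\boldsymbol{w}_t})\|_2^2}$. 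The decisive step is then a self-bounding (Young's inequality) argument that reabsorbs this data-dependent deviation back into the left-hand gradient-norm sum, leaving only a residual of order $\sigma^2\log(1/\delta)$. The magnitude terms $\sum_t\|\boldsymbol{\xi}_t\|_2^2$ concentrate around $T\sigma^2$ with a sub-exponential tail contributing the same $\sigma^2\log(1/\delta)$ scale. Finally, dividing by $\eta T/2$ and plugging in $\eta=\min\{1/(r_{\max}\beta),\sqrt{1/(\sigma^2 r_{\max}\beta T)}\}$ balances the deterministic $O(1/(\eta T))$ term against the $O(\eta\sigma^2)$ noise term, producing the leading $\sigma\sqrt{r_{\max}\beta/T}$ contribution, while the high-probability corrections collapse to $\sigma^2\log(1/\delta)/T$. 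I expect the sub-Gaussian martingale concentration together with the self-bounding absorption to be the only genuinely delicate step, everything else being the routine descent-and-telescope computation adapted from \citep{liu2023high} with $L=r_{\max}\beta$.
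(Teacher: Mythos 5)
Your proposal follows essentially the same route as the paper: reduce the importance-weighted objective to a standard nonconvex problem with smoothness constant inflated to $r_{\max}\beta$ (noting the ratios do not depend on $\boldsymbol{w}$), and then invoke the high-probability SGD analysis of \citet[Theorem 4.1]{liu2023high}. The only difference is that the paper cites that result as a black box, whereas you sketch its internals (descent lemma, telescoping, Freedman-type martingale concentration, and the self-bounding absorption), which is consistent with and faithful to the cited argument.
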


\begin{proof}
We note that density ratios do not depend on the model parameters $\boldsymbol{w}$ and the Lipschitz and smoothness constants for $\ell\circ h_{\boldsymbol{w}}$ w.r.t. $\boldsymbol{w}$ are scaled by $r_{\max}$. The rest of the proof follows the arguments of~\citep[Theorem 4.1]{liu2023high}.
\end{proof}

\cref{app:convprob} shows that when the stochastic gradients are too noisy $\sigma=\Omega(\sqrt{r_{\max}\beta}/\log(1/\delta))$ such that the second term in the rate dominates, then importance weighting does not have any negative impact on the convergence rate.

\section{Experiments}\label{sec:experiment}

The experiments are divided into two main parts: evaluating VRLS's performance on a single node focusing on intra-node label shifts, and extending it to multi-node distributed learning scenarios with 5, 100, and 200 nodes. In the multi-node cases, we account for both inter-node and intra-node label shifts. Further experimental details, results, and discussions are provided in~\cref{app:exp}.

\paragraph{Density ratio estimation.}  

We begin by evaluating VRLS on the MNIST~\citep{MNIST} and CIFAR-10~\citep{CIFAR10} datasets in a single-node setting. Following the common experimental setup in the literature  \citep{bbse}, we simulate the test dataset using a Dirichlet distribution with varying $\alpha$ parameters. In this context, a higher $\alpha$ value indicates smoother transitions in the label distribution, while lower values reflect more abrupt shifts. The training dataset is uniformly distributed across all classes.
Initially, using a sample size of 5,000, we investigate 20 $\alpha$ values within the range $[10^{-1}, 10^{1}]$. Next, we fix $\alpha$ at either 1.0 or 0.1 and explore 50 different sample sizes ranging from 200 to 10,000. For each experiment, we run 100 trials and compute the mean squared error (MSE) between the true ratios and the estimated ratios.
A two-layer MLP is used for MNIST, while ResNet-18~\citep{he2016deep} is applied for CIFAR-10.

\begin{figure}[!t]
    \centering
    \begin{minipage}{.24\textwidth}
        \centering
        \includegraphics[width=\linewidth]{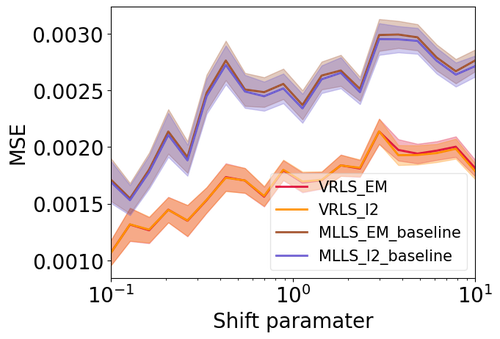}
        \caption*{MNIST} 
        \label{fig:MNIST_alpha_15_transparent}
    \end{minipage}%
    \begin{minipage}{.24\textwidth}
        \centering
        \includegraphics[width=\linewidth]{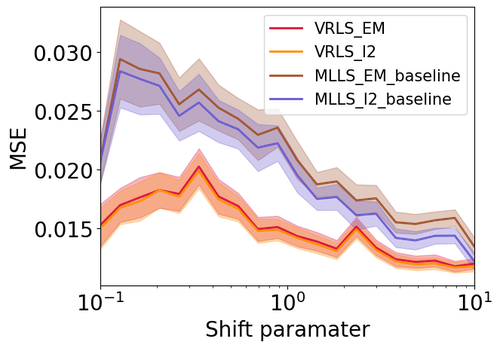}
        \caption*{CIFAR-10}
        \label{fig:alpha_cifar}
    \end{minipage}%
    \begin{minipage}{.24\textwidth}
        \centering
        \includegraphics[width=\linewidth]{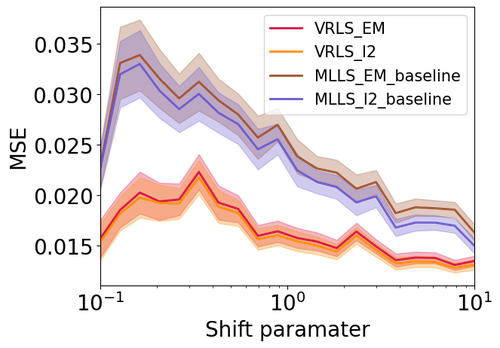}
        \caption*{CIFAR-10, Relaxed}
        \label{fig:alpha_cifar_relaxed}
    \end{minipage}%
    \begin{minipage}{.24\textwidth}
        \centering
        \includegraphics[width=\linewidth]{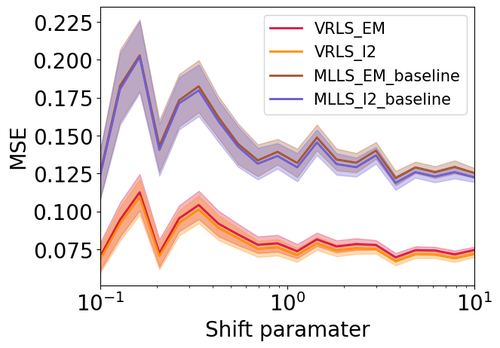}
        \caption*{CIFAR-10, Relax-m}
        \label{fig:alpha_cifar_relaxed_more}
    \end{minipage}
    \vspace{-0.5em}
    \caption{MSE analysis across different datasets and settings for VRLS (ours) compared to baselines, focusing on \textbf{shift parameter ($\alpha$)} experiments. These subfigures include results from MNIST, CIFAR-10, and relaxed label shift, illustrating the consistent superiority of VRLS. In the \textbf{‘relaxed’} setting, Gaussian blur (kernel size: 3; $\sigma$: 0.1–0.5) and brightness adjustment (factor: ±0.1) are applied with a 30\% probability to introduce real-world variability. In the \textbf{‘relax-m’} scenario, augmentations are applied with a 50\% probability, with Gaussian blur ($\sigma$: 0.1–0.7) and brightness (factor: ±0.2).}
    \label{figure_shift}
\end{figure}

\begin{figure}[!t]
    \centering
    \begin{minipage}{.24\textwidth}
        \centering
        \includegraphics[width=\linewidth]{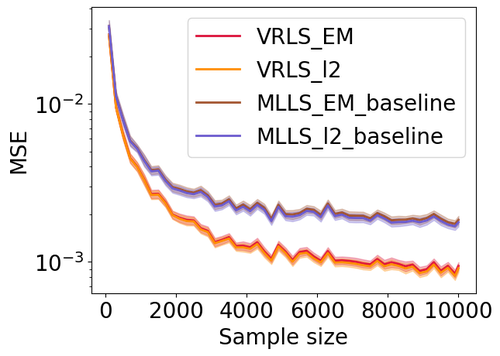}
        \caption*{MNIST}
        \label{fig:MNIST_size_20}
    \end{minipage}%
    \begin{minipage}{.24\textwidth}
        \centering
        \includegraphics[width=\linewidth]{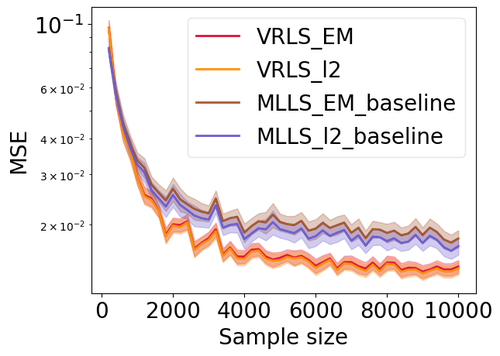}
        \caption*{CIFAR-10}
        \label{fig:size_cifar}
    \end{minipage}%
    \begin{minipage}{.24\textwidth}
        \centering
        \includegraphics[width=\linewidth]{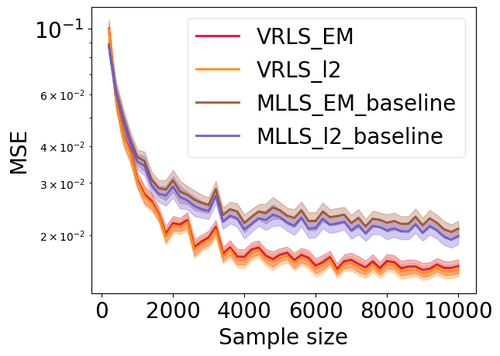}
        \caption*{CIFAR-10, Relaxed}
        \label{fig:size_cifar_relaxed}
    \end{minipage}%
    \begin{minipage}{.24\textwidth}
        \centering
        \includegraphics[width=\linewidth]{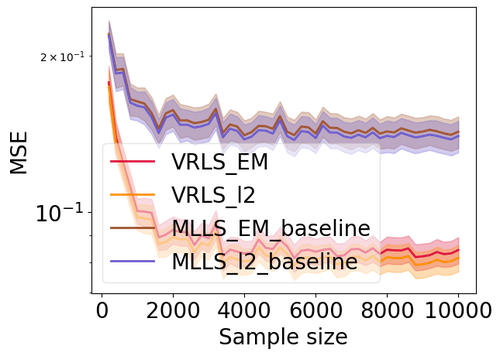}
        \caption*{CIFAR-10, Relax-m}
        \label{fig:size_cifar_relaxed_more}
    \end{minipage}
    \vspace{-0.5em}
    \caption{MSE analysis across different datasets and settings for VRLS (ours) compared to baselines, focusing on \textbf{sample size} experiments. These subfigures include results from MNIST, CIFAR-10, and relaxed label shift conditions, highlighting VRLS’s superior performance across varying test set sizes.}
    \label{figure_size}
\end{figure}

\begin{table}[!ht]
\begin{minipage}[!t]{0.5\linewidth}
    \centering
    \begin{tabular}{ll}
    \toprule
    Method & \text{Avg.accuracy} \\
    \midrule
    {\bf Our IW-ERM } & $\boldsymbol{0.7520}$ $\pm$ $\boldsymbol{0.0209}$ \\
    \midrule
     \text{Our IW-ERM (small)} & ${0.7376}$ $\pm$ ${0.0099}$\\
    \midrule
    FedAvg& 0.5472  $\pm$ 0.0297 \\
    \midrule
    FedBN & 0.5359 $\pm$ 0.0306 \\
    \midrule
    FedProx & 0.5606 $\pm$ 0.0070\\
    \midrule
    SCAFFOLD & 0.5774$ \pm$ 0.0036 \\
    \midrule \midrule
    Upper Bound & 0.8273 $\pm$ 0.0041\\
    \bottomrule
    \end{tabular}
\end{minipage}
\begin{minipage}[h]{0.5\linewidth}
    \captionof{table}{We utilize LeNet on Fashion MNIST to address label shifts across 5 nodes. For the baseline methods—FedAvg, FedBN, FedProx, and SCAFFOLD—we run 15,000 iterations, while both the Upper Bound (IW-ERM with true ratios) and our IW-ERM with VRLS are limited to 5,000 iterations. Notably, we employ a simple MLP with dropout for training the predictor. The model labeled {\it Our IW-ERM (small)} refers to our approach where the black-box predictor is trained using only 10\% of the available training data, balancing computational efficiency with competitive performance. }
    \label{fig:target-shift-fmnist-5}
\end{minipage}

\end{table}

\begin{table}[!ht]
\vspace{-1em}
\centering
\caption{We deploy ResNet-18 on CIFAR-10 to address label shifts across 5 nodes. The predictor is also a ResNet-18, ensuring consistency with the single-node scenario. For a fair comparison, we limit IW-ERM with VRLS and the true ratios to 5,000 iterations, while FedAvg and FedBN are run for 10,000 iterations. Detailed results are provided in \cref{app:fig:label-shift:cifar10:table}.}
\label{fig:target-shift-cifar10-5}
\begin{tabular}{lllll}
\toprule
{CIFAR-10} &              \textbf{Our IW-ERM} &   FedAvg & FedBN & Upper Bound \\
\midrule
{Avg. accuracy}  & $\boldsymbol{0.5640}$ $\pm$ $\boldsymbol{0.0241}$ &0.4515 $\pm$ 0.0148 & 0.4263 $\pm$ 0.0975 &   0.5790 $\pm$ 0.0103 \\
\bottomrule
\end{tabular}

\end{table}
\begin{table*}[!thb]
\centering
\caption{We present the average node accuracies from the CIFAR-10 target shift experiment conducted with 100 and 200 nodes, where 5 nodes are randomly sampled to participate in each training round. Our IW-ERM with VRLS is run for 5,000 and 10,000 iterations, respectively, while both FedAvg and FedBN are run for 10,000 iterations each.}
\label{app:fig:target-shift:cifar10:client100:results}
    \begin{tabular}{lllll}
    \toprule
    {{CIFAR-10}} &   \textbf{Our IW-ERM}  & FedAvg & FedBN \\
    \midrule
    {Avg. accuracy (100 nodes)} &  $\boldsymbol{0.5354}$    &   0.3915   &   0.1537          \\
    \midrule
    {Avg. accuracy (200 nodes)} &  $\boldsymbol{0.6216}$    &   0.5942   &   0.1753          \\
    \bottomrule
    \end{tabular}

\vspace{-2em}
\end{table*}

\cref{figure_shift} and \cref{figure_size} compares our proposed VRLS with baselines \citep{mlls, bbse_2002} under label shifts. MLLS\_L2 refers to the MLLS method using convex optimization via SGD \citep{mlls}, while MLLS\_EM employs the same objective function but is optimized using the EM algorithm \citep{bbse_2002}. 
Our proposed VRLS is optimized in a similar manner, resulting in VRLS\_L2 and VRLS\_EM, as shown in the figure. 
Our method consistently achieves lower MSE across different label shift intensities ($\alpha$) and test sample sizes on both datasets.
Notably, our density ratio estimation experiments align with the error bound in \cref{thm:est}, demonstrating that increasing the number of test samples improves estimation error at a rate proportional to the square root of the sample size. Additionally, the regularization term constrains the parameter space and reduces Rademacher complexity, leading to smoother predictions and improved model calibration, as supported by Section S2 in \citep{calibration_modern}. Both of them contribute to reduced estimation error. 

We also tested density ratio estimation under relaxed label shift conditions and found VRLS to exhibit greater robustness (see \cref{rlbg2} for detailed settings). Although this assumption holds broader potential for real-world applications, its precise alignment with real-world datasets requires further investigation—an important direction for future research that extends beyond the scope of this work.

\paragraph{Distributed learning settings.} We apply VRLS in a distributed learning context, addressing both intra- and inter-node label shifts. The initial experiments involve 5 nodes, using predefined label distributions on Fashion MNIST~\citep{f-mnist} and CIFAR-10, as shown in Tables \ref{app:fig:target-shift:fmnist:dist}- \ref{app:fig:target-shift:cifar10:dist} in~\cref{app:exp}. 

We employ a simple MLP with dropout as the predictor for Fashion MNIST. For global training with IW-ERM, LeNet~\citep{MNIST} is used on Fashion MNIST, and ResNet-18~\citep{ali2023federated} on CIFAR-10. All experiments are run with three random seeds, reporting the average accuracy across nodes. We compare IW-ERM with VRLS against baseline methods, including FedAvg~\citep{FedAvg}, FedBN~\citep{fedbn}, FedProx~\citep{fedprox}, and SCAFFOLD~\citep{SCAFFOLD}, as well as IW-ERM with true ratios serving as an upper bound. Hyperparameters are kept consistent with those in \citep{FedAvg, fedbn, ali2023federated}.

Each node's stochastic gradients are computed with a batch size of 64 and aggregated using the Adam optimizer. All experiments are run on a single GPU within an internal cluster. Both MLLS and VRLS use identical hyperparameters and training epochs for CIFAR-10 and Fashion MNIST, stopping once the classification loss reaches a predefined threshold on MNIST. We also conduct experiments with 100 and 200 nodes on CIFAR-10, where five nodes are randomly sampled each iteration to simulate more realistic distributed learning. In this case, IW-ERM with true ratios does not act as the upper bound due to the stochastic node sampling. The experiment is run once, and average accuracy across nodes is reported, with label distribution shown in \cref{app:fig:target-shift:cifar10:client100:dist} in~\Cref{app:exp}. Despite FedBN's reported slow convergence~\citep{ali2023federated}, we maintain 15,000 and 10,000 iterations for FedAvg and FedBN on Fashion MNIST and CIFAR-10, respectively, for fair comparison. However, IW-ERM is limited to 5,000 iterations using both true and estimated ratios due to faster convergence.

As shown in \cref{fig:target-shift-fmnist-5}, IW-ERM achieves over 20\% higher average accuracy than all baselines on Fashion MNIST, with only a third of the iterations. Notably, even with just 10\% of the training data in the first round of global training, the performance remains comparable, demonstrating reduced training complexity. This improvement is attributed to the theoretical benefits of IW-ERM, the robustness of density estimation, and the fact that the aggregation of density ratios reduces reliance on any single local estimate.
Similarly, \cref{fig:target-shift-cifar10-5} shows that IW-ERM approaches the upper bound on CIFAR-10, outperforming the baselines. Individual node accuracies are detailed in Tables \ref{app:fig:label-shift:fmnist:table}-\ref{app:fig:label-shift:cifar10:table} in~\Cref{app:exp}. In the 100-node scenario, IW-ERM continues to demonstrate superior performance, requiring only half the iterations, as shown in \cref{app:fig:target-shift:cifar10:client100:results}. It is important to note that using true ratios does not equate to IW-ERM, given the stochasticity of node selection during training. 

\section{Conclusions and Limitations}\label{sec:conc}

We propose VRLS to address label shift in distributed learning. Paired with IW-ERM, VRLS improves intra- and inter-node label shifts in multi-node settings. Empirically, VRLS consistently outperforms MLLS-based baselines, and IW-ERM with VRLS exceeds all multi-node learning baselines. Theoretical bounds further strengthen our method's foundation. Future work will explore estimating ratios by relaxing the strict class-conditional assumption and optimizing IW-ERM to reduce time complexity while ensuring scalability and practicality in real-world distributed learning.

\section*{Ethics statement}
No ethical approval was needed as no human subjects were involved. All authors fully support the content and findings.

\section*{Reproducibility statement}
We ensured reproducibility with publicly available datasets (MNIST, CIFAR-10) and standard models (e.g., ResNet-18). Links to datasets, code, and configurations will be provided upon camera-ready submission. Experiments were run on NVIDIA 3090, A100 GPUs, and Google Colab, with average results and variances reported across multiple trials. 

\section*{Acknowledgments}
The authors would like to thank Leello Tadesse Dadi and Thomas Pethick for helpful discussions. This work was supported by the Research Council of Norway (RCN) through its Centres of Excellence scheme, Integreat: Norwegian Centre for knowledge-driven machine learning, project number 332645.
The work of Changkyu Choi was funded by RCN under grant 309439.
The work of Volkan Cevher was supported by Hasler Foundation Program: Hasler Responsible AI (project number 21043), the Army Research Office which was accomplished under Grant Number W911NF-24-1-0048, and the Swiss National Science Foundation (SNSF) under grant number 200021\_205011.

\bibliography{main}
\bibliographystyle{main}

\newpage
\appendix
The Appendix part is organized as follows: 
\begin{itemize}
    \item All related work are provided in~\cref{app:relatedwork}.
    \item Additional details of prior work of BBSE and MLLS are in ~\cref{app:bbse_mlls_family}.
    \item Mathematical proof for label shifts with multiple nodes and IW-ERM is given in~\cref{app:IWERM}.
    \item General algorithmic description is in~\cref{app:algo}.
    \item Proof of~\cref{thm:est} is in~\cref{app:thm:est}.
    \item  Proof of~\cref{thm:conv} and Convergence-Communication-Privacy guarantees for IW-ERM in~\cref{IWERM:gen;R} are provided in~\cref{app:conv}.
    \item Complexity analysis is in~\cref{app:complex}.
    \item Mathematical notations are summarized in~\cref{app:mathlabel}.
    \item Limitations are discussed in~\cref{app:limitations}.
    \item Additional experiments and experimental details are provided in~\cref{app:exp}.
    
\end{itemize}
\newpage
\newpage
\section{Related work}\label{app:relatedwork}
In the context of distributed learning with label shifts, importance ratio estimation is tackled either by solving a linear system as in \citep{bbse, rlls} or by minimizing distribution divergence as in \citep{mlls}. In this section, we overview complete related work. 

\paragraph{Federated learning (FL).} Much of the current research in FL predominantly centers around the minimization of empirical risk, operating under the assumption that each node maintains the same training/test data distribution~\citep{FL}.~Prominent methods in FL include  FedAvg~\citep{FedAvg}, FedBN~\citep{fedbn}, FedProx~\citep{fedprox} and SCAFFOLD~\citep{SCAFFOLD}. FedAvg and its variants such as ~\citep{huang2021personalized, pmlr-v119-karimireddy20a} 
have been the subject of thorough investigation in optimization literature, exploring facets such as communication efficiency, node participation, and privacy assurance \citep{ALQ,NUQSGD,QGenX,ali2023federated}.~Subsequent work, such as the study by \citet{de2022mitigating}, explores Federated Domain Generalization and introduces data augmentation to the training. This model aims to generalize to both in-domain datasets from participating nodes and an out-of-domain dataset from a non-participating node. Additionally, \citet{gupta2022fl} introduces FL Games, a game-theoretic framework designed to learn causal features that remain invariant across nodes. This is achieved by employing ensembles over nodes' historical actions and enhancing local computation, under the assumption of consistent training/test data distribution across nodes. The existing strategies to address statistical heterogeneity across nodes during training primarily rely on heuristic-based personalization methods, which currently lack theoretical backing in statistical learning~\citep{FLMultiTask,Khodak,li2021ditto}.  In contrast, we aim to minimize overall test error amid both intra-node and inter-node distribution shifts, a situation frequently observed in real-world scenarios. Techniques ensuring communication efficiency, robustness, and secure aggregations serve as complementary.

\paragraph{Importance ratio estimation} 
Classical Empirical Risk Minimization (ERM) seeks to minimize the expected loss over the training distribution using finite samples. When faced with distribution shifts, the goal shifts to minimizing the expected loss over the target distribution, leading to the development of Importance-Weighted Empirical Risk Minimization (IW-ERM)\citep{shimodaira2000improving, sugiyama2006importance, byrd2019effect, fang2020rethinking}. \citet{shimodaira2000improving} established that the IW-ERM estimator is asymptotically unbiased. Moreover, \citet{ali2023federated} introduced FTW-ERM, which integrates density ratio estimation.

\paragraph{Label shift and MLLS family}
For theoretical analysis, the conditional distribution \( p(\boldsymbol{x}|\boldsymbol{y}) \) is held strictly constant across all distributions \citep{bbse, mlls, bbse_2002}. Both BBSE \citep{bbse} and RLLS \citep{rlls} designate a discrete latent space \( \boldsymbol{z} \) and introduce a confusion matrix-based estimation method to compute the ratio \( \boldsymbol{w} \) by solving a linear system \citep{bbse_2002, bbse}. This approach is straightforward and has been proven consistent, even when the predictor is not calibrated. However, its subpar performance is attributed to the information loss inherent in the confusion matrix \citep{mlls}.

Consequently, MLLS \citep{mlls} introduces a continuous latent space, resulting in a significant enhancement in estimation performance, especially when combined with a post-hoc calibration method \citep{bct}. It also provides a consistency guarantee with a canonically calibrated predictor. This EM-based MLLS method is both concave and can be solved efficiently.

\paragraph{Discrepancy Measure}
In information theory and statistics, discrepancy measures play a critical role in quantifying the differences between probability distributions. One such measure is the Bregman Divergence \citep{bregman}, defined as 
\[D_\phi(\boldsymbol{x} \| \boldsymbol{y}) = \phi(\boldsymbol{x}) - \phi(\boldsymbol{y}) - \langle \nabla \phi(\boldsymbol{y}), \boldsymbol{x} - \boldsymbol{y} \rangle,\] 
which encapsulates the difference between the value of a convex function \(\phi\) at two points and the value of the linear approximation of \(\phi\) at one point, leveraging the gradient at another point.

Discrepancy measures are generally categorized into two main families: Integral Probability Metrics (IPMs) and \(f\)-divergences. IPMs, including Maximum Mean Discrepancy \citep{MMD} and Wasserstein distance \citep{wasserstein}, focus on distribution differences \(P - Q\). In contrast, \(f\)-divergences, such as KL-divergence \citep{KL} and Total Variation distance, operate on ratios \({P}/{Q}\) and do not satisfy the triangular inequality. Interconnections and variations between these families are explored in studies like \((f, \Gamma)\)-Divergences \citep{article_f}, which interpolate between \(f\)-divergences and IPMs, and research outlining optimal bounds between them \citep{article_f2}. 

MLLS \citep{mlls} employs \( f \)-divergence, notably the KL divergence, which is not a metric as it doesn't satisfy the triangular inequality, and requires distribution \( P \) to be absolutely continuous with respect to \( Q \). Concerning IPMs, while MMD is reliant on a kernel function, it can suffer from the curse of dimensionality when faced with high-dimensional data. On the other hand, the Wasserstein distance can be reformulated using Kantorovich-Rubinstein duality \citep{rubinstein, wgan} as a maximization problem subject to a Lipschitz constrained function \( f: \mathbb{R}^d \rightarrow \mathbb{R} \). 

\newpage

\newpage
\section{BBSE and MLLS family}\label{app:bbse_mlls_family}
\label{IRforNOLS}

In this section, we summarize the contributions of BBSE \citep{bbse} and MLLS \citep{mlls}. Our objective is to estimate the ratio ${p^{\text{te}}(y)}/{p^{\text{tr}}(y)}$. We consider a scenario with $m$ possible label classes, where $y = c$ for $c \in [m]$. Let $\boldsymbol{r}^{\star} = [r^{\star}_{1}, \ldots, r^{\star}_{m}]^{\top}$ represent the true ratios, with each $r^{\star}_{c}$ defined as $r^{\star}_{c} = \frac{p^{\text{te}}(y = c)}{p^{\text{tr}}(y = c)}$ \citep{mlls}. We then define a family of distributions over $\mathcal{Z}$, parameterized by $\boldsymbol{r} = [r_1, \ldots, r_m]^{\top} \in \mathbb{R}^m$, where $r_c$ is the $c$-th element of the ratio vector.
\begin{align}
\begin{split}
p_{\boldsymbol{r}}(\boldsymbol{z}) := \sum_{c=1}^{m} {p^{\text{te}}(\boldsymbol{z}|y=c)} \cdot p^{\text{tr}}(y=c) \cdot r_c \label{p_w_z}
\end{split}
\end{align}
Here, ${r_{c} \geq 0}$ for $c \in [m]$ and $ \sum_{c=1}^{m} r_c \cdot p^{\text{tr}}(y=c) = \sum_{c=1}^{m} p^{\text{te}}(y=c)= 1 $ as constraints. When $\boldsymbol{r} = \boldsymbol{r}^{\star}$, e.g., $r_c = r^{\star}_c$ for $c \in [m]$, we have $p_{\boldsymbol{r}}(\boldsymbol{z}) = p_{\boldsymbol{r}^{\star}}(\boldsymbol{z}) = p^{\text{te}}(\boldsymbol{z})$ \citep{mlls}. So our task is to find $\boldsymbol{r}$ such that
\begin{align}\label{IWpwz_estimate_w*}
\begin{split}
&\sum_{c=1}^{m} {p^{\text{te}}(\boldsymbol{z}|y=c)} \cdot p^{\text{tr}}(y=c) \cdot r_c \boldsymbol{x}\\
&= 
\sum_{c=1}^{m} {p^{\text{tr}}(\boldsymbol{z}, y=c)}\cdot r_c =
p^{\text{te}}(\boldsymbol{z})
\end{split}
\end{align}

\citet{bbse} introduced Black Box Shift Estimation (BBSE) to address this issue. With a pre-trained classifier \( f \) for the classification task, BBSE assumes that the latent space \(\mathcal{Z}\) is discrete and defines \( p(\boldsymbol{z}|\boldsymbol{x}) = \delta_{\argmax f(\boldsymbol{x})} \), where the output of \( f(\boldsymbol{x}) \) is a probability vector (or a simplex) over \( m \) classes. BBSE estimates \( p^{\text{te}}(\boldsymbol{z}|y) \) as a confusion matrix, using both the training and validation data. It calculates \( p^{\text{tr}}(y = c) \) from the training set and \( p^{\text{te}}(\boldsymbol{z}) \) from the test data. The problem then reduces to solving the following equation:

\begin{align}\label{IWpwz_general_equation}
\begin{split}
\boldsymbol{A} \boldsymbol{w} = \boldsymbol{B}
\end{split}
\end{align}
where $\lvert \mathcal{Z} \rvert = m$, ${\boldsymbol{A}} \in \mathbb{R}^{m\times m}$ with ${A}_{jc} = {p^{\text{te}}(z=j|y=c)} \cdot p^{\text{tr}}(y=c)$, and $\boldsymbol{B} \in \mathbb{R}^{m}$ with $B_{j} = p^{\text{te}}(z=j)$ for $c, j \in [m]$.  

The estimation of the confusion matrix in terms of $p^{\text{te}}(\boldsymbol{z}|y)$ leads to the loss of calibration information \citep{mlls}. Furthermore, when defining $\mathcal{Z}$ as a continuous latent space, the confusion matrix becomes intractable since $\boldsymbol{z}$ has infinitely many values. Therefore, MLLS directly minimizes the divergence between $p^{\text{te}}(\boldsymbol{z})$ and $p_{\boldsymbol{r}}(\boldsymbol{z})$, instead of solving the linear system in \cref{IWpwz_general_equation}.

Within the $f$-divergence family, MLLS seeks to find a weight vector $\boldsymbol{r}$ by minimizing the KL-divergence $\KL\left(p^{\text{te}}(\boldsymbol{z}), p_{\boldsymbol{r}}(\boldsymbol{z})\right)=\mathbb{E}_{\text{te}}\left[\log p^{\text{te}}(\boldsymbol{z}) / p_{\boldsymbol{r}}(\boldsymbol{z})\right]$, for $p_{\boldsymbol{r}}(\boldsymbol{z})$ defined in \cref{p_w_z}. Leveraging on the properties of the logarithm, this is equivalent to maximizing the $\log$-likelihood: $\boldsymbol{r}:=\argmax _{\boldsymbol{r} \in \R} \E_{\text{te}}\left[\log p_{\boldsymbol{r}}(\boldsymbol{z})\right]$. Expanding $p_{\boldsymbol{r}}(\boldsymbol{z})$, we have 
\begin{align}
\begin{split}
\mathbb{E}_{\text{te}}\left[\log p_{\boldsymbol{r}}(\boldsymbol{z})\right] &= \mathbb{E}_{\text{te}}\left[\log (\sum_{c=1}^m p^{\text{tr}}(\boldsymbol{z}, y=c) r_c)\right] \\
&= \mathbb{E}_{\text{te}}\left[\log (\sum_{c=1}^m p^{\text{tr}}(y=c \mid \boldsymbol{z}) r_c) + \log p^{\text{tr}}(\boldsymbol{z})\right]. 
\end{split}
\end{align}

Therefore the unified form of MLLS can be formulated as:
\begin{align}
\begin{split}
\boldsymbol{r}:=\underset{\boldsymbol{r} \in \R}{\argmax}~ \E_{\text{te}}\left[\log (\sum_{c=1}^m p^{\text{tr}}(y=c \mid \boldsymbol{z}) r_c)\right] .
\end{split}
\end{align}

This is a convex optimization problem and can be solved efficiently using methods such as EM, an analytic approach, and also iterative optimization methods like gradient descent with labeled training data and unlabeled test data. MLLS defines the $p(\boldsymbol{z}|\boldsymbol{x})$ as $\delta_{\boldsymbol{x}}$, plugs in the pre-defined $f$ to approximate $p^{\text{tr}}(y|\boldsymbol{x})$ and optimizes the following objective: 

\begin{align}
\begin{split}
\boldsymbol{r}_f:=\underset{\boldsymbol{r} \in \R}{\argmax } ~ \ell(\boldsymbol{r}, f):=\underset{\boldsymbol{r} \in \R}{\argmax }~ \E_{\text{te}}\left[\log (f(\boldsymbol{x})^T \boldsymbol{r})\right] .
\end{split}
\label{eq:wf_expanded}
\end{align}

With the Bias-Corrected Calibration (BCT) \citep{bct} strategy, they adjust the logits $\hat{f}(\boldsymbol{x})$ of $f(\boldsymbol{x})$ element-wise for each class, and the objective becomes:

\begin{align}
\begin{split}
\boldsymbol{r}_f := \underset{\boldsymbol{r} \in \R}{\argmax } ~ \ell(\boldsymbol{r}, f) := \underset{\boldsymbol{r} \in \R}{\argmax}~ \E_{\text{te}}\left[\log (g\circ \hat{f}(\boldsymbol{x}))^T \boldsymbol{r})\right],
\end{split}
\end{align}
where $g$ is a calibration function.

\newpage
\section{Proof of~\texorpdfstring{\cref{Prop:IW-ERM}}{reference}\label{app:IWERM}}

In the following, we consider four typical scenarios under various distribution shifts described in~\Cref{app:tab:scenario} and formulate their IW-ERM with a focus on minimizing $R_1$.

\subsection{No Intra-node Label Shift} 
\label{No-LS}
For simplicity, we assume that there are only 2 nodes, but our results can be extended to multiple nodes. 
This scenario assumes $p_k^{\text{tr}}(\boldsymbol{y})=p_k^{\text{te}}(\boldsymbol{y})$ for $k=1,2$, but $p_1^{\text{tr}}(\boldsymbol{y}) \neq p_2^{\text{tr}}(\boldsymbol{y})$.
Node 1 aims to learn $h_{\boldsymbol{w}}$ assuming $\frac{p_1^{\text{tr}}(\boldsymbol{y})}{p_2^{\text{tr}}(\boldsymbol{y})}$ is given. We consider the following IW-ERM that is consistent in minimizing $R_1$:

\begin{align}\label{IWERM:nocovar}
\begin{split}
\min_{h_{\boldsymbol{w}} \in \mathcal{H}} &\frac{1}{n_1^{\text{tr}}}\sum_{i=1}^{n_1^{\text{tr}}}\ell(h_{\boldsymbol{w}}(\boldsymbol{x}_{1,i}^{\text{tr}}),\boldsymbol{y}_{1,i}^{\text{tr}}) \\
&+\frac{1}{n_2^{\text{tr}}}\sum_{i=1}^{n_2^{\text{tr}}}\frac{p_1^{\text{tr}}(\boldsymbol{y}_{2,i}^{\text{tr}})}{p_2^{\text{tr}}(\boldsymbol{y}_{2,i}^{\text{tr}})}\ell(h_{\boldsymbol{w}}(\boldsymbol{x}_{2,i}^{\text{tr}}),\boldsymbol{y}_{2,i}^{\text{tr}}).
\end{split}
\end{align}

Here $\mathcal{H}$ is the hypothesis class of $h_{\boldsymbol{w}}$. This scenario is referred to as {\tt No-LS}.

\subsection{Label Shift only for Node 1} 
\label{LSonsingle}
Here we consider label shift only for node 1, i.e., $p_1^{\text{tr}}(\boldsymbol{y})\neq p_1^{\text{te}}(\boldsymbol{y})$ and $p_2^{\text{tr}}(\boldsymbol{y})= p_2^{\text{te}}(\boldsymbol{y})$.
We consider the following IW-ERM:
\begin{align}\label{IWERM:covar1}
\begin{split}
\min_{h_{\boldsymbol{w}} \in \mathcal{H}}&  \frac{1}{n_1^{\text{tr}}}\sum_{i=1}^{n_1^{\text{tr}}} \frac{p_1^{\text{te}}(\boldsymbol{y}_{1,i}^{\text{tr}})}{p_1^{\text{tr}}(\boldsymbol{y}_{1,i}^{\text{tr}})}\ell(h_{\boldsymbol{w}}(\boldsymbol{x}_{1,i}^{\text{tr}}),\boldsymbol{y}_{1,i}^{\text{tr}}) \\
&+\frac{1}{n_2^{\text{tr}}}\sum_{i=1}^{n_2^{\text{tr}}} \frac{p_1^{\text{te}}(\boldsymbol{y}_{2,i}^{\text{tr}})}{p_2^{\text{tr}}(\boldsymbol{y}_{2,i}^{\text{tr}})}\ell(h_{\boldsymbol{w}}(\boldsymbol{x}_{2,i}^{\text{tr}}),\boldsymbol{y}_{2,i}^{\text{tr}}).
\end{split}
\end{align}
This scenario is referred to as {\tt LS on single}.

\subsection{Label shift for both nodes} 
\label{LSonboth}
Here we assume $p_1^{\text{tr}}(\boldsymbol{y})\neq p_1^{\text{te}}(\boldsymbol{y})$ and $p_2^{\text{tr}}(\boldsymbol{y})\neq p_2^{\text{te}}(\boldsymbol{y})$, i.e., label shift for both nodes.
The corresponding IW-ERM is the same as Eq.~\Cref{IWERM:covar1}. This scenario is referred to as {\tt LS on both}.

Without loss of generality and for simplicity, we set $l = 1$. We consider four typical scenarios under various distribution shifts and formulate their IW-ERM with a focus on minimizing $R_1$. The details of these scenarios are summarized in \cref{app:tab:scenario}.

\subsection{Multiple Nodes} 
\label{LSonmulti}
Here we consider a general scenario with $K$ nodes. We assume both intra-node and inter-node label shifts by the following IW-ERM:
\begin{align}\label{IWERM:gen}
\min_{h_{\boldsymbol{w}} \in \mathcal{H}} \sum_{k=1}^K \frac{1}{n_k^{\text{tr}}}\sum_{i=1}^{n_k^{\text{tr}}} \frac{p_1^{\text{te}}(\boldsymbol{y}_{k,i}^{\text{tr}})}{p_k^{\text{tr}}(\boldsymbol{y}_{k,i}^{\text{tr}})}\ell(h_{\boldsymbol{w}}(\boldsymbol{x}_{k,i}^{\text{tr}}),\boldsymbol{y}_{k,i}^{\text{tr}}),
\end{align} 
This scenario is referred to as {\tt LS on multi}. 

For the scenario without intra-node label shift, the IW-ERM in~\cref{IWERM:nocovar} can be expressed as

\begin{align}\label{equ1} 
\begin{split}
\frac{1}{n_2^{\text{tr}}}\sum_{i=1}^{n_2^{\text{tr}}}& \frac{p_1^{\text{tr}}(\boldsymbol{y}_{2,i}^{\text{tr}})}{p_2^{\text{tr}}(\boldsymbol{y}_{2,i}^{\text{tr}})}\ell(h_{\boldsymbol{w}}(\boldsymbol{x}_{2,i}^{\text{tr}}),{\boldsymbol{y}}_{2,i}^{\text{tr}})\\
&\xrightarrow{n_2^{\text{tr}}\rightarrow \infty}
\E_{p_2^{\text{tr}}(\boldsymbol{x},\boldsymbol{y})} \left[\frac{p_1^{\text{tr}}(\boldsymbol{y})}{p_2^{\text{tr}}(\boldsymbol{y})}\ell(h_{\boldsymbol{w}}(\boldsymbol{x}),\boldsymbol{y}) \right] \\
&= \int_{\mathcal{Y}}\frac{p_1^{\text{tr}}(\boldsymbol{y})}{p_2^{\text{tr}}(y)}\E_{p(\boldsymbol{x}|\boldsymbol{y})}[\ell(h_{\boldsymbol{w}}(\boldsymbol{x}),\boldsymbol{y})]p_2^{\text{tr}}(\boldsymbol{y}){d}\boldsymbol{y})  \\
&= \int_{\mathcal{Y}}p_1^{\text{tr}}(\boldsymbol{y})\E_{p(\boldsymbol{x}|\boldsymbol{y})}[\ell(h_{\boldsymbol{w}}(\boldsymbol{x}),\boldsymbol{y})]{d}\boldsymbol{y}\\
&= \int_{\mathcal{Y}}p_1^{\text{te}}(\boldsymbol{y})\E_{p(\boldsymbol{x}|\boldsymbol{y})}[\ell(h_{\boldsymbol{w}}(\boldsymbol{x}),\boldsymbol{y})]{d}\boldsymbol{y}\\
&= \E_{p_1^{\text{te}}(\boldsymbol{x},\boldsymbol{y})} \left[\ell(h_{\boldsymbol{w}}(\boldsymbol{x}),\boldsymbol{y}) \right]\\
&= R_1(h_{\boldsymbol{w}}).
\end{split}
\end{align}

where the second equality holds due to the assumption of the label shift setting and Bayes' theorem: $p(\boldsymbol{x},\boldsymbol{y})=p(\boldsymbol{x}|\boldsymbol{y})\cdot p(\boldsymbol{y})$, and the fourth equality holds by the assumption that $p_1^{\text{tr}}(\boldsymbol{y})=p_1^{\text{te}}(\boldsymbol{y})$ in the No-LS setting.

For the scenario with label shift only for Node 1 or for both nodes, the IW-ERM in~\cref{IWERM:covar1} admits
\begin{align}
\frac{1}{n_2^{\text{tr}}}\sum_{i=1}^{n_2^{\text{tr}}}& \frac{p_1^{\text{te}}(\boldsymbol{y}_{2,i}^{\text{tr}})}{p_2^{\text{tr}}(\boldsymbol{y}_{2,i}^{\text{tr}})}\ell(h_{\boldsymbol{w}}(\boldsymbol{x}_{2,i}^{\text{tr}}),\boldsymbol{y}_{2,i}^{\text{tr}})\\
&\xrightarrow{n_2^{\text{tr}}\rightarrow \infty}
\E_{p_2^{\text{tr}}(\boldsymbol{x},\boldsymbol{y})} \left[\frac{p_1^{\text{te}}(\boldsymbol{y})}{p_2^{\text{tr}}(\boldsymbol{y})}\ell(h_{\boldsymbol{w}}(\boldsymbol{x}),\boldsymbol{y}) \right]\\
&= \int_{\mathcal{Y}}\frac{p_1^{\text{te}}(y)}{p_2^{\text{tr}}(y)}\E_{p(\boldsymbol{x}|\boldsymbol{y})}[\ell(h_{\boldsymbol{w}}(\boldsymbol{x}),\boldsymbol{y})]p_2^{\text{tr}}(\boldsymbol{y}){d}\boldsymbol{y}\\
&= \int_{\mathcal{Y}}p_1^{\text{te}}(y=\boldsymbol{y})\E_{p(\boldsymbol{x}|\boldsymbol{y})}[\ell(h_{\boldsymbol{w}}(\boldsymbol{x}), \boldsymbol{y})]{d}\boldsymbol{y}\\
&= \E_{p_1^{\text{te}}(\boldsymbol{x},\boldsymbol{y})} \left[\ell(h_{\boldsymbol{w}}(\boldsymbol{x}),\boldsymbol{y}) \right]\\
&= R_1(h_{\boldsymbol{w}}). 
\end{align}

For multiple nodes, let $k\in[K]$. Similarly, we have 
\begin{align}
\frac{1}{n_k^{\text{tr}}}\sum_{i=1}^{n_k^{\text{tr}}} \frac{p_1^{\text{te}}(\boldsymbol{y}_{k,i}^{\text{tr}})}{p_k^{\text{tr}}(\boldsymbol{y}_{k,i}^{\text{tr}})}\ell(h_{\boldsymbol{w}}(\boldsymbol{x}_{k,i}^{\text{tr}}),\boldsymbol{y}_{k,i}^{\text{tr}})&\xrightarrow{n_k^{\text{tr}}\rightarrow \infty}
 R_1(h_{\boldsymbol{w}}). 
\end{align}
Then we have 
\begin{align}
    \sum_{k=1}^K \frac{1}{n_k^{\text{tr}}}\sum_{i=1}^{n_k^{\text{tr}}} \frac{p_1^{\text{te}}(\boldsymbol{y}_{k,i}^{\text{tr}})}{p_k^{\text{tr}}(\boldsymbol{y}_{k,i}^{\text{tr}})}\ell(h_{\boldsymbol{w}}(\boldsymbol{x}_{k,i}^{\text{tr}}),\boldsymbol{y}_{k,i}^{\text{tr}})\xrightarrow{n_1^{\text{tr}},\ldots,n_K^{\text{tr}}\rightarrow \infty}
 R_1(h_{\boldsymbol{w}}).
\end{align}
Note that to solve~\cref{IWERM:gen}, node 1 needs to estimate $\frac{p_1^{\text{te}}(\boldsymbol{y})}{p_k^{\text{tr}}(\boldsymbol{y})}$ for all nodes $k$ in~\Cref{IWERM:gen}.

The consistency of~\cref{IWERM:gen;R}, i.e., convergence in probability, is  followed the standard arguments in e.g.,~\citep{shimodaira2000improving}[Section 3] and~\citep{sugiyama2007covariate}[Section 2.2] using the law of large numbers.

\newpage
\section{Algorithmic Description}\label{app:algo}

\lstdefinestyle{mystyle}{
    language=Python,
    basicstyle=\ttfamily\footnotesize,
    commentstyle=\color{olive},
    keywordstyle=\color{blue},
    numberstyle=\tiny\color{gray},
    numbers=left,
    stringstyle=\color{red},
    breakatwhitespace=false,
    breaklines=true,
    captionpos=b,
    keepspaces=true,
    showspaces=false,
    showstringspaces=false,
    showtabs=false,
    tabsize=3
}

\lstset{style=mystyle}

\begin{lstlisting}[language=Python, label=code, caption={Our VRLS in distributed learning. It is the implementation of \cref{alg:IWERM_detail}}]

# Split the training dataset on each node
trainsets = target_shift.split_dataset(trainset.data, trainset.targets, node_label_dist_train, transform=transform_train)

# Split the test dataset on each node
testsets = target_shift.split_dataset(testset.data, testset.targets, node_label_dist_test, transform=transform_test)

# Initialize K local models (nets) for each node
nets = [initialize_model() for _ in range(node_num)]

# Initialize the estimator for each local model
estimators = [LS_RatioModel(nets[k]) for k in range(node_num)]

# Initialize tensors to store the estimated ratios, values, and marginal values for each pair of nodes.
estimated_ratios = torch.zeros(node_num, node_num, nclass)
estimated_values = torch.zeros(node_num, node_num, nclass)
marginal_values = torch.zeros(node_num, nclass)

# Phase 1: Compute the estimated ratios for each node pair (k, j)
for k in range(node_num):
    for j in range(node_num):
        # Perform test on node k using node j's testset
        estimated_ratios[k, j] = estimators[k](testsets[j].data.cpu().numpy())

# Phase 2: Compute the marginal values on each node's training set
for i, trainset in enumerate(trainsets):
    marginal_values[i] = marginal(trainset.targets)

# Phase 3: Compute the final estimated values for each node
for k in range(node_num):
    for j in range(node_num):
        estimated_values[k, j] = marginal_values[j] * estimated_ratios[k, j]

# Aggregate the estimated values across nodes
aggregated_values = torch.sum(estimated_values, dim=1)

# Compute the final ratios for each node
ratios = (aggregated_values / marginal_values).to(args.device)

\end{lstlisting}

\newpage
\section{Proof of~\texorpdfstring{\cref{thm:est}}{Theorem Reference}\label{app:thm:est}}
\begin{proof}
    Let $H(\boldsymbol{r}, \boldsymbol{\theta}, \boldsymbol{x}) = -\log(f(\boldsymbol{x}, \boldsymbol{\theta})^\top \boldsymbol{r})$. From the strong convexity in \cref{lem:popcvx}, we have that
    \begin{align}
    \label{eq:wbound}
    \| \hat{\boldsymbol{r}}_{{n}^{\text{te}}} - \boldsymbol{r}_{f^\star} \|_2^2 \leq \frac{2}{\mu p_{\min}}\left( \mathcal{L}_{\boldsymbol{\theta}^\star}(\hat{\boldsymbol{r}}_{{n}^{\text{te}}}) - \mathcal{L}_{\boldsymbol{\theta}^\star}(\boldsymbol{r}_{f^\star}) \right)
    \end{align}

    Now focusing on the term on the right-hand side, we find by invoking \cref{lem:lipschitz} that
    \begin{align}
    &\mathcal{L}_{\boldsymbol{\theta}^\star}(\hat{\boldsymbol{r}}_{{n}^{\text{te}}}) - \mathcal{L}_{\boldsymbol{\theta}^\star}(\boldsymbol{r}_{f^\star}) \nonumber \\
    &\leq \E\bigg[ H(\hat{\boldsymbol{r}}_{{n}^{\text{te}}}, \hat{\boldsymbol{\theta}}_{{n}^{\text{tr}}}, \boldsymbol{x}) \bigg] 
    - \E\bigg[ H(\boldsymbol{r}_{f^\star}, \hat{\boldsymbol{\theta}}_{{n}^{\text{tr}}}, \boldsymbol{x}) \bigg]
    + 2L \E\bigg[\|\hat{\boldsymbol{\theta}}_{{n}^{\text{tr}}} - \boldsymbol{\theta}^\star\|_2 \bigg] \nonumber\\ 
    &= \E\bigg[H(\hat{\boldsymbol{r}}_{{n}^{\text{te}}}, \hat{\boldsymbol{\theta}}_{{n}^{\text{tr}}}, x) \bigg] 
    - \frac{1}{{n}^{\text{te}}}\sum_{j=1}^{{n}^{\text{te}}}H(\hat{\boldsymbol{r}}_{{n}^{\text{te}}}, \hat{\boldsymbol{\theta}}_{{{n}^{\text{tr}}}}, \boldsymbol{x}_j)
    + \frac{1}{{n}^{\text{te}}}\sum_{j=1}^{{n}^{\text{te}}}H(\hat{\boldsymbol{r}}_{n}, \hat{\boldsymbol{\theta}}_{{n}^{\text{tr}}}, \boldsymbol{x}_j) \nonumber\\ 
     &\quad\quad\quad\quad\quad\quad\quad\quad\quad -\E\bigg[ H(\boldsymbol{r}_{f^\star}, \hat{\boldsymbol{\theta}}_{{n}^{\text{tr}}}, \boldsymbol{x}) \bigg]
    + 2L \E\bigg[\|\hat{\boldsymbol{\theta}}_{{n}^{\text{tr}}} - \boldsymbol{\theta}^\star\|_2 \bigg] \nonumber\\
    &\leq \E\bigg[H(\hat{\boldsymbol{r}}_{{n}^{\text{te}}}, \hat{\boldsymbol{\theta}}_{{n}^{\text{tr}}}, \boldsymbol{x})\bigg] 
    - \frac{1}{{n}^{\text{te}}}\sum_{j=1}^{{n}^{\text{te}}}H(\hat{\boldsymbol{r}}_{{n}^{\text{te}}}, \hat{\boldsymbol{\theta}}_{{n}^{\text{tr}}}, \boldsymbol{x}_j) 
    + \frac{1}{{n}^{\text{te}}}\sum_{j=1}^{{n}^{\text{te}}}H(\boldsymbol{r}_{f^\star}, \hat{\boldsymbol{\theta}}_{{n}^{\text{tr}}}, \boldsymbol{x}_j) \nonumber\\ 
    &\quad\quad\quad\quad\quad\quad\quad\quad\quad -\E\bigg[ H(\boldsymbol{r}_{f^\star}, \hat{\boldsymbol{\theta}}_{{n}^{\text{tr}}}, \boldsymbol{x}) \bigg]
    + 2L \E\bigg[\|\hat{\boldsymbol{\theta}}_{{n}^{\text{tr}}} - \boldsymbol{\theta}^\star\|_2 \bigg], \nonumber\\
    \end{align}
    where in the last inequality we used the fact that $\hat{\boldsymbol{r}}_{n}$ is a minimizer of $\boldsymbol{r} \mapsto \frac{1}{n}\sum_{j=1}^{n}H(\boldsymbol{r}, \hat{\boldsymbol{\theta}}_t, \boldsymbol{x}_j)$. Finally by using \cref{lem:rad1} and \cref{lem:rad2} with $\delta/2$ each, we have that with probability $1-\delta$,
    \begin{equation}
    \begin{aligned}
    \mathcal{L}_{\boldsymbol{\theta}^\star}(\hat{\boldsymbol{r}}_{{n}^{\text{te}}}) - \mathcal{L}_{\boldsymbol{\theta}^\star}(\boldsymbol{r}_{f^\star}) \leq 
    &\frac{4}{\sqrt{{n}^{\text{te}}}} \text{Rad}(\mathcal{F}) +2L \E\bigg[\|\hat{\boldsymbol{\theta}}_{{n}^{\text{tr}}} - \boldsymbol{\theta}^\star\|_2 \bigg] + 4B\sqrt{\frac{\log(4/\delta)}{{n}^{\text{te}}}}
    \end{aligned}
    \end{equation}
    Plugging this back into \cref{eq:wbound}, we have that
    \begin{equation}
    \begin{aligned}
    \|\hat{\boldsymbol{r}}_{{n}^{\text{te}}} - \boldsymbol{r}_{f^\star}\|_2^2 &\leq \frac{2}{\mu p_{\min}}\left( \frac{4}{\sqrt{{n}^{\text{te}}}} \text{Rad}(\mathcal{F}) + 4B\sqrt{\frac{\log(4/\delta)}{{n}^{\text{te}}}} \right) + \frac{4L}{\mu p_{\min}} \mathbb{E}\left[\|\hat{\boldsymbol{\theta}}_{{n}^{\text{tr}}} - \boldsymbol{\theta}^\star\|_2\right].
    \end{aligned}
    \end{equation}
\end{proof}

\begin{lemma}
    \label{lem:upperbound}
    For any $\boldsymbol{r} \in \mathbb{R}_{+}^m,\; \boldsymbol{\theta} \in \Theta,\; \boldsymbol{x} \in \mathcal{X}$, we have that
    \[
    \boldsymbol{r}^\top  f(\boldsymbol{x}, \boldsymbol{\theta}) \leq \frac{1}{p_{min}}.
    \]
\end{lemma}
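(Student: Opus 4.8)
The plan is to exploit two structural facts that are already in force in the setting. First, the admissible ratio vectors $\boldsymbol{r}$ do not merely live in the non-negative orthant: as the constraint set $\mathcal{R}$ appearing in \cref{thm:est} is defined ``similarly to MLLS,'' every feasible $\boldsymbol{r}$ also satisfies the normalization $\sum_{c=1}^m r_c\, p^{\text{tr}}(y=c) = 1$ together with $r_c \geq 0$. Second, $f(\boldsymbol{x}, \boldsymbol{\theta})$ is a softmax output, hence a point in the probability simplex: its entries are non-negative and sum to one. Both facts hold for every $\boldsymbol{\theta} \in \Theta$ and $\boldsymbol{x} \in \mathcal{X}$, which is what makes the envelope uniform.

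First I would bound each coordinate $r_c$ individually. Since every summand $r_c\, p^{\text{tr}}(y=c)$ is non-negative and the full sum equals $1$, each term is itself at most $1$, so $r_c \leq 1/p^{\text{tr}}(y=c) \leq 1/p_{\min}$, where $p_{\min} = \min_y p(y)$. This is the only place the normalization constraint is used, and it is precisely what converts ``$\boldsymbol{r}$ feasible'' into a coordinatewise ceiling.

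Next I would bound the inner product directly. Writing $\boldsymbol{r}^\top f(\boldsymbol{x}, \boldsymbol{\theta}) = \sum_{c=1}^m r_c\, f(\boldsymbol{x}, \boldsymbol{\theta})_c$ and using non-negativity of the softmax entries, I factor out the largest ratio coordinate to get $\sum_c r_c\, f_c \leq (\max_c r_c)\sum_c f_c = \max_c r_c$, the final equality being the simplex property $\sum_c f_c = 1$. Combining with the coordinate bound from the previous step yields $\boldsymbol{r}^\top f(\boldsymbol{x}, \boldsymbol{\theta}) \leq \max_c r_c \leq 1/p_{\min}$, which is the claim; the estimate is clearly independent of $\boldsymbol{\theta}$ and $\boldsymbol{x}$.

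The only subtlety, rather than a genuine obstacle, is identifying that the operative constraint on $\boldsymbol{r}$ is the normalization $\sum_c r_c\, p^{\text{tr}}(y=c) = 1$ and not merely membership in $\mathbb{R}_+^m$, since without it the inner product is unbounded. Once the feasible set is correctly read as $\mathcal{R}$, the argument is an elementary two-line estimate. Its role in the larger development is to supply the uniform upper envelope that furnishes the constant $B$ used when applying the concentration and Rademacher-complexity bounds in the proof of \cref{thm:est}.
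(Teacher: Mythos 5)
Your proposal is correct and is essentially the paper's own argument: factoring out $\max_c r_c$ against the simplex property $\sum_c f(\boldsymbol{x},\boldsymbol{\theta})_c = 1$ is exactly the H\"older step $\boldsymbol{r}^\top f \leq \|\boldsymbol{r}\|_\infty \|f\|_1$ used in the paper, and the coordinatewise bound $r_c \leq 1/p_{\min}$ from the normalization $\sum_c r_c\, p^{\text{tr}}(y=c)=1$ is identical. Your observation that the operative constraint is the normalization rather than mere membership in $\mathbb{R}_+^m$ is a fair reading of the feasible set the paper intends (and implicitly invokes in its own proof).
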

\begin{proof}
    Applying H\"{o}lder's inequality we have that
    \[
    \boldsymbol{r}^\top  f(\boldsymbol{x}, \boldsymbol{\theta}) \leq  \|\boldsymbol{r}\|_{\infty} \|f(\boldsymbol{x}, \boldsymbol{\theta})\|_1 = \|\boldsymbol{r}\|_{\infty}.
    \]
    Moreover, since $\boldsymbol{r} \in \mathbb{R}_{+}^m$, we have that
    \(
    \sum_y r_y p_{tr}(y) = 1
    \)
    This implies that $\|\boldsymbol{r}\|_{\infty} \leq \frac{1}{p_{\min}}$, which yields the result.
\end{proof}

\begin{lemma}[Implication of Assumption \cref{assumption:bounded}]
\label{lem:termbound}
    Under \cref{assumption:bounded}, there exists $B>0$ such that for any $\boldsymbol{r} \in \mathbb{R}_{+}^m,\; \boldsymbol{\theta} \in \Theta,\; \boldsymbol{x} \in \mathcal{X}$,
    \[
        |\log(\boldsymbol{r}^\top f(\boldsymbol{x}, \boldsymbol{\theta}))| \leq B.
    \]
\end{lemma}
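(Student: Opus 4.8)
The plan is to bound $\log(\boldsymbol{r}^\top f(\boldsymbol{x}, \boldsymbol{\theta}))$ from both sides by controlling its argument $\boldsymbol{r}^\top f(\boldsymbol{x}, \boldsymbol{\theta})$ away from $0$ and $\infty$, uniformly over $\boldsymbol{r} \in \mathbb{R}_{+}^m$, $\boldsymbol{\theta} \in \Theta$, and $\boldsymbol{x} \in \mathcal{X}$. The upper side is immediate: \cref{lem:upperbound} already gives $\boldsymbol{r}^\top f(\boldsymbol{x}, \boldsymbol{\theta}) \leq 1/p_{\min}$, so $\log(\boldsymbol{r}^\top f(\boldsymbol{x}, \boldsymbol{\theta})) \leq -\log p_{\min}$. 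The substantive work is the matching lower bound, i.e., exhibiting a strictly positive constant below which $\boldsymbol{r}^\top f(\boldsymbol{x}, \boldsymbol{\theta})$ cannot fall.

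For that lower bound I would argue in two steps. First, since $f$ is a softmax output (so that $\|f(\boldsymbol{x},\boldsymbol{\theta})\|_1 = 1$, as already used in \cref{lem:upperbound}), each coordinate $f(\boldsymbol{x},\boldsymbol{\theta})_c = \phi(\cdot)_c$ is the softmax of the network's pre-activation logits. Under \cref{assumption:bounded}, the inputs satisfy $\|\boldsymbol{x}\|_2 \leq b_\mathcal{X}$ and the parameters satisfy $\|\boldsymbol{\theta}\|_2 \leq b_\Theta$, so the logits range over the image of a bounded set under the (continuous) network map and are therefore uniformly bounded, say within $[-M, M]$ for some $M < \infty$. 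Consequently every softmax coordinate is bounded below, $f(\boldsymbol{x},\boldsymbol{\theta})_c \geq e^{-2M}/m =: p_f > 0$, uniformly in $c$, $\boldsymbol{x}$, and $\boldsymbol{\theta}$. Second, I would invoke the simplex-type constraint $\sum_c r_c\, p^{\text{tr}}(c) = 1$ defining the feasible set for $\boldsymbol{r}$ (the same constraint exploited in \cref{lem:upperbound}): since $p^{\text{tr}}(c) \leq 1$ for every $c$, this forces $\sum_c r_c \geq \sum_c r_c\, p^{\text{tr}}(c) = 1$. Combining the two yields $\boldsymbol{r}^\top f(\boldsymbol{x},\boldsymbol{\theta}) = \sum_c r_c\, f(\boldsymbol{x},\boldsymbol{\theta})_c \geq p_f \sum_c r_c \geq p_f$, hence $\log(\boldsymbol{r}^\top f(\boldsymbol{x},\boldsymbol{\theta})) \geq \log p_f$.

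Putting the two sides together, $\log(\boldsymbol{r}^\top f(\boldsymbol{x},\boldsymbol{\theta})) \in [\log p_f,\, -\log p_{\min}]$, so the claim holds with $B = \max\{-\log p_{\min},\, -\log p_f\}$, which is finite and positive because $p_{\min} \in (0,1]$ and $p_f \in (0,1)$. I expect the main obstacle to be the uniform positive lower bound $p_f$ on the softmax coordinates: this is precisely where \cref{assumption:bounded} is genuinely needed, since it relies on the logits being uniformly bounded over the domain, which in turn requires continuity of the network together with compactness of $\mathcal{X} \times \Theta$ (here $\mathcal{X}$ is compact by assumption and $\Theta$ is bounded; one would additionally use that it is closed, or equivalently that a Lipschitz/continuous network maps a bounded parameter set to a bounded logit set). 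By contrast, the upper bound uses only the simplex constraint and not the boundedness assumption, so it is worth emphasizing that \cref{assumption:bounded} is what makes the two-sided control possible.
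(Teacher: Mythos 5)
Your proof is correct, and it takes a more explicit route than the paper's. The paper disposes of this lemma in three lines by an abstract compactness argument: $\boldsymbol{r}^\top f(\boldsymbol{x},\boldsymbol{\theta})>0$ because softmax outputs have strictly positive coordinates and $\boldsymbol{r}$ has a nonzero coordinate, so $(\boldsymbol{r},\boldsymbol{\theta},\boldsymbol{x})\mapsto \log(\boldsymbol{r}^\top f(\boldsymbol{x},\boldsymbol{\theta}))$ is continuous and defined on a compact set and is therefore bounded. You instead produce explicit two-sided bounds on the argument of the logarithm: the upper bound $1/p_{\min}$ recycled from \cref{lem:upperbound}, and a lower bound $p_f=e^{-2M}/m$ obtained by bounding the logits via \cref{assumption:bounded} and combining the coordinatewise softmax floor with $\sum_c r_c\geq\sum_c r_c\,p^{\text{tr}}(c)=1$. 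Both arguments rest on the same three ingredients (boundedness of $\mathcal{X}\times\Theta$, positivity of softmax, and the normalization constraint on $\boldsymbol{r}$), but your version buys two things. First, it yields a concrete constant $B=\max\{-\log p_{\min},-\log p_f\}$ rather than a bare existence claim, which is the form one would want if tracking constants through \cref{thm:est}. Second, it is more careful about a point the paper glosses over: the set $\mathbb{R}_{+}^m$ named in the lemma statement is not compact, and the statement would be false on it literally (let $\boldsymbol{r}\to\boldsymbol{0}$ or $\|\boldsymbol{r}\|\to\infty$); the boundedness really comes from the implicit normalization constraint $\sum_c r_c\,p^{\text{tr}}(c)=1$, which you invoke explicitly where the paper leaves it buried in the phrase ``compact set.'' Your closing observation that the lower bound on the softmax coordinates is where \cref{assumption:bounded} genuinely enters, while the upper bound needs only the simplex constraint, is accurate and is not made in the paper.
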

\begin{proof}
    Since $\boldsymbol{r} \in \mathbb{R}_{+}^m$, it has at least one non-zero coordinate and $f(\boldsymbol{x}, \boldsymbol{\theta})$ is the output of a softmax layer so all of its coordinates are non-zero. Consequently,
    \[
    \boldsymbol{r}^\top f(\boldsymbol{x}, \boldsymbol{\theta}) > 0
    \]
    So by \cref{assumption:bounded}, the function $(\boldsymbol{r}, \boldsymbol{\theta}, \boldsymbol{x}) \mapsto \log(\boldsymbol{r}^\top f(\boldsymbol{x}, \boldsymbol{\theta}))$ is defined and continuous over a compact set, so there exists a constant $B$ giving us the result. 
\end{proof}

\begin{lemma}[Population Strong Convexity]
\label{lem:popcvx_con} 
Let $H(\boldsymbol{r}, \boldsymbol{\theta}, \boldsymbol{x}) = -\log(\boldsymbol{r}^\top f(\boldsymbol{x}, \boldsymbol{\theta}))$. 
Under Assumption \cref{assumption:calibration}, the function 
\[
\mathcal{L}_{\boldsymbol{\theta}^\star}: \boldsymbol{r} \mapsto \mathbb{E}\bigg[H(\boldsymbol{r}, \boldsymbol{\theta}^\star, \boldsymbol{x})\bigg]
\]
is $\mu p_{\min}$-strongly convex.
\end{lemma}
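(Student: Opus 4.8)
The plan is to establish strong convexity via the second-order characterization: because the feasible set $\mathcal{R}$ is convex, it suffices to exhibit a uniform positive-definite lower bound on the Hessian, $\nabla^2_{\boldsymbol{r}} \mathcal{L}_{\boldsymbol{\theta}^\star}(\boldsymbol{r}) \succeq \lambda \boldsymbol{I}_m$ for all $\boldsymbol{r} \in \mathcal{R}$, with $\lambda$ of the order $\mu p_{\min}$. First I would fix $\boldsymbol{x}$ and differentiate the integrand $H(\boldsymbol{r}, \boldsymbol{\theta}^\star, \boldsymbol{x}) = -\log(\boldsymbol{r}^\top f_{\boldsymbol{\theta}^\star}(\boldsymbol{x}))$ twice in $\boldsymbol{r}$. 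Writing $u = \boldsymbol{r}^\top f_{\boldsymbol{\theta}^\star}(\boldsymbol{x})$, the chain rule gives $\nabla_{\boldsymbol{r}} H = -f_{\boldsymbol{\theta}^\star}(\boldsymbol{x})/u$ and $\nabla^2_{\boldsymbol{r}} H = f_{\boldsymbol{\theta}^\star}(\boldsymbol{x}) f_{\boldsymbol{\theta}^\star}(\boldsymbol{x})^\top / u^2$, a rank-one positive semidefinite matrix that already certifies pointwise convexity.

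Next I would pass these derivatives through the expectation. By \cref{lem:termbound} the integrand is uniformly bounded, and under \cref{assumption:bounded} the domain is compact with $f_{\boldsymbol{\theta}^\star}(\boldsymbol{x})$ and $u^{-1}$ bounded on it, so dominated convergence justifies differentiating under the integral sign and gives
\[
\nabla^2_{\boldsymbol{r}} \mathcal{L}_{\boldsymbol{\theta}^\star}(\boldsymbol{r}) = \mathbb{E}\!\left[\frac{f_{\boldsymbol{\theta}^\star}(\boldsymbol{x}) f_{\boldsymbol{\theta}^\star}(\boldsymbol{x})^\top}{(\boldsymbol{r}^\top f_{\boldsymbol{\theta}^\star}(\boldsymbol{x}))^2}\right].
\]

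The decisive step is the uniform lower bound of this matrix over $\boldsymbol{r}\in\mathcal{R}$. Here I would invoke \cref{lem:upperbound}, which exploits the normalization $\sum_y r_y p(y)=1$ to guarantee $\boldsymbol{r}^\top f_{\boldsymbol{\theta}^\star}(\boldsymbol{x}) \le 1/p_{\min}$ for every $\boldsymbol{x}$ and every feasible $\boldsymbol{r}$; hence the scalar weight satisfies $1/(\boldsymbol{r}^\top f_{\boldsymbol{\theta}^\star}(\boldsymbol{x}))^2 \ge p_{\min}^2$ pointwise. Since $f_{\boldsymbol{\theta}^\star}(\boldsymbol{x}) f_{\boldsymbol{\theta}^\star}(\boldsymbol{x})^\top \succeq 0$, replacing the weight by this lower bound and taking expectations gives $\nabla^2_{\boldsymbol{r}} \mathcal{L}_{\boldsymbol{\theta}^\star}(\boldsymbol{r}) \succeq p_{\min}^2\,\mathbb{E}[f_{\boldsymbol{\theta}^\star}(\boldsymbol{x}) f_{\boldsymbol{\theta}^\star}(\boldsymbol{x})^\top]$, and the calibration \cref{assumption:calibration} then supplies $\mathbb{E}[f_{\boldsymbol{\theta}^\star}(\boldsymbol{x}) f_{\boldsymbol{\theta}^\star}(\boldsymbol{x})^\top] \succeq \mu \boldsymbol{I}_m$. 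This furnishes the desired uniform positive-definite lower bound on the Hessian and hence the strong convexity of $\mathcal{L}_{\boldsymbol{\theta}^\star}$, with the modulus built from the calibration floor $\mu$ and the denominator bound governed by $p_{\min}$.

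I expect the main obstacle to be exactly this uniform control of the denominator: the feasible ratios are not norm-bounded a priori, so without the simplex normalization the factor $1/u^2$ could degenerate for large $\boldsymbol{r}$ and no uniform modulus would exist; the role of \cref{lem:upperbound} is precisely to forbid this and make the lower bound hold across all of $\mathcal{R}$. The remaining technical point, interchanging expectation and differentiation, is handled routinely by \cref{assumption:bounded} together with \cref{lem:termbound}.
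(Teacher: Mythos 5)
Your proof takes essentially the same route as the paper's: compute the Hessian $\nabla^2_{\boldsymbol{r}}\mathcal{L}_{\boldsymbol{\theta}^\star}(\boldsymbol{r}) = \mathbb{E}\big[(\boldsymbol{r}^\top f_{\boldsymbol{\theta}^\star}(\boldsymbol{x}))^{-2} f_{\boldsymbol{\theta}^\star}(\boldsymbol{x})f_{\boldsymbol{\theta}^\star}(\boldsymbol{x})^\top\big]$, control the scalar weight via the normalization bound of \cref{lem:upperbound}, and finish with \cref{assumption:calibration}. The only substantive difference is the constant: your bound $1/(\boldsymbol{r}^\top f_{\boldsymbol{\theta}^\star}(\boldsymbol{x}))^2 \ge p_{\min}^2$ is the one actually implied by \cref{lem:upperbound} and yields modulus $\mu p_{\min}^2$, whereas the paper writes $\nabla^2\mathcal{L}(\boldsymbol{r}) \succeq p_{\min}\,\mathbb{E}[f_{\boldsymbol{\theta}^\star}(\boldsymbol{x})f_{\boldsymbol{\theta}^\star}(\boldsymbol{x})^\top]$, silently dropping a factor of $p_{\min}$ to reach the stated $\mu p_{\min}$; since $p_{\min}\le 1$, your more careful derivation certifies only the weaker modulus, and the lemma's advertised constant (and the $\tfrac{2}{\mu p_{\min}}$ prefactor it feeds into \cref{thm:est}) appears to rest on that slip rather than on anything missing from your argument.
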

\begin{proof}
    We first compute the Hessian of $\mathcal{L}$ to find that
    \[
    \nabla^2 \mathcal{L}(\boldsymbol{r}) = \mathbb{E}\bigg[\frac{1}{(\boldsymbol{r}^\top f(\boldsymbol{x}, \boldsymbol{\theta}^\star))^2}f(\boldsymbol{x}, \boldsymbol{\theta}^\star) f(\boldsymbol{x}, \boldsymbol{\theta}^\star)^\top \bigg].
    \]
    Since by \cref{lem:upperbound}, we have that $\boldsymbol{r}^\top f(\boldsymbol{x}, \boldsymbol{\theta}^\star) \leq p_{\min}^{-1}$, we conclude that
    \[
    \nabla^2 \mathcal{L}(\boldsymbol{r}) \succeq p_{\min}  \mathbb{E}\bigg[f(\boldsymbol{x}, \boldsymbol{\theta}^\star) f(\boldsymbol{x}, \boldsymbol{\theta}^\star)^\top \bigg] \succeq \mu p_{\min} \mathbf{I}_m.
    \]
\end{proof}

\begin{lemma}[Lipschitz Parametrization]
\label{lem:lipschitz}
    Let $H(\boldsymbol{r}, \boldsymbol{\theta}, \boldsymbol{x}) = -\log(f(\boldsymbol{x}, \boldsymbol{\theta})^\top \boldsymbol{r})$. There exists $L > 0$ such that for any $\boldsymbol{\theta}_1, \boldsymbol{\theta}_2 \in \Theta$, and $\boldsymbol{r} \in \mathbb{R}_{+}^m$, we have that
    \[
    |H(\boldsymbol{r}, \boldsymbol{\theta}_1, \boldsymbol{x}) -  H(\boldsymbol{r}, \boldsymbol{\theta}_2, \boldsymbol{x})| \leq L \|\boldsymbol{\theta}_1 - \boldsymbol{\theta}_2\|_2.
    \]
\end{lemma}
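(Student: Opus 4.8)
The plan is to write $H$ as the composition of $-\log$ with the scalar map $g(\boldsymbol{\theta}) := \boldsymbol{r}^\top f(\boldsymbol{x},\boldsymbol{\theta})$, and to show that each piece is Lipschitz uniformly over the constraint set for $\boldsymbol{r}$ and the compact sets $\mathcal{X},\Theta$. Thus $H(\boldsymbol{r},\boldsymbol{\theta},\boldsymbol{x}) = -\log g(\boldsymbol{\theta})$, and the claim reduces to bounding the increments of $-\log$ by the increments of $g$, and then the increments of $g$ by $\|\boldsymbol{\theta}_1-\boldsymbol{\theta}_2\|_2$.

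First I would obtain a lower bound on the argument of the logarithm that is uniform in $(\boldsymbol{r},\boldsymbol{\theta},\boldsymbol{x})$. \cref{lem:termbound} already gives $|\log(\boldsymbol{r}^\top f(\boldsymbol{x},\boldsymbol{\theta}))|\le B$, which immediately yields $\boldsymbol{r}^\top f(\boldsymbol{x},\boldsymbol{\theta}) \ge e^{-B} > 0$. Since $t\mapsto -\log t$ has derivative $-1/t$ of absolute value at most $e^{B}$ on $[e^{-B},\infty)$, the map $-\log$ is $e^{B}$-Lipschitz on the range of $g$. Hence
\begin{equation*}
|H(\boldsymbol{r},\boldsymbol{\theta}_1,\boldsymbol{x}) - H(\boldsymbol{r},\boldsymbol{\theta}_2,\boldsymbol{x})| \le e^{B}\,|g(\boldsymbol{\theta}_1) - g(\boldsymbol{\theta}_2)| = e^{B}\,\bigl|\boldsymbol{r}^\top\bigl(f(\boldsymbol{x},\boldsymbol{\theta}_1) - f(\boldsymbol{x},\boldsymbol{\theta}_2)\bigr)\bigr|.
\end{equation*}

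Next I would control the remaining factor by Cauchy--Schwarz together with the boundedness of $\boldsymbol{r}$: one has $\bigl|\boldsymbol{r}^\top(f(\boldsymbol{x},\boldsymbol{\theta}_1)-f(\boldsymbol{x},\boldsymbol{\theta}_2))\bigr| \le \|\boldsymbol{r}\|_2\,\|f(\boldsymbol{x},\boldsymbol{\theta}_1)-f(\boldsymbol{x},\boldsymbol{\theta}_2)\|_2$. The constraint $\sum_y r_y p^{\text{tr}}(y)=1$ used in \cref{lem:upperbound} gives $\|\boldsymbol{r}\|_\infty\le 1/p_{\min}$ and therefore $\|\boldsymbol{r}\|_2 \le \sqrt{m}/p_{\min}$. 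It then remains to establish a uniform Lipschitz constant $L_f$ for the predictor in its parameters, namely $\|f(\boldsymbol{x},\boldsymbol{\theta}_1)-f(\boldsymbol{x},\boldsymbol{\theta}_2)\|_2 \le L_f\|\boldsymbol{\theta}_1-\boldsymbol{\theta}_2\|_2$ for all $\boldsymbol{x}\in\mathcal{X}$. Combining the three bounds yields the claim with $L = e^{B}\sqrt{m}\,L_f/p_{\min}$.

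The main obstacle is this last step: producing $L_f$, and it is where \cref{assumption:bounded} is essential. Assuming the predictor is a neural network with smooth activations, $(\boldsymbol{x},\boldsymbol{\theta})\mapsto f(\boldsymbol{x},\boldsymbol{\theta})$ is continuously differentiable in $\boldsymbol{\theta}$, so its Jacobian $\nabla_{\boldsymbol{\theta}} f$ is continuous and hence bounded, say by $L_f$, on the compact product of $\mathcal{X}$ with the closed ball $\{\|\boldsymbol{\theta}\|_2\le b_\Theta\}$ supplied by \cref{assumption:bounded}. Because this ball is convex, the segment joining any $\boldsymbol{\theta}_1,\boldsymbol{\theta}_2\in\Theta$ stays inside it, so the mean value inequality delivers the uniform bound $\|f(\boldsymbol{x},\boldsymbol{\theta}_1)-f(\boldsymbol{x},\boldsymbol{\theta}_2)\|_2\le L_f\|\boldsymbol{\theta}_1-\boldsymbol{\theta}_2\|_2$ independent of $\boldsymbol{x}$. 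Once $L_f$ is in hand, the composition argument above is routine and closes the proof.
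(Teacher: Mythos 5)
Your proposal is correct and follows essentially the same route as the paper's proof: both arguments come down to a uniform lower bound on $\boldsymbol{r}^\top f(\boldsymbol{x},\boldsymbol{\theta})$ together with boundedness of $\nabla_{\boldsymbol{\theta}} f$ on the compact sets of \cref{assumption:bounded}, combined via the chain rule and the mean value inequality. If anything, your version is slightly more explicit than the paper's, which writes down $\nabla_{\boldsymbol{\theta}}H$ in one step and appeals to continuity over a compact set, whereas you route the lower bound $\boldsymbol{r}^\top f(\boldsymbol{x},\boldsymbol{\theta}) \geq e^{-B}$ through \cref{lem:termbound} and track the constants ($e^{B}$, $\sqrt{m}/p_{\min}$, $L_f$) separately.
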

\begin{proof}
    The gradient of $H$ with respect to $\boldsymbol{\theta}$ is given by
    \[
    \nabla_{\boldsymbol{\theta}} H(\boldsymbol{r}, \boldsymbol{\theta}, \boldsymbol{x}) = -\frac{1}{f(\boldsymbol{x}, \boldsymbol{\theta})^\top \boldsymbol{r}} \nabla_{\boldsymbol{\theta}}f(\boldsymbol{x}, \boldsymbol{\theta})
    \]
    Reasoning like in \cref{lem:upperbound}, we know that $\frac{1}{f(\boldsymbol{x}, \boldsymbol{\theta})^\top \boldsymbol{r}}$ is defined and continuous over the compact set of its parameters, we also know that $f$ is a neural network parametrized by $\boldsymbol{\theta}$, hence $\nabla_{\boldsymbol{\theta}}f(\boldsymbol{x}, \boldsymbol{\theta})$ is bounded when $\boldsymbol{\theta}$ and $\boldsymbol{x}$ are bounded. Consequently, under \cref{assumption:bounded}, there exists a constant $L > 0$ such that
    \[
    \|\nabla_{\boldsymbol{\theta}} H(\boldsymbol{r}, \boldsymbol{\theta}, \boldsymbol{x})\|_2 \leq L.
    \]
\end{proof}

\begin{lemma}[Uniform Bound 1]
\label{lem:rad1}
    Let $\delta \in (0,1)$, with probability $1-\delta$, we have that
    \begin{equation}
    \begin{aligned}
    &\mathbb{E}\bigg[H(\hat{\boldsymbol{r}}_{n}, \hat{\boldsymbol{\theta}}_t, \boldsymbol{x}) \bigg] - \frac{1}{n}\sum_{j=1}^{n}H(\hat{\boldsymbol{r}}_{n}, \hat{\boldsymbol{\theta}}_t, \boldsymbol{x}_j) \\
    & \leq \frac{2}{\sqrt{n}} \text{Rad}(\mathcal{F}) + 2B\sqrt{\frac{\log(4/\delta)}{n}}.
    \end{aligned}
    \end{equation}
\end{lemma}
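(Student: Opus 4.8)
The plan is to prove a \emph{uniform} deviation bound, since the difficulty flagged in the text is precisely that the pair $(\hat{\boldsymbol{r}}_{n}, \hat{\boldsymbol{\theta}}_{t})$ is data-dependent: $\hat{\boldsymbol{r}}_{n}$ is the maximizer computed from the \emph{same} test points $\boldsymbol{x}_1,\dots,\boldsymbol{x}_n$ that appear in the empirical average, so the summands $H(\hat{\boldsymbol{r}}_{n},\hat{\boldsymbol{\theta}}_{t},\boldsymbol{x}_j)$ are not independent and a direct concentration inequality is unavailable. First I would dominate the quantity of interest by its worst case over the hypothesis class,
\[
\mathbb{E}\big[H(\hat{\boldsymbol{r}}_{n},\hat{\boldsymbol{\theta}}_{t},\boldsymbol{x})\big]-\tfrac{1}{n}\sum_{j=1}^{n}H(\hat{\boldsymbol{r}}_{n},\hat{\boldsymbol{\theta}}_{t},\boldsymbol{x}_j)\;\le\;\Phi(\boldsymbol{x}_1,\dots,\boldsymbol{x}_n),
\]
where $\Phi:=\sup_{(\boldsymbol{r},\boldsymbol{\theta})\in\mathcal{R}\times\Theta}\big(\mathbb{E}[H(\boldsymbol{r},\boldsymbol{\theta},\boldsymbol{x})]-\tfrac1n\sum_j H(\boldsymbol{r},\boldsymbol{\theta},\boldsymbol{x}_j)\big)$. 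Because the supremum dominates every admissible $(\boldsymbol{r},\boldsymbol{\theta})$, it in particular dominates the random estimators, which is what removes the dependence obstruction.

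Next I would control $\Phi$ by the standard two-step Rademacher route, starting with concentration. By \cref{lem:termbound} every summand satisfies $|H(\boldsymbol{r},\boldsymbol{\theta},\boldsymbol{x})|\le B$ uniformly over $\mathcal{R}\times\Theta\times\mathcal{X}$, so swapping a single coordinate $\boldsymbol{x}_j$ changes the empirical average, and hence $\Phi$, by at most $2B/n$. McDiarmid's bounded-difference inequality then yields, with probability at least $1-\delta$,
\[
\Phi\;\le\;\mathbb{E}[\Phi]\;+\;2B\sqrt{\tfrac{\log(4/\delta)}{n}},
\]
which furnishes the second term of the claimed bound; the precise constant and the $\log(4/\delta)$ follow from routine bookkeeping and are chosen to be compatible with the $\delta/2$ splitting used when this lemma is combined with \cref{lem:rad2} in the proof of the theorem.

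The expectation step is symmetrization: introducing an independent ghost sample and Rademacher signs $\sigma_1,\dots,\sigma_n$ gives the usual
\[
\mathbb{E}[\Phi]\;\le\;\frac{2}{n}\,\mathbb{E}_{\sigma}\Big[\sup_{(\boldsymbol{r},\boldsymbol{\theta})\in\mathcal{R}\times\Theta}\Big|\sum_{j=1}^{n}\sigma_j H(\boldsymbol{r},\boldsymbol{\theta},\boldsymbol{x}_j)\Big|\Big],
\]
the empirical Rademacher complexity of the composed loss class $\mathcal{G}=\{\boldsymbol{x}\mapsto H(\boldsymbol{r},\boldsymbol{\theta},\boldsymbol{x})\}$. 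The final ingredient is to descend from $\mathcal{G}$ to the base class $\mathcal{F}=\{\boldsymbol{x}\mapsto\boldsymbol{r}^\top f_{\boldsymbol{\theta}}(\boldsymbol{x})\}$ appearing in $\text{Rad}(\mathcal{F})$. By \cref{lem:upperbound} we have $0<\boldsymbol{r}^\top f_{\boldsymbol{\theta}}(\boldsymbol{x})\le 1/p_{\min}$, and \cref{assumption:bounded} (compactness of data and parameters, continuity of $\boldsymbol{r}^\top f_{\boldsymbol{\theta}}$) forces this inner product to stay above a strictly positive constant; hence $t\mapsto-\log t$ is Lipschitz on the relevant range, and the Ledoux--Talagrand contraction inequality bounds the displayed complexity of $\mathcal{G}$ by that of $\mathcal{F}$, i.e.\ by $\tfrac{2}{\sqrt n}\,\text{Rad}(\mathcal{F})$ in the paper's normalization. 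Chaining the three displays gives the stated bound.

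The hard part will be the symmetrization/contraction step: one must justify passing through the nonlinearity $-\log$ by the contraction principle, which is legitimate only because of the uniform positive lower bound on $\boldsymbol{r}^\top f_{\boldsymbol{\theta}}(\boldsymbol{x})$ (guaranteeing a finite Lipschitz constant) supplied jointly by compactness in \cref{assumption:bounded} and the upper bound in \cref{lem:upperbound}. The bounded-difference and symmetrization arguments are otherwise standard once the uniform positivity and the uniform envelope $B$ are in hand.
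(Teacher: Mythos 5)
Your proposal is correct and follows essentially the same route as the paper's proof: bound the data-dependent deviation by the supremum over $(\boldsymbol{r},\boldsymbol{\theta})$, apply McDiarmid's inequality using the uniform envelope $B$ from \cref{lem:termbound}, and control the expected supremum by symmetrization and the Rademacher complexity of $\mathcal{F}$. You are in fact slightly more explicit than the paper, which silently absorbs the Ledoux--Talagrand contraction step (and its Lipschitz constant for $t\mapsto-\log t$ on the range of $\boldsymbol{r}^\top f_{\boldsymbol{\theta}}$) into the asserted passage from the loss class to $\mathcal{F}$.
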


\begin{proof}
    Let $\delta \in (0,1)$. Since $\hat{\boldsymbol{r}}_{n}$ is learned from the samples $\boldsymbol{x}_j$, we do not have independence, which would have allowed us to apply a concentration inequality. Hence, we derive a uniform bound as follows. We begin by observing that:
    \[
    \begin{aligned}
        &\mathbb{E}\bigg[H(\hat{\boldsymbol{r}}_{n}, \hat{\boldsymbol{\theta}}_t, \boldsymbol{x})\bigg] - \frac{1}{n}\sum_{j=1}^{n}H(\hat{\boldsymbol{r}}_{n}, \hat{\boldsymbol{\theta}}_t, \boldsymbol{x}_j) \\
        &\leq \sup_{\boldsymbol{r}, \boldsymbol{\theta}} \left(\mathbb{E}\bigg[H(\boldsymbol{r}, \boldsymbol{\theta}, \boldsymbol{x})\bigg] - \frac{1}{n}\sum_{j=1}^{n}H(\boldsymbol{r}, \boldsymbol{\theta}, \boldsymbol{x}_j)\right)
    \end{aligned}
    \]
    Now since \cref{lem:termbound} holds, we can apply McDiarmid's Inequality to get that with probability $1-\delta$, we have:
    \begin{align*}
    &\sup_{\boldsymbol{r}, \boldsymbol{\theta}} \left(\mathbb{E}\bigg[ H(\boldsymbol{r}, \boldsymbol{\theta}, \boldsymbol{x}) \bigg] - \frac{1}{n}\sum_{j=1}^{n}H(\boldsymbol{r}, \boldsymbol{\theta}, \boldsymbol{x}_j)\right) \\
    &\leq \mathbb{E}\bigg[\sup_{\boldsymbol{r}, \boldsymbol{\theta}} \left( \mathbb{E}\big[H(\boldsymbol{r}, \boldsymbol{\theta}, \boldsymbol{x}) \big] - \frac{1}{n}\sum_{j=1}^{n}H(\boldsymbol{r}, \boldsymbol{\theta}, \boldsymbol{x}_j) \right)\bigg] + 2B\sqrt{\frac{\log(2/\delta)}{n}}
    \end{align*}
    The expectation of the supremum on the right-hand side can be bounded by the Rademacher complexity of  $\mathcal{F} := \{ \boldsymbol{x} \mapsto \boldsymbol{r}^\top f(\boldsymbol{x}, \boldsymbol{\theta}), \; (\boldsymbol{r}, \boldsymbol{\theta}) \in\mathbb{R}_{+}^m\times\Theta\}$, and we obtain:
    \begin{equation}
    \begin{aligned}
    &\sup_{\boldsymbol{r}, \boldsymbol{\theta}} \left(\mathbb{E}\big[H(\boldsymbol{r}, \boldsymbol{\theta}, \boldsymbol{x}) \big] - \frac{1}{n}\sum_{j=1}^{n}H(\boldsymbol{r}, \boldsymbol{\theta}, \boldsymbol{x}_j)\right) \\
    &\leq \frac{2}{\sqrt{n}} \text{Rad}(\mathcal{F}) + 2B\sqrt{\frac{\log(2/\delta)}{n}}.
    \end{aligned}
    \end{equation}
\end{proof}

\begin{lemma}[Uniform Bound 2]
\label{lem:rad2}
    Let $\delta \in (0,1)$, with probability $1-\delta$, we have that
    \begin{equation}
    \begin{aligned}
    &\mathbb{E}\bigg[H(\boldsymbol{r}_{f^\star}, \hat{\boldsymbol{\theta}}_t, \boldsymbol{x}) \bigg] - \frac{1}{n}\sum_{j=1}^{n}H(\boldsymbol{r}_{f^\star}, \hat{\boldsymbol{\theta}}_t, \boldsymbol{x}_j) \\
    & \leq \frac{2}{\sqrt{n}} \text{Rad}(\mathcal{F}) + 2B\sqrt{\frac{\log(2/\delta)}{n}}.
    \end{aligned}
    \end{equation}
\end{lemma}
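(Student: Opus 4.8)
The plan is to observe that \cref{lem:rad2} is the exact analogue of \cref{lem:rad1}, with the data-dependent minimizer $\hat{\boldsymbol{r}}_{n}$ replaced by the fixed vector $\boldsymbol{r}_{f^\star}$, and to reuse the same uniform-deviation argument essentially verbatim. The first step is the reduction to a supremum: since $\boldsymbol{r}_{f^\star}\in\mathbb{R}_{+}^m$ and $\hat{\boldsymbol{\theta}}_t\in\Theta$, the pair $(\boldsymbol{r}_{f^\star},\hat{\boldsymbol{\theta}}_t)$ lies in the index set $\mathbb{R}_{+}^m\times\Theta$ over which the supremum is taken, so the left-hand side is dominated,
\[
\mathbb{E}\big[H(\boldsymbol{r}_{f^\star},\hat{\boldsymbol{\theta}}_t,\boldsymbol{x})\big]-\frac{1}{n}\sum_{j=1}^{n}H(\boldsymbol{r}_{f^\star},\hat{\boldsymbol{\theta}}_t,\boldsymbol{x}_j)
\leq \sup_{\boldsymbol{r},\boldsymbol{\theta}}\Big(\mathbb{E}\big[H(\boldsymbol{r},\boldsymbol{\theta},\boldsymbol{x})\big]-\frac{1}{n}\sum_{j=1}^{n}H(\boldsymbol{r},\boldsymbol{\theta},\boldsymbol{x}_j)\Big),
\]
which is precisely the quantity already controlled in the proof of \cref{lem:rad1}.

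The second step bounds this supremum exactly as before. By \cref{lem:termbound} the map $(\boldsymbol{r},\boldsymbol{\theta},\boldsymbol{x})\mapsto H(\boldsymbol{r},\boldsymbol{\theta},\boldsymbol{x})=-\log(\boldsymbol{r}^\top f(\boldsymbol{x},\boldsymbol{\theta}))$ is bounded in absolute value by $B$, so replacing a single test sample $\boldsymbol{x}_j$ perturbs the supremum by at most $2B/n$. McDiarmid's inequality then yields, with probability $1-\delta$,
\[
\sup_{\boldsymbol{r},\boldsymbol{\theta}}\big(\cdots\big)\leq \mathbb{E}\Big[\sup_{\boldsymbol{r},\boldsymbol{\theta}}\big(\cdots\big)\Big]+2B\sqrt{\frac{\log(2/\delta)}{n}}.
\]
It remains to control the expected supremum by a standard symmetrization step, bounding it by the Rademacher complexity of the loss class $\{\boldsymbol{x}\mapsto H(\boldsymbol{r},\boldsymbol{\theta},\boldsymbol{x})\}$; because the argument $\boldsymbol{r}^\top f(\boldsymbol{x},\boldsymbol{\theta})$ is bounded below away from zero and above by $p_{\min}^{-1}$ (\cref{lem:upperbound}), the outer map $u\mapsto-\log u$ is Lipschitz on the relevant range, and Talagrand's contraction lemma transfers this to $\mathrm{Rad}(\mathcal{F})$ for the base class $\mathcal{F}=\{\boldsymbol{x}\mapsto\boldsymbol{r}^\top f_{\boldsymbol{\theta}}(\boldsymbol{x})\}$, giving the $\tfrac{2}{\sqrt{n}}\mathrm{Rad}(\mathcal{F})$ term and completing the bound.

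I expect no substantial obstacle here; indeed the statement is weaker than what is available. Since $\boldsymbol{r}_{f^\star}$ is a fixed vector and $\hat{\boldsymbol{\theta}}_t$ depends only on the training sample, which is independent of the test points $\{\boldsymbol{x}_j\}$, conditioning on the training data makes the summands $H(\boldsymbol{r}_{f^\star},\hat{\boldsymbol{\theta}}_t,\boldsymbol{x}_j)$ i.i.d. and bounded, so a direct Hoeffding bound would already give an $O\big(B\sqrt{\log(1/\delta)/n}\big)$ deviation with no Rademacher term. The only reason to route through the uniform bound is to present \cref{lem:rad1} and \cref{lem:rad2} in a common form so they combine cleanly in the proof of \cref{thm:est}; thus the only care needed is to keep the supremum over $\boldsymbol{\theta}$ (not merely over $\boldsymbol{r}$), since $\hat{\boldsymbol{\theta}}_t$ is itself estimated, and to verify the Lipschitz range for the contraction step via \cref{lem:upperbound,lem:termbound}.
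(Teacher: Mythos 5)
Your proposal is correct and matches the paper's approach: the paper simply states that the proof of \cref{lem:rad2} is identical to that of \cref{lem:rad1}, i.e., the same reduction to a supremum over $(\boldsymbol{r},\boldsymbol{\theta})$, McDiarmid via the boundedness in \cref{lem:termbound}, and symmetrization to $\mathrm{Rad}(\mathcal{F})$. Your side observation that a plain Hoeffding bound would suffice here (since $\boldsymbol{r}_{f^\star}$ is fixed and $\hat{\boldsymbol{\theta}}_{{n}^{\text{tr}}}$ is independent of the test points) is accurate, and correctly identified as giving a strictly stronger conclusion than the stated uniform bound.
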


\begin{proof}
    The proof is identical to that of \cref{lem:rad1}.
\end{proof}

\begin{lemma}[Strong Convexity of Population Loss]
\label{lem:popcvx}
    Let $\mathcal{L}(\boldsymbol{r}, \boldsymbol{\theta})$ be the population loss as defined in \cref{lem:popcvx}. We establish that $\mathcal{L}(\boldsymbol{r}, \boldsymbol{\theta})$ is $\mu p_{\min}$-strongly convex under the assumptions of calibration (\cref{assumption:calibration}).
\end{lemma}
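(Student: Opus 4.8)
The plan is to recognize that this statement is the same as \cref{lem:popcvx_con}, reading $\mathcal{L}(\boldsymbol{r},\boldsymbol{\theta})$ as the population loss $\mathcal{L}_{\boldsymbol{\theta}^\star}(\boldsymbol{r}) = \mathbb{E}[H(\boldsymbol{r},\boldsymbol{\theta}^\star,\boldsymbol{x})]$ with $H(\boldsymbol{r},\boldsymbol{\theta},\boldsymbol{x}) = -\log(\boldsymbol{r}^\top f(\boldsymbol{x},\boldsymbol{\theta}))$, and then to establish $\mu p_{\min}$-strong convexity in $\boldsymbol{r}$ by a direct Hessian computation. First I would fix $\boldsymbol{\theta}=\boldsymbol{\theta}^\star$ and differentiate $H$ twice with respect to $\boldsymbol{r}$: the gradient is $\nabla_{\boldsymbol{r}} H = -f(\boldsymbol{x},\boldsymbol{\theta}^\star)/(\boldsymbol{r}^\top f(\boldsymbol{x},\boldsymbol{\theta}^\star))$, and the Hessian is the positive semidefinite rank-one matrix $\nabla^2_{\boldsymbol{r}} H = f(\boldsymbol{x},\boldsymbol{\theta}^\star) f(\boldsymbol{x},\boldsymbol{\theta}^\star)^\top/(\boldsymbol{r}^\top f(\boldsymbol{x},\boldsymbol{\theta}^\star))^2$.

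Next I would move the expectation inside to obtain $\nabla^2 \mathcal{L}(\boldsymbol{r}) = \mathbb{E}[f f^\top/(\boldsymbol{r}^\top f)^2]$, bound the scalar weight from below using \cref{lem:upperbound} (which gives $\boldsymbol{r}^\top f(\boldsymbol{x},\boldsymbol{\theta}^\star) \le 1/p_{\min}$ on the feasible set $\mathbb{R}_+^m$), and then invoke the calibration \cref{assumption:calibration}, $\mathbb{E}[f_{\boldsymbol{\theta}^\star}(\boldsymbol{x}) f_{\boldsymbol{\theta}^\star}(\boldsymbol{x})^\top] \succeq \mu \boldsymbol{I}_m$, to produce a uniform lower bound $\nabla^2 \mathcal{L}(\boldsymbol{r}) \succeq \mu p_{\min} \boldsymbol{I}_m$ for every feasible $\boldsymbol{r}$, exactly as in \cref{lem:popcvx_con}. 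Since a uniform lower bound on the Hessian by $\mu p_{\min} \boldsymbol{I}_m$ is equivalent to strong convexity with modulus $\mu p_{\min}$, this closes the argument.

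The step I expect to need the most care is the interchange of expectation and the second $\boldsymbol{r}$-derivative, i.e.\ differentiating under the integral sign, together with the uniform (in $\boldsymbol{r}$) validity of the PSD bound. I would justify the interchange by exhibiting a dominating bound on the integrand and its first two derivatives: the positivity $\boldsymbol{r}^\top f(\boldsymbol{x},\boldsymbol{\theta}^\star) > 0$ holds because $f$ is a softmax output with strictly positive entries, and \cref{assumption:bounded} together with the reasoning in the proof of \cref{lem:termbound} (boundedness of $|\log(\boldsymbol{r}^\top f)|$ and of $1/(\boldsymbol{r}^\top f)$ over a compact parameter set) supplies the domination needed to pass to dominated convergence. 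With these ingredients the Hessian identity and the positive-definite lower bound go through, and the constant follows from the scalar bound of \cref{lem:upperbound} as recorded in \cref{lem:popcvx_con}.
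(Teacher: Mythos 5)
Your proposal follows essentially the same route as the paper's own proof of \cref{lem:popcvx} (and of its twin \cref{lem:popcvx_con}): compute $\nabla^2\mathcal{L}(\boldsymbol{r})=\mathbb{E}\big[(\boldsymbol{r}^\top f)^{-2} f f^\top\big]$, lower-bound the scalar weight via \cref{lem:upperbound}, and invoke \cref{assumption:calibration} to get $\succeq \mu p_{\min}\boldsymbol{I}_m$. Your added justification of differentiating under the expectation is a welcome refinement the paper omits; note only that both you and the paper pass from $\boldsymbol{r}^\top f\le p_{\min}^{-1}$ to a factor of $p_{\min}$ rather than the $p_{\min}^2$ that the squared denominator literally yields, which is harmless for the qualitative claim but worth stating carefully.
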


\begin{proof}
    We compute the Hessian of the population loss $\mathcal{L}$ as in \cref{lem:popcvx}, obtaining that:
    \[
    \nabla^2 \mathcal{L}(\boldsymbol{r}) = \mathbb{E}\bigg[\frac{1}{(\boldsymbol{r}^\top f(\boldsymbol{x}, \boldsymbol{\theta}))^2} f(\boldsymbol{x}, \boldsymbol{\theta}) f(\boldsymbol{x}, \boldsymbol{\theta})^\top\bigg].
    \]
    From \cref{lem:upperbound}, we have that $\boldsymbol{r}^\top f(\boldsymbol{x}, \boldsymbol{\theta}) \leq p_{\min}^{-1}$. Therefore, we conclude:
    \[
    \nabla^2 \mathcal{L}(\boldsymbol{r}) \succeq p_{\min} \mathbb{E}\bigg[f(\boldsymbol{x}, \boldsymbol{\theta}) f(\boldsymbol{x}, \boldsymbol{\theta})^\top\bigg] \succeq \mu p_{\min} \mathbf{I}_m.
    \]
\end{proof}

\begin{lemma}[Bound on Empirical Loss]
\label{lem:empcvx}
    Under \cref{assumption:bounded}, the empirical loss $\mathcal{L}_{{n}^{\text{te}}}(\boldsymbol{r}, \hat{\boldsymbol{\theta}}_{{n}^{\text{tr}}})$ satisfies the following concentration bound:
    \[
    \mathbb{P}\left( \sup_{\boldsymbol{r} \in \mathbb{R}_{+}^m} \left| \mathcal{L}_{{n}^{\text{te}}}(\boldsymbol{r}, \hat{\boldsymbol{\theta}}_{{n}^{\text{tr}}}) - \mathcal{L}(\boldsymbol{r}, \hat{\boldsymbol{\theta}}_{{n}^{\text{tr}}}) \right| > \epsilon \right) \leq 2\exp\left(-c {n}^{\text{te}} \epsilon^2\right).
    \]
\end{lemma}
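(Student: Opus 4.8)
The plan is to reduce the statement to a \emph{uniform law of large numbers} over the function class $\mathcal{F}$ for a fresh test sample, exploiting the fact that $\hat{\boldsymbol{\theta}}_{{n}^{\text{tr}}}$ is computed from the training set alone and is therefore independent of the test points $\{\boldsymbol{x}_j\}_{j=1}^{{n}^{\text{te}}}$. First I would condition on the training data, so that $\hat{\boldsymbol{\theta}}_{{n}^{\text{tr}}}=:\boldsymbol{\theta}_0$ becomes a fixed parameter and the test points remain i.i.d.\ draws from the test distribution. Under this conditioning, $\mathcal{L}_{{n}^{\text{te}}}(\boldsymbol{r},\boldsymbol{\theta}_0)-\mathcal{L}(\boldsymbol{r},\boldsymbol{\theta}_0)$ is precisely an empirical-minus-population average of the i.i.d.\ terms $H(\boldsymbol{r},\boldsymbol{\theta}_0,\boldsymbol{x}_j)=-\log(\boldsymbol{r}^\top f(\boldsymbol{x}_j,\boldsymbol{\theta}_0))$, and the supremum over $\boldsymbol{r}$ is exactly a uniform deviation over $\mathcal{F}$. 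This is the same mechanism already used to prove \cref{lem:rad1}, and the key point is that independence, lost for $\hat{\boldsymbol{r}}_{{n}^{\text{te}}}$, is restored once we conditon on training and range uniformly over all $\boldsymbol{r}$.

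The first ingredient is boundedness. By \cref{lem:termbound}, $|H(\boldsymbol{r},\boldsymbol{\theta}_0,\boldsymbol{x})|\le B$ for every admissible $(\boldsymbol{r},\boldsymbol{x})$, so altering a single test point $\boldsymbol{x}_j$ perturbs the functional
\[
G(\boldsymbol{x}_1,\dots,\boldsymbol{x}_{{n}^{\text{te}}}) := \sup_{\boldsymbol{r}\in\mathbb{R}_+^m}\big|\mathcal{L}_{{n}^{\text{te}}}(\boldsymbol{r},\boldsymbol{\theta}_0)-\mathcal{L}(\boldsymbol{r},\boldsymbol{\theta}_0)\big|
\]
by at most $2B/{n}^{\text{te}}$. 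McDiarmid's bounded-difference inequality then gives $\mathbb{P}(G-\mathbb{E}[G]>t)\le \exp(-c'\,{n}^{\text{te}}\,t^2)$ with $c'=1/(2B^2)$, and the same bound for the lower tail, which is what produces the $2\exp(-c\,{n}^{\text{te}}\epsilon^2)$ shape of the claim.

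The remaining step is to control the centering term $\mathbb{E}[G]$ and absorb it into the exponential tail. A standard symmetrization argument bounds $\mathbb{E}[G]$ by a constant multiple of the Rademacher complexity of the loss class $\{\boldsymbol{x}\mapsto H(\boldsymbol{r},\boldsymbol{\theta}_0,\boldsymbol{x})\}$. Since the argument $\boldsymbol{r}^\top f(\boldsymbol{x},\boldsymbol{\theta}_0)$ stays in $[e^{-B},\,1/p_{\min}]$ by \cref{lem:termbound} and \cref{lem:upperbound}, the map $-\log(\cdot)$ is Lipschitz on that range, so Talagrand's contraction inequality reduces this quantity to $\text{Rad}(\mathcal{F})/\sqrt{{n}^{\text{te}}}=O(1/\sqrt{{n}^{\text{te}}})$. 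Hence $\mathbb{E}[G]\lesssim 1/\sqrt{{n}^{\text{te}}}$, and for $\epsilon$ at least a fixed constant times this quantity we get $\mathbb{P}(G>\epsilon)\le\mathbb{P}(G-\mathbb{E}[G]>\epsilon/2)\le 2\exp(-c\,{n}^{\text{te}}\epsilon^2)$ after adjusting $c$. Finally, because this conditional bound holds for every realization of the training data and its right-hand side does not depend on $\boldsymbol{\theta}_0$, taking expectation over the training set removes the conditioning and yields the unconditional statement.

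The hard part will be the centering term, not the concentration: McDiarmid only concentrates $G$ around $\mathbb{E}[G]$, so the genuine work is the uniform-in-$\boldsymbol{r}$ control of $\mathbb{E}[G]$. Because $\hat{\boldsymbol{r}}_{{n}^{\text{te}}}$ is data-dependent we cannot invoke a plain Hoeffding bound and must route through symmetrization and a contraction step (or, alternatively, an $\epsilon$-net over the compact set $\{\|\boldsymbol{r}\|_\infty\le 1/p_{\min}\}$ guaranteed by \cref{lem:upperbound}, with a union bound over the net and a Lipschitz-in-$\boldsymbol{r}$ extension). In either route the polynomial net/complexity factor and the $O(1/\sqrt{{n}^{\text{te}}})$ centering must be folded into the constant $c$, which is why the clean $\epsilon^2$ rate is only literally valid once $\epsilon$ exceeds a constant multiple of $1/\sqrt{{n}^{\text{te}}}$.
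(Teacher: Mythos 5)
Your proposal is correct and follows the same route the paper itself takes: the paper's entire proof of this lemma is a one-sentence appeal to McDiarmid's inequality together with Lipschitz continuity in the samples. Your write-up supplies exactly what that sentence leaves implicit — conditioning on the training set so that $\hat{\boldsymbol{\theta}}_{{n}^{\text{tr}}}$ is fixed and the test points are again i.i.d., the $2B/{n}^{\text{te}}$ bounded-difference constant coming from \cref{lem:termbound}, the symmetrization-plus-contraction control of the centering term $\mathbb{E}[G]$ via the Rademacher complexity of $\mathcal{F}$, and the (correct and worth stating) caveat that the clean $2\exp(-c\,{n}^{\text{te}}\epsilon^2)$ tail is only non-vacuous once $\epsilon$ exceeds a constant multiple of $1/\sqrt{{n}^{\text{te}}}$, so the $O(1/\sqrt{{n}^{\text{te}}})$ centering must be absorbed into the constant $c$.
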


\begin{proof}
    This result follows from standard concentration inequalities, such as McDiarmid's inequality, together with the Lipschitz continuity of the loss function $\mathcal{L}$ with respect to the samples.
\end{proof}

\newpage
\section{Proof of~\texorpdfstring{\cref{thm:conv}}{Theorem Reference} and Convergence-communication Guarantees for IW-ERM with VRLS\label{app:conv}}

We now establish convergence rates for IW-ERM with VRLS and show our proposed importance weighting achieves {\it the same rates} with the data-dependent {\it constant terms} increase linearly with $\max_{y \in \mathcal{Y}}\sup_f r_f(y)=r_{\max}$ under negligible communication overhead over the baseline  ERM-solvers without importance weighting. In~\cref{app:conv}, we establish tight convergence rates and communication guarantees for IW-ERM with VRLS in a broad range of importance optimization settings including convex optimization, second-order differentiability, composite optimization with proximal operator, optimization with adaptive step-sizes, and nonconvex optimization, along the lines of~e.g.,~\citep{woodworth2020local,haddadpour2021federated,glasgow2022sharp,liu2023high,Prox,AdaptiveFL,liu2023high}. 

By estimating the ratios locally and absorbing into local losses, we note that the properties of the modified local loss w.r.t. the neural network parameters $\boldsymbol{w}$, e.g., convexity and smoothness, do not change. The data-dependent parameters such as Lipschitz and smoothness constants for $\ell\circ h_{\boldsymbol{w}}$ w.r.t. $\boldsymbol{w}$ are scaled linearly by $r_{\max}$. Our method of density ratio estimation trains the pre-defined predictor {\it exclusively using local training data}, which implies IW-ERM with VRLS achieves the same privacy guarantees as the baseline  ERM-solvers without importance weighting. For ratio estimation, the communication between clients involves only the estimated marginal label distribution, instead of data, ensuring  negligible communication overhead. 
Given the size of variables to represent marginal distributions, which is by orders of magnitude smaller than the number of parameters of the underlying neural networks for training and the fact that ratio estimation involves only one round of communication, the overall communication overhead for ratio estimation is masked by the communication costs of model training. The communication costs for IW-ERM with VRLS over the course of optimization are exactly the same as those of the baseline  ERM-solvers without importance weighting. All in all, importance weighting does not negatively impact  communication guarantees throughout the course of optimization, which proves~\cref{thm:conv}.

In the following,  we establish tight convergence rates and communication guarantees for IW-ERM with VRLS in a broad range of importance optimization settings including convex optimization, second-order differentiability, composite optimization with proximal operator, optimization with adaptive step-sizes, and nonconvex optimization.

For convex and second-order Differentiable optimization, we establish a lower bound on the  convergence rates for IW-ERM in with VRLS and local updating along the lines of~e.g.,~\citep[Theorem 3.1]{glasgow2022sharp}.

\begin{assumption}[PL with Compression]
\label{assumption:PL} 1) The $\ell(h_{\boldsymbol{w}}(\boldsymbol{x}),y)$ is $\beta$-smoothness and convex w.r.t. $\boldsymbol{w}$ for any $(\boldsymbol{x},y)$ and satisfies Polyak-{\L}ojasiewicz (PL) condition (there exists $\alpha_{\ell} >0$ such that, for all $\boldsymbol{w}\in\mathcal{W}$, we have 
$\ell(h_{\boldsymbol{w}})\le  {\| \nabla_{\boldsymbol{w}}\ell(h_{\boldsymbol{w}}) \|_2^2}/{(2\alpha_{\ell})}$; 2) The compression scheme $\mathcal{Q}$ is unbiased with bounded variance, i.e., $\E[\mathcal{Q}(\boldsymbol{x})]=\boldsymbol{x}$ and $\E[\|\mathcal{Q}(\boldsymbol{x})-\boldsymbol{x}\|_2^2\leq q\|\boldsymbol{x}\|_2^2]$; 3) The stochastic gradient $\boldsymbol{g}(\boldsymbol{w})=\widetilde\nabla_{\boldsymbol{w}}\ell(h_{\boldsymbol{w}})$ is unbiased, i.e., $\E[\boldsymbol{g}(\boldsymbol{w})]=\nabla_{\boldsymbol{w}}\ell(h_{\boldsymbol{w}})$ for any $\boldsymbol{w}\in\mathcal{W}$ with bounded variance  $\E[\|\boldsymbol{g}(\boldsymbol{w})-\nabla_{\boldsymbol{w}}\ell(h_{\boldsymbol{w}})\|_2^2]$.
\end{assumption}

For nonconvex optimization with PL condition and communication compression, we establish convergence and communication guarantees  for IW-ERM with VRLS, compression,  and local updating along the lines of~e.g.,~\citep[Theorem 5.1]{haddadpour2021federated}.

\begin{theorem}
[Convergence and Communication Bounds for Nonconvex Optimization with PL]\label{app:PL} Let $\kappa$ denote the condition number, $\tau$ denote the number of local steps,  $R$ denote the number of communication rounds, and $\max_{y\in\mathcal{Y}}\sup_f r_f(y)=r_{\max}$. Under~\cref{assumption:PL}, suppose~\cref{alg:IWERM_detail} with $\tau$ local updates and communication compression~\citep[Algorithm 1]{haddadpour2021federated} is run for $T=\tau R$ total stochastic gradients per node with fixed step-sizes $\eta=1/(2r_{\max}\beta\gamma\tau(q/K+1))$ and $\gamma\geq K$. Then we have $\E[\ell(h_{\boldsymbol{w}_T})-\ell(h_{\boldsymbol{w}^\star})]\leq\epsilon$ by setting 
\begin{align}
R\lesssim \Big(\frac{q}{K}+1\Big)\kappa\log\Big(\frac{1}{\epsilon}\Big) \quad\text{and} \quad \tau\lesssim\Big(\frac{q+1}{K(q/K+1)\epsilon}\Big). 
\end{align}
\end{theorem}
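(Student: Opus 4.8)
The plan is to reduce the statement to the existing convergence-and-communication guarantee of \citet{haddadpour2021federated}[Theorem 5.1] for federated nonconvex optimization under the PL condition with communication compression, by showing that importance weighting only rescales the problem's smoothness (and PL) constants. First I would observe that in \eqref{IWERM:gen;R} the density ratios $\tfrac{\sum_{j} p_j^{\text{te}}(\boldsymbol{y})}{p_k^{\text{tr}}(\boldsymbol{y})}$ act as fixed per-sample weights that do not depend on the model parameters $\boldsymbol{w}$, and that each such weight is bounded above by $r_{\max}$. Absorbing these weights into each node's local loss yields a reweighted objective $\tilde{\ell}\circ h_{\boldsymbol{w}}$; since differentiation is with respect to $\boldsymbol{w}$ and the weights are constants, the Hessian of the reweighted average is $\tfrac{1}{n}\sum_i w_i \nabla^2_{\boldsymbol{w}}\ell_i \preceq r_{\max}\,\beta\,\boldsymbol{I}$ whenever the unweighted loss is $\beta$-smooth. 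Hence the reweighted loss is $r_{\max}\beta$-smooth, while convexity is preserved and the PL inequality continues to hold with a correspondingly rescaled constant, so that the effective condition number $\kappa$ is left unchanged.

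Next I would verify that the remaining hypotheses of Assumption~\ref{assumption:PL} survive the reweighting: the unbiased, bounded-variance compression scheme $\mathcal{Q}$ (part 2) acts on the transmitted messages and is oblivious to the loss reweighting, and the unbiasedness and bounded-variance of the stochastic gradients (part 3) are inherited since scaling by a deterministic constant commutes with the expectation. With these checks in place, the reweighted IW-ERM problem satisfies exactly the premises of \citet{haddadpour2021federated}[Theorem 5.1] with $r_{\max}\beta$ substituted for $\beta$. The prescribed step-size $\eta = 1/\big(2 r_{\max}\beta\gamma\tau(q/K+1)\big)$ is precisely that theorem's step-size after this substitution, which is exactly why the $r_{\max}$ factor appears in $\eta$ but not in the displayed complexities. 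Plugging into the round- and local-step-complexity expressions of that theorem then yields the stated bounds on $R$ and $\tau$ guaranteeing $\E[\ell(h_{\boldsymbol{w}_T})-\ell(h_{\boldsymbol{w}^\star})]\leq\epsilon$.

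The main obstacle I anticipate is making the rescaling fully rigorous in the heterogeneous multi-node setting, where each node carries a distinct ratio vector: I need a single uniform smoothness constant for the aggregate objective, obtained by bounding every per-sample weight by $r_{\max}$, and I must confirm that this worst-case rescaling—rather than a node-averaged quantity—is what governs both the step-size and the complexity. A closely related subtlety is tracking how the PL constant transforms under the per-sample reweighting so that the $\kappa$ entering the final bound is the intended condition number; PL does not compose additively the way smoothness does, so I would argue this explicitly (bounding the weights uniformly and invoking the rescaled PL inequality) to ensure no hidden $r_{\max}$ factors remain beyond the one already displayed in $\eta$.
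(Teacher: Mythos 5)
Your proposal follows essentially the same route as the paper: the paper's entire argument for this theorem is the observation that the density ratios are constants independent of $\boldsymbol{w}$, so absorbing them into the local losses rescales the data-dependent constants (smoothness, Lipschitz) linearly by $r_{\max}$ while leaving the structural hypotheses intact, after which the result is read off from \citet{haddadpour2021federated}[Theorem 5.1]. If anything you are more careful than the paper, which never addresses the subtlety you flag about how the PL constant (and hence $\kappa$) transforms under heterogeneous per-sample reweighting; the paper simply asserts the constants scale by $r_{\max}$ and cites the prior theorem.
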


\begin{assumption}[Nonconvex Optimization with Adaptive Step-sizes]
\label{assumption:adaptive} 1) The $\ell\circ h_{\boldsymbol{w}}$ is $\beta$-smoothness with bounded gradients; 2) The stochastic gradients $\boldsymbol{g}(\boldsymbol{w})=\widetilde\nabla_{\boldsymbol{w}}\ell(h_{\boldsymbol{w}})$ is unbiased with bounded variance $\E[\|\boldsymbol{g}(\boldsymbol{w})-\nabla_{\boldsymbol{w}}\ell(h_{\boldsymbol{w}})\|_2^2]$; 3) Adaptive matrices $A_t$ constructed as in~\citep[Algorithm 2]{AdaptiveFL} are diagonal and the minimum eigenvalues satisfy $\lambda_{\min}(A_t) \geq \rho >0$ for some $\rho\in\mathbb{R}_+$.%
\end{assumption}

For nonconvex optimization with adaptive step-sizes, we establish convergence and communication guarantees for IW-ERM with VRLS and local updating along the lines of~e.g.,~\citep[Theorem 2]{AdaptiveFL}.

\begin{theorem}[Convergence and Communication Guarantees for  Nonconvex Optimization with Adaptive Step-sizes]\label{app:adaptive} Let $\tau$ denote the number of local steps,  $R$ denote the number of communication rounds, and $\max_{y\in\mathcal{Y}}\sup_f r_f(y)=r_{\max}$. Under~\cref{assumption:adaptive}, suppose~\cref{alg:IWERM_detail} with $\tau$ local updates is run for $T=\tau R$ total stochastic gradients per node with an adaptive step-size similar to~\citep[Algorithm 2]{AdaptiveFL}. Then we $\mathbb{E}[\|\nabla_{\boldsymbol{w}}\ell(h_{\boldsymbol{w}_T})\|_2] \leq \epsilon$ by setting: 
\begin{align}
T\lesssim  \frac{r_{\max}}{K\epsilon^3}\quad \text{and} \quad R\lesssim\frac{r_{\max}}{\epsilon^2}. 
\end{align}
\end{theorem}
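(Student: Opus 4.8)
The plan is to prove~\cref{app:adaptive} by a \emph{reduction} to the convergence guarantee of the adaptive federated method of~\citep[Theorem 2]{AdaptiveFL}, exploiting the single structural fact that the density ratios $r_f(\boldsymbol{y})$ do not depend on the model parameters $\boldsymbol{w}$. Writing the per-sample reweighted loss as $\tilde\ell(\boldsymbol{w};\boldsymbol{x},\boldsymbol{y}) = r(\boldsymbol{y})\,\ell(h_{\boldsymbol{w}}(\boldsymbol{x}),\boldsymbol{y})$ with $0 \le r(\boldsymbol{y}) \le r_{\max}$, absorbing the ratios into the local objectives leaves the \emph{qualitative} structure (nonconvexity, differentiability) untouched while only rescaling the \emph{quantitative} constants. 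The first step is therefore to verify that the reweighted objective still satisfies~\cref{assumption:adaptive} with the smoothness constant $\beta$ replaced by an effective constant $r_{\max}\beta$; the second step is to substitute this effective constant into the cited rate and collect the explicit $r_{\max}$ factor.

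For the verification I would check each clause of~\cref{assumption:adaptive} in turn. Since $r(\boldsymbol{y})$ is a deterministic scalar in $[0,r_{\max}]$, for any $\boldsymbol{w}_1,\boldsymbol{w}_2$ we have
\[
\|\nabla_{\boldsymbol{w}}\tilde\ell(\boldsymbol{w}_1) - \nabla_{\boldsymbol{w}}\tilde\ell(\boldsymbol{w}_2)\|_2 = r(\boldsymbol{y})\,\|\nabla_{\boldsymbol{w}}\ell(\boldsymbol{w}_1) - \nabla_{\boldsymbol{w}}\ell(\boldsymbol{w}_2)\|_2 \le r_{\max}\beta\,\|\boldsymbol{w}_1-\boldsymbol{w}_2\|_2,
\]
so $\ell\circ h_{\boldsymbol{w}}$ becomes $r_{\max}\beta$-smooth and bounded gradients remain bounded (scaled by $r_{\max}$). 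Unbiasedness of the reweighted stochastic gradient for the gradient of the finite-sample reweighted objective is immediate because the reweighting is deterministic and minibatch sampling is unbiased, while consistency of this objective toward the true risk is the separate content of~\cref{Prop:IW-ERM}; the variance bound is preserved up to a factor of at most $r_{\max}^2$ since $\|\boldsymbol{g}(\boldsymbol{w})\|_2 \le r_{\max}\|\nabla_{\boldsymbol{w}}\ell(h_{\boldsymbol{w}})\|_2$. Crucially, the adaptive-matrix clause $\lambda_{\min}(A_t)\ge\rho$ is a property of the \emph{algorithm} (the construction of $A_t$ in~\citep[Algorithm 2]{AdaptiveFL}), not of the loss, and is therefore inherited verbatim.

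With~\cref{assumption:adaptive} in force for the reweighted objective at effective smoothness $r_{\max}\beta$, the third step invokes~\citep[Theorem 2]{AdaptiveFL} directly: running the adaptive scheme for $T=\tau R$ total stochastic gradients per node yields $\mathbb{E}[\|\nabla_{\boldsymbol{w}}\ell(h_{\boldsymbol{w}_T})\|_2]\le\epsilon$ once the total-iteration and communication-round budgets meet the cited thresholds with $\beta$ instantiated as $r_{\max}\beta$. Folding the original $\beta$ and the $K$-independent constants into the $\lesssim$ notation then leaves $T \lesssim r_{\max}/(K\epsilon^3)$ and $R \lesssim r_{\max}/\epsilon^2$, matching the claim and preserving the linear-in-$K$ speedup of the base method.

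The main obstacle I anticipate is bookkeeping the \emph{order} of the $r_{\max}$ dependence rather than its mere presence. The smoothness and gradient-norm constants scale linearly in $r_{\max}$, but the gradient-variance bound scales as $r_{\max}^2$, and these enter the cited rate through different terms; I would need to trace the proof of~\citep[Theorem 2]{AdaptiveFL} to confirm that the terms governing the leading $1/(K\epsilon^3)$ and $1/\epsilon^2$ behaviour carry only a linear $r_{\max}$, so that the quadratic variance contribution is either subdominant in the target accuracy regime or absorbed into the constants hidden by $\lesssim$. Establishing this is what justifies stating the bounds with a single linear $r_{\max}$ factor, consistent with the general principle recorded in~\cref{thm:conv}.
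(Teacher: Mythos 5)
Your proposal takes essentially the same route as the paper: the paper's entire argument for this family of results is that the density ratios are constants with respect to $\boldsymbol{w}$, so the data-dependent constants of $\ell\circ h_{\boldsymbol{w}}$ scale linearly by $r_{\max}$, after which the rate is read off from the cited adaptive-step-size theorem. Your closing caveat about the stochastic-gradient variance scaling as $r_{\max}^2$ rather than $r_{\max}$ is a legitimate bookkeeping point that the paper does not itself address, so your write-up is, if anything, more careful than the paper's one-line reduction.
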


\begin{assumption}[Composite Optimization with Proximal Operator]\label{assumption:proxy} 1) The $\ell\circ h_{\boldsymbol{w}}$ is smooth and strongly convex with condition number $\kappa$; 2) The stochastic gradients $\boldsymbol{g}(\boldsymbol{w})=\widetilde\nabla_{\boldsymbol{w}}\ell(h_{\boldsymbol{w}})$ is unbiased.
\end{assumption}

For composite optimization with strongly convex and smooth functions and proximal operator, we establish an upper bound on oracle complexity to achieve $\epsilon$ error on the  Lyapunov function defined as in~\citep[Section 4]{Prox}  for Gradient Flow-type transformation of IW-ERM with VRLS in the limit of infinitesimal step-size.

\begin{theorem}[Oracle Complexity of Proximal Operator for Composite Optimization]\label{app:proxy} Let $\kappa$ denote the condition number.  Under~\cref{assumption:proxy}, suppose Gradient Flow-type transformation of ~\cref{alg:IWERM_detail} with VRLS and Proximal Operator evolves in the limit of infinitesimal step-size ~\citep[Algorithm 3]{Prox}.
Then it achieves $\mathcal{O}\big(r_{\max}\sqrt{\kappa}\log(1/\epsilon)\big)$ Proximal Operator Complexity.
\end{theorem}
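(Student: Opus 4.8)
The plan is to reduce the statement to the continuous-time proximal analysis of \citep{Prox} by showing that importance weighting with VRLS only rescales the data-dependent smoothness constant of the smooth part of the objective, while leaving the composite structure intact. The key structural fact, already used in the proof of \cref{thm:conv} and throughout \cref{sec:theory_guarantee}, is that the density ratios produced by VRLS do not depend on the model parameters $\boldsymbol{w}$. Absorbing them into the local losses therefore yields a reweighted smooth part $f(\boldsymbol{w}) \propto \sum_{k,i} r_k(\boldsymbol{y}_{k,i})\,\ell(h_{\boldsymbol{w}}(\boldsymbol{x}_{k,i}),\boldsymbol{y}_{k,i})$ whose convexity and strong convexity are preserved and whose Hessian is bounded above by $r_{\max}$ times the Hessian bound of the unweighted loss. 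Hence, under \cref{assumption:proxy}, the smooth part remains strongly convex and smooth, with its smoothness (and Lipschitz) constant scaled to at most $r_{\max}\beta$.

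First I would cast the IW-ERM objective in composite form, with the reweighted smooth loss $f$ and the regularizer or constraint handled by the proximal operator, and then take the Gradient Flow-type transformation of \cref{alg:IWERM_detail} in the infinitesimal step-size limit exactly as in \citep[Algorithm 3]{Prox}. Next I would import the Lyapunov function $\mathcal{E}$ and its contraction estimate from \citep[Section 4]{Prox}: for a $\kappa$-conditioned strongly convex and smooth composite problem this yields a baseline proximal operator complexity of $\mathcal{O}(\sqrt{\kappa}\log(1/\epsilon))$. The only quantity altered by importance weighting is the smoothness constant that controls the admissible discretization granularity of the flow, i.e. the step-size restriction of the proximal scheme; replacing $\beta$ by $r_{\max}\beta$ there inflates the number of proximal evaluations needed to track the flow to accuracy $\epsilon$, producing the claimed $\mathcal{O}(r_{\max}\sqrt{\kappa}\log(1/\epsilon))$ bound.

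The hard part will be making the propagation of $r_{\max}$ through the Lyapunov argument precise and confirming that it surfaces linearly rather than as $\sqrt{r_{\max}}$. Because reweighting scales the smoothness of $f$ but, owing to the normalization of the label-shift ratios, leaves the effective strong convexity essentially at its original level, a naive substitution of an inflated condition number would suggest only a $\sqrt{r_{\max}}$ dependence. I would therefore track separately how $r_{\max}$ enters the step-size constraint of the proximal discretization and how it enters the contraction rate of $\mathcal{E}$, verifying that these contributions, together with any $r_{\max}$ dependence hidden in the initialization of $\mathcal{E}$ or in the per-step proximal cost, combine into a single linear prefactor. Once this bookkeeping is settled, the remainder follows by directly invoking the baseline complexity of \citep{Prox}.
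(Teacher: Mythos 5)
Your approach matches the paper's: the paper gives no detailed proof of this theorem, only the blanket observation (stated once for all the results in this appendix) that the VRLS ratios are independent of $\boldsymbol{w}$, so convexity and smoothness of the reweighted loss are preserved with data-dependent constants scaled by $r_{\max}$, after which the $\mathcal{O}(\sqrt{\kappa}\log(1/\epsilon))$ rate is imported wholesale from \citep[Section 4]{Prox}. The subtlety you flag --- whether $r_{\max}$ should surface linearly or only as $\sqrt{r_{\max}}$ through an inflated condition number, given that reweighting scales the smoothness but not necessarily the strong convexity --- is not addressed in the paper at all, which simply asserts the linear prefactor; your proposed bookkeeping would therefore be more rigorous than what is actually written.
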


\newpage
\section{Complexity Analysis}\label{app:complex}

In our algorithm, the ratio estimation is performed once in parallel before the IW-ERM step.

In the experiments, we used a simple network to estimate the ratios in advance, which required significantly less computational effort compared to training the global model. Although IW-ERM with VRLS introduces additional computational complexity compared to the baseline FedAvg, it results in substantial improvements in overall generalization, particularly under challenging label shift conditions.

\newpage
\section{Mathematical Notations}\label{app:mathlabel}
 In this appendix, we provide a summary of mathematical notations used in this paper in~\cref{app:fig:mathsym}:
\textbf{\begin{table*}[!htb]
\centering
\caption{Math Symbols}
\label{app:fig:mathsym}
\begin{tabular}{llll}
\toprule
Math Symbol &             Definition\\
\midrule
$\mathcal{X}$ & Compact metric space for features\\
$\mathcal{Y}$ & Discrete label space  with $|\mathcal{Y}|=m$\\
$K$ & Number of clients in an FL setting\\
$\mathcal{S}_k$ & All samples in the training set of client $k$ \\
$h_{\boldsymbol{w}}$ & Hypothesis function $h_{\boldsymbol{w}}: \mathcal{X}\rightarrow\mathcal{Y}$ \\
$\mathcal{H}$ & Hypothesis class for $h_{\boldsymbol{w}}$ \\
$\mathcal{Z}$ & Mapping space from $\mathcal{X}$, which can be discrete or continuous\\
\bottomrule
\end{tabular}

\end{table*}}

\newpage
\section{Limitations}\label{app:limitations}

The distribution shifts observed in real-world data are often not fully captured by the label shift or relaxed distribution shift assumptions. In our experiments, we applied mild test data augmentation to approximate the relaxed label shift and manage ratio estimation errors for both the baselines and our method. However, the label shift assumption remains overly restrictive, and the relaxed label shift lacks robust empirical validation in practical scenarios.

Additionally, IW-ERM’s parameter estimation relies on local predictors at each client, which limits its scalability. In practice, a simpler global predictor could be sufficient for parameter estimation and IW-ERM training. Future research could explore VRLS variants capable of effectively handling more complex distribution shifts in challenging datasets, such as CIFAR-10.1 \citep{recht2018cifar10_1, torralba2008tinyimages}, as suggested in \citep{rls}.

\newpage

\section{Experimental Details and Additional Experiments}\label{app:exp}

In this section, we provide experimental details and additional experiments. In particular,  we validate our theory on multiple clients in a federated setting and show that our IW-ERM outperforms FedAvg and FedBN baselines {\it under drastic and challenging label shifts}. 

\subsection{Experimental Details}

In single-client experiments, a simple MLP without dropout is used as the predictor for MNIST, and ResNet-18 for CIFAR-10.

For experiments in a federated learning setting, both MNIST~\citep{MNIST} and Fashion MNIST~\citep{f-mnist} datasets are employed, each containing 60,000 training samples and 10,000 test samples, with each sample being a 28 by 28 pixel grayscale image. The CIFAR-10 dataset~\citep{CIFAR10} comprises 60,000 colored images, sized 32 by 32 pixels, spread across 10 classes with 6,000 images per class; it is divided into 50,000 training images and 10,000 test images. In this setting, the objective is to minimize the cross-entropy loss. Stochastic gradients for each client are calculated with a batch size of 64 and aggregated on the server using the Adam optimizer. LeNet is used for experiments on MNIST and Fashion MNIST with a learning rate of 0.001 and a weight decay of \(1 \times 10^{-6}\). For CIFAR-10, ResNet-18 is employed with a learning rate of 0.0001 and a weight decay of 0.0001. Three independent runs are implemented for 5-client experiments on Fashion MNIST and CIFAR-10, while for 10 clients, one run is conducted on CIFAR-10. The regularization coefficient $\zeta$ in~\cref{eq:f_g} is set to $1$ for all experiments.
All experiments are performed using a single GPU on an internal cluster and Colab.

Importantly, the training of the predictor for ratio estimation on both the baseline MLLS and our VRLS is executed with identical hyperparameters and epochs for CIFAR-10 and Fashion MNIST. The training is halted once the classification loss reaches a predefined threshold on MNIST.

\subsection{Relaxed Label Shift Experiments}\label{rlbg2}

In conventional label shift, it is assumed that $p(\boldsymbol{x} \mid y)$ remains unchanged across training and test data. However, this assumption is often too strong for real-world applications, such as in healthcare, where different hospitals may use varying equipment, leading to shifts in $p(\boldsymbol{x} \mid y)$ even with the same labels \citep{rajendran2023data}. Relaxed label shift loosens this assumption by allowing small changes in the conditional distribution \citep{rls, Luo2022GeneralizedLS}.

To formalize this, we use the distributional distance $\mathcal{D}$ and a relaxation parameter $\epsilon > 0$, as defined by \citet{rls}: 
$\max_{y} \mathcal{D}\left(p_{\text{tr}}(\boldsymbol{x} \mid y), p_{\text{te}}(\boldsymbol{x} \mid y)\right) \leq \epsilon$. This allows for slight differences in feature distributions between training and testing, capturing a more realistic scenario where the conditional distribution is not strictly invariant.

In our case, visual inspection suggests that the differences between temporally distinct datasets, such as CIFAR-10 and CIFAR-10.1\_v6~\citep{torralba2008tinyimages,recht2018cifar10_1}, may not meet the assumption of a small $\epsilon$. To address this, we instead simulate controlled shifts using test data augmentation, allowing us to regulate the degree of relaxation, following the approach outlined in \citet{rls}.

\subsection{Additional Experiments}
In this section, we provide supplementary results, visualizations of accuracy across clients and tables showing dataset distribution in FL setting and relaxed label shift.

\begin{figure*}[t]
    \centering
    
    \begin{minipage}{.32\textwidth}
        \centering
        \includegraphics[width=\linewidth]{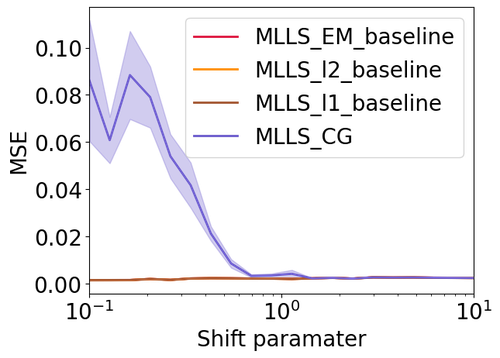}
    \end{minipage}%
    \hfill
    \begin{minipage}{.32\textwidth}
        \centering
        \includegraphics[width=\linewidth]{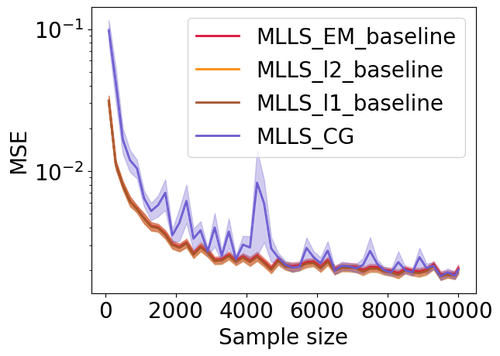}
    \end{minipage}%
    \hfill
    \begin{minipage}{.32\textwidth}
        \centering
        \includegraphics[width=\linewidth]{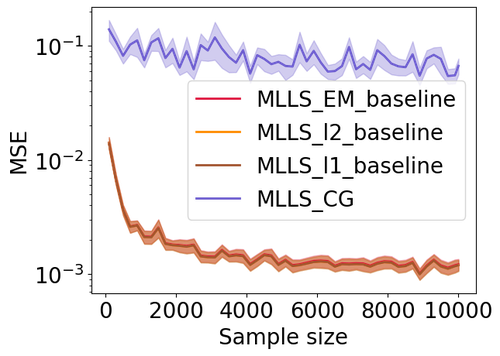}
    \end{minipage}
    
    \caption{
        MSE analysis on MNIST for MLLS baselines.
        \textbf{Left:} Performance evaluation across various alpha values, comparing different methods: MLLS\_EM, MLLS\_L1, MLLS\_L2, and MLLS\_CG. MLLS\_L1 and MLLS\_L2 utilize convex optimization with $L_1$ and $L_2$ regularization for estimating our limited test sample problem, respectively, and are solved directly with a convex solver. In contrast, MLLS\_CG uses conjugate gradient descent and MLLS\_EM solves this convex optimization problem with EM algorithm. Both the EM and convex optimization methods (MLLS\_L1, MLLS\_L2) demonstrate superior and more consistent performance, especially under severe label shift conditions, when compared to MLLS\_CG.
        \textbf{Middle:} At an alpha value of 1.0, the MSE analysis shows comparable performance across most methods, with the exception of MLLS\_CG, which lags behind.
        \textbf{Right:} For alpha=0.1, MLLS\_CG performs significantly worse than the EM and convex optimization methods, consistent with the trends observed in the left plot.
    }
    \label{figure_1}
\end{figure*}

\begin{figure*}[t]
    \centering
    
    \begin{minipage}{.45\textwidth}
        \centering
        \includegraphics[width=\linewidth]{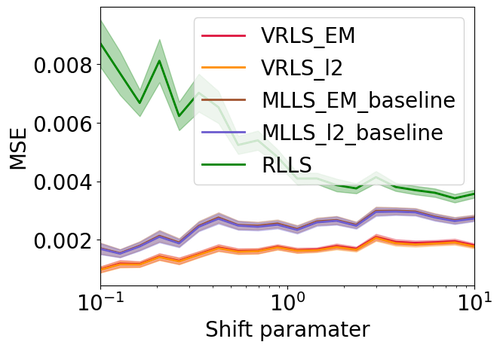}
    \end{minipage}%
    \hfill
    \begin{minipage}{.45\textwidth}
        \centering
        \includegraphics[width=\linewidth]{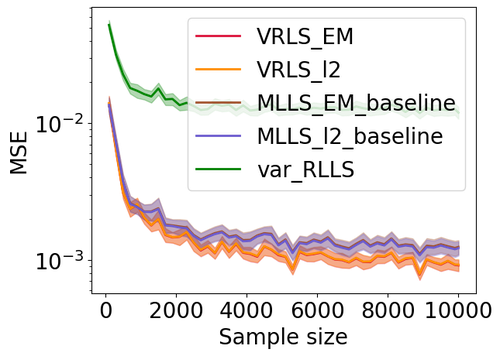}
    \end{minipage}%
    \hfill 
    \caption{In our detailed analysis with the MNIST dataset, we conduct a thorough comparison of VRLS alongside MLLS \citep{mlls}, EM \citep{bbse_2002}, and also RLLS \citep{rlls}.}
    \label{rlls_comparison}
\end{figure*}

\begin{figure*}[t]
    \centering
    
    \begin{minipage}{.45\textwidth}
        \centering
        \includegraphics[width=\linewidth]{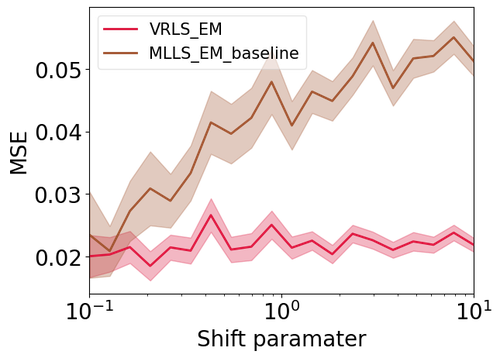}
    \end{minipage}%
    \hfill
    \begin{minipage}{.45\textwidth}
        \centering
        \includegraphics[width=\linewidth]{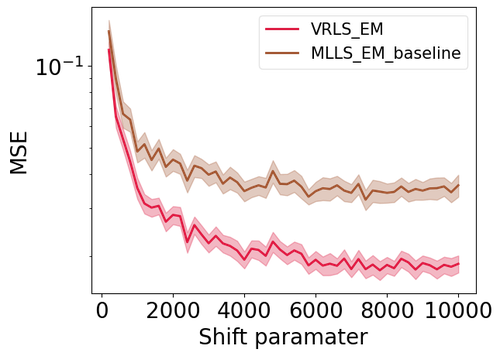}
    \end{minipage}%
    \hfill 
    \caption{In this experiment with Fashion MNIST, a simple MLP with dropout were employed. }
\end{figure*}

\textbf{\begin{table*}[!htb]
\centering
\caption{LeNet on Fashion MNIST with label shift across $5$ clients. 15,000 iterations for FedAvg and FedBN; 5,000 for Upper Bound (FTW-ERM) using true ratios and our IW-ERM. To mention, to train our predictor, we use a simpliest MLP and employ linear kernel.}
\label{app:fig:label-shift:fmnist:table}

\begin{tabular}{lllll}
\toprule
{\bf FMNIST} &   {\bf Our IW-ERM} &   FedAvg &  FedBN &  Upper Bound\\
\midrule
\textbf{Avg. accuracy}  &  $\boldsymbol{0.7520}$ $\pm$ $\boldsymbol{0.0209}$ & 0.5472  $\pm$ 0.0297& 0.5359 $\pm$ 0.0306&0.8273 $\pm$ 0.0041\\
Client 1 accuracy & $\boldsymbol{0.7162}$ $\pm$ $\boldsymbol{0.0059}$ & 0.3616 $\pm$ 0.0527 & 0.3261 $\pm$ 0.0296&0.8590 $\pm$ 0.0062\\
Client 2 accuracy &  $\boldsymbol{0.9266}$ $\pm$ $\boldsymbol{0.0125}$   & 0.9060 $\pm$ 0.0157 & 0.9035 $\pm$ 0.0162& 0.9357 $\pm$ 0.0037\\
Client 3 accuracy &  $\boldsymbol{0.6724}$ $\pm$ $\boldsymbol{0.0467}$  & 0.3279 $\pm$ 0.0353 & 0.3612 $\pm$ 0.0814& 0.7896 $\pm$ 0.0109\\
Client 4 accuracy & $\boldsymbol{0.7979}$ $\pm$ $\boldsymbol{0.0448}$  & 0.6858 $\pm$ 0.0105 & 0.6654 $\pm$ 0.0121& 0.8098 $\pm$ 0.0112\\
Client 5 accuracy & $\boldsymbol{0.6468}$ $\pm$ $\boldsymbol{0.0248}$  & 0.4548 $\pm$ 0.0655& 0.4234 $\pm$ 0.0387& 0.7426 $\pm$ 0.0257\\
\bottomrule
\end{tabular}

\end{table*}}

\textbf{\begin{table*}[!htb]
\centering
\caption{ResNet-18 on CIFAR-10 with label shift across $5$ clients. For fair comparison, we run 5,000 iterations for our method and Upper Bound, while 10000 for FedAvg and FedBN.}
\label{app:fig:label-shift:cifar10:table}

\begin{tabular}{lllll}
\toprule
\textbf{CIFAR-10} &              Our IW-ERM &   FedAvg & FedBN & Upper Bound \\
\midrule
\textbf{Avg. accuracy}  & $\boldsymbol{0.5640}$ $\pm$ $\boldsymbol{0.0241}$ &0.4515 $\pm$ 0.0148 & 0.4263 $\pm$ 0.0975 &   0.5790 $\pm$ 0.0103 \\
Client 1 accuracy & $\boldsymbol{0.6410}$ $\pm$ $\boldsymbol{0.0924}$ & 0.5405 $\pm$ 0.1845 & 0.5321 $\pm$ 0.0620&0.7462 $\pm$ 0.0339 \\
Client 2 accuracy & $\boldsymbol{0.8434}$ $\pm$ $\boldsymbol{0.0359}$ & 0.3753 $\pm$ 0.0828 & 0.4656 $\pm$ 0.2158 & 0.7509 $\pm$ 0.0534 \\
Client 3 accuracy & $\boldsymbol{0.4591}$ $\pm$ $\boldsymbol{0.1131}$ & 0.3973 $\pm$ 0.1333 &  0.2838 $\pm$  0.1055 &         0.5845 $\pm$ 0.0854 \\
Client 4 accuracy & $\boldsymbol{0.4751}$ $\pm$ $\boldsymbol{0.1241}$ & 0.5007 $\pm$ 0.1303  & 0.5256 $\pm$ 0.1932  &         0.3507   $\pm$ 0.0578\\
Client 5 accuracy & $\boldsymbol{0.4013}$ $\pm$ $\boldsymbol{0.0430}$ & 0.4429 $\pm$ 0.1195 & 0.5603 $\pm$ 0.1581 & 0.4627 $\pm$ 0.0456\\
\bottomrule
\end{tabular}

\end{table*}}

\begin{figure}[t]
    \centering
    \includegraphics[width=0.5\textwidth]{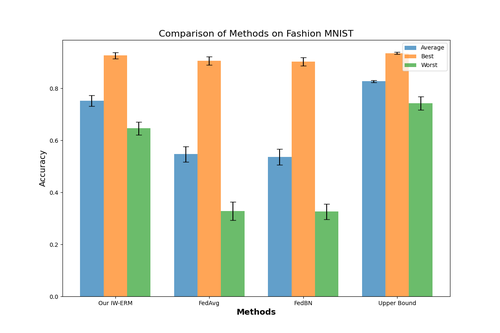}
    \caption{The average, best-client, and worst-client accuracy, along with their standard deviations, are derived from \cref{app:fig:label-shift:fmnist:table}. Our method exhibits the lowest standard deviation, showcasing the most robust accuracy amongst the compared methods.}
    \label{fig:mean_std_2_2}
\end{figure}

\begin{figure}[t]
    \centering
    \includegraphics[width=0.5\textwidth]{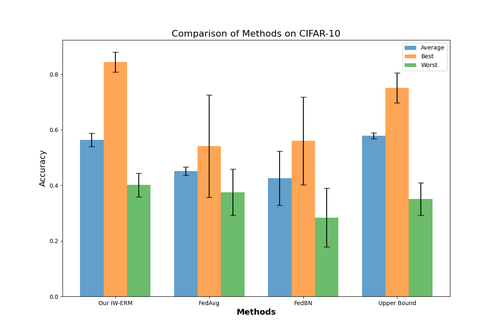}
    \caption{The average, best-client, and worst-client accuracy, along with their standard deviations, are derived from \cref{app:fig:label-shift:cifar10:table}.}
    \label{app:fig:mean_std_1_1}
\end{figure}

\begin{table*}[!tb]
\centering
\caption{Label distribution on Fasion MNIST with 5 nodes, with the majority of classes possessing a limited number of training and test images across each node.}
\label{app:fig:target-shift:fmnist:dist}
\begin{tabular}{llrrrrrrrrrr}
\toprule
         &      & \multicolumn{10}{c}{Class} \\
         &      &     0 &    1 &    2 &    3 &    4 &     5 &     6 &     7 &     8 &     9 \\
\midrule
\multirow{2}{*}{Node 1} & Train &    34 &   34 &   34 &   34 &   34 &  5862 &    34 &    34 &    34 &    34 \\
         & Test &   977 &    5 &    5 &    5 &    5 &     5 &     5 &     5 &     5 &     5 \\
\cline{1-12}
\multirow{2}{*}{Node 2} & Train &    34 &   34 &   34 &   34 &   34 &    34 &  5862 &    34 &    34 &    34 \\
         & Test &     5 &  977 &    5 &    5 &    5 &     5 &     5 &     5 &     5 &     5 \\
\cline{1-12}
\multirow{2}{*}{Node 3} & Train &    34 &   34 &   34 &   34 &   34 &    34 &    34 &  5862 &    34 &    34 \\
         & Test &     5 &    5 &  977 &    5 &    5 &     5 &     5 &     5 &     5 &     5 \\
\cline{1-12}
\multirow{2}{*}{Node 4} & Train &    34 &   34 &   34 &   34 &   34 &    34 &    34 &    34 &  5862 &    34 \\
         & Test &     5 &    5 &    5 &  977 &    5 &     5 &     5 &     5 &     5 &     5 \\
\cline{1-12}
\multirow{2}{*}{Node 5} & Train &    34 &   34 &   34 &   34 &   34 &    34 &    34 &    34 &    34 &  5862 \\
         & Test &     5 &    5 &    5 &    5 &  977 &     5 &     5 &     5 &     5 &     5 \\
\bottomrule
\end{tabular}

\end{table*}

\begin{table*}[!tb]
\centering
\caption{Label distribution on CIFAR-10 with 5 clients, with the majority of classes possessing a limited number of training and test images across each client.}
\label{app:fig:target-shift:cifar10:dist}

\end{table*}

\begin{table*}[!tb]
\centering
\caption{Label distribution on CIFAR-10 with 100 clients, wherein groups of 10 clients share the same distribution and ratios. The majority of classes possess a limited quantity of training and test images on each client.}
\label{app:fig:target-shift:cifar10:client100:dist}
\begin{tabular}{lllllll}
\toprule
              &      & \multicolumn{5}{c}{Class} \\
              &      & 0 & 1 & 2 & 3 & 4 \\
\midrule
\multirow{2}{*}{Client 1-10} & Train & $\nicefrac{{95}}{{100}}$ & $\nicefrac{{5}}{{9}}$ & $\nicefrac{{5}}{{9}}$ & $\nicefrac{{5}}{{9}}$ & $\nicefrac{{5}}{{9}}$ \\
              & Test & $\nicefrac{{5}}{{9}}$ & $\nicefrac{{5}}{{9}}$ & $\nicefrac{{5}}{{9}}$ & $\nicefrac{{5}}{{9}}$ & $\nicefrac{{5}}{{9}}$ \\
\cline{1-7}
\multirow{2}{*}{Client 11-20} & Train & $\nicefrac{{5}}{{9}}$ & $\nicefrac{{95}}{{100}}$ & $\nicefrac{{5}}{{9}}$ & $\nicefrac{{5}}{{9}}$ & $\nicefrac{{5}}{{9}}$ \\
              & Test & $\nicefrac{{5}}{{9}}$ & $\nicefrac{{5}}{{9}}$ & $\nicefrac{{5}}{{9}}$ & $\nicefrac{{5}}{{9}}$ & $\nicefrac{{5}}{{9}}$ \\
\cline{1-7}
\multirow{2}{*}{Client 21-30} & Train & $\nicefrac{{5}}{{9}}$ & $\nicefrac{{5}}{{9}}$ & $\nicefrac{{95}}{{100}}$ & $\nicefrac{{5}}{{9}}$ & $\nicefrac{{5}}{{9}}$ \\
              & Test & $\nicefrac{{5}}{{9}}$ & $\nicefrac{{5}}{{9}}$ & $\nicefrac{{5}}{{9}}$ & $\nicefrac{{5}}{{9}}$ & $\nicefrac{{5}}{{9}}$ \\
\cline{1-7}
\multirow{2}{*}{Client 31-40} & Train & $\nicefrac{{5}}{{9}}$ & $\nicefrac{{5}}{{9}}$ & $\nicefrac{{5}}{{9}}$ & $\nicefrac{{95}}{{100}}$ & $\nicefrac{{5}}{{9}}$ \\
              & Test & $\nicefrac{{5}}{{9}}$ & $\nicefrac{{5}}{{9}}$ & $\nicefrac{{5}}{{9}}$ & $\nicefrac{{5}}{{9}}$ & $\nicefrac{{5}}{{9}}$ \\
\cline{1-7}
\multirow{2}{*}{Client 41-50} & Train & $\nicefrac{{5}}{{9}}$ & $\nicefrac{{5}}{{9}}$ & $\nicefrac{{5}}{{9}}$ & $\nicefrac{{5}}{{9}}$ & $\nicefrac{{95}}{{100}}$ \\
              & Test & $\nicefrac{{5}}{{9}}$ & $\nicefrac{{5}}{{9}}$ & $\nicefrac{{5}}{{9}}$ & $\nicefrac{{5}}{{9}}$ & $\nicefrac{{5}}{{9}}$ \\
\cline{1-7}
\multirow{2}{*}{Client 51-60} & Train & $\nicefrac{{5}}{{9}}$ & $\nicefrac{{5}}{{9}}$ & $\nicefrac{{5}}{{9}}$ & $\nicefrac{{5}}{{9}}$ & $\nicefrac{{5}}{{9}}$ \\
              & Test & $\nicefrac{{5}}{{9}}$ & $\nicefrac{{5}}{{9}}$ & $\nicefrac{{5}}{{9}}$ & $\nicefrac{{5}}{{9}}$ & $\nicefrac{{95}}{{100}}$ \\
\cline{1-7}
\multirow{2}{*}{Client 61-70} & Train & $\nicefrac{{5}}{{9}}$ & $\nicefrac{{5}}{{9}}$ & $\nicefrac{{5}}{{9}}$ & $\nicefrac{{5}}{{9}}$ & $\nicefrac{{5}}{{9}}$ \\
              & Test & $\nicefrac{{5}}{{9}}$ & $\nicefrac{{5}}{{9}}$ & $\nicefrac{{5}}{{9}}$ & $\nicefrac{{95}}{{100}}$ & $\nicefrac{{5}}{{9}}$ \\
\cline{1-7}
\multirow{2}{*}{Client 71-80} & Train & $\nicefrac{{5}}{{9}}$ & $\nicefrac{{5}}{{9}}$ & $\nicefrac{{5}}{{9}}$ & $\nicefrac{{5}}{{9}}$ & $\nicefrac{{5}}{{9}}$ \\
              & Test & $\nicefrac{{5}}{{9}}$ & $\nicefrac{{5}}{{9}}$ & $\nicefrac{{95}}{{100}}$ & $\nicefrac{{5}}{{9}}$ & $\nicefrac{{5}}{{9}}$ \\
\cline{1-7}
\multirow{2}{*}{Client 81-90} & Train & $\nicefrac{{5}}{{9}}$ & $\nicefrac{{5}}{{9}}$ & $\nicefrac{{5}}{{9}}$ & $\nicefrac{{5}}{{9}}$ & $\nicefrac{{5}}{{9}}$ \\
              & Test & $\nicefrac{{5}}{{9}}$ & $\nicefrac{{95}}{{100}}$ & $\nicefrac{{5}}{{9}}$ & $\nicefrac{{5}}{{9}}$ & $\nicefrac{{5}}{{9}}$ \\
\cline{1-7}
\multirow{2}{*}{Client 91-100} & Train & $\nicefrac{{5}}{{9}}$ & $\nicefrac{{5}}{{9}}$ & $\nicefrac{{5}}{{9}}$ & $\nicefrac{{5}}{{9}}$ & $\nicefrac{{5}}{{9}}$ \\
              & Test & $\nicefrac{{95}}{{100}}$ & $\nicefrac{{5}}{{9}}$ & $\nicefrac{{5}}{{9}}$ & $\nicefrac{{5}}{{9}}$ & $\nicefrac{{5}}{{9}}$ \\
\bottomrule
\end{tabular}

\begin{tabular}{lllllll}
\toprule
              &      & \multicolumn{5}{c}{Class} \\
              &      & 5 & 6 & 7 & 8 & 9 \\
\midrule
\multirow{2}{*}{Client 1-10} & Train & $\nicefrac{{5}}{{9}}$ & $\nicefrac{{5}}{{9}}$ & $\nicefrac{{5}}{{9}}$ & $\nicefrac{{5}}{{9}}$ & $\nicefrac{{5}}{{9}}$ \\
              & Test & $\nicefrac{{5}}{{9}}$ & $\nicefrac{{5}}{{9}}$ & $\nicefrac{{5}}{{9}}$ & $\nicefrac{{5}}{{9}}$ &  $\nicefrac{{95}}{{100}}$ \\
\cline{1-7}
\multirow{2}{*}{Client 11-20} & Train & $\nicefrac{{5}}{{9}}$ & $\nicefrac{{5}}{{9}}$ & $\nicefrac{{5}}{{9}}$ & $\nicefrac{{5}}{{9}}$ & $\nicefrac{{5}}{{9}}$ \\
              & Test & $\nicefrac{{5}}{{9}}$ & $\nicefrac{{5}}{{9}}$ & $\nicefrac{{5}}{{9}}$ &  $\nicefrac{{95}}{{100}}$ & $\nicefrac{{5}}{{9}}$ \\
\cline{1-7}
\multirow{2}{*}{Client 21-30} & Train &  $\nicefrac{{5}}{{9}}$ &  $\nicefrac{{5}}{{9}}$ &  $\nicefrac{{5}}{{9}}$ &  $\nicefrac{{5}}{{9}}$ &  $\nicefrac{{5}}{{9}}$ \\
                  & Test  &  $\nicefrac{{5}}{{9}}$ &  $\nicefrac{{5}}{{9}}$ &  $\nicefrac{{95}}{{100}}$ &  $\nicefrac{{5}}{{9}}$ &  $\nicefrac{{5}}{{9}}$ \\
\cline{1-7}
\multirow{2}{*}{Client 31-40} & Train &  $\nicefrac{{5}}{{9}}$ &  $\nicefrac{{5}}{{9}}$ &  $\nicefrac{{5}}{{9}}$ &  $\nicefrac{{5}}{{9}}$ &  $\nicefrac{{5}}{{9}}$ \\
                  & Test  &  $\nicefrac{{5}}{{9}}$ &  $\nicefrac{{95}}{{100}}$ &  $\nicefrac{{5}}{{9}}$ &  $\nicefrac{{5}}{{9}}$ &  $\nicefrac{{5}}{{9}}$ \\
\cline{1-7}
\multirow{2}{*}{Client 41-50} & Train &  $\nicefrac{{5}}{{9}}$ &  $\nicefrac{{5}}{{9}}$ &  $\nicefrac{{5}}{{9}}$ &  $\nicefrac{{5}}{{9}}$ &  $\nicefrac{{5}}{{9}}$ \\
                  & Test  &   $\nicefrac{{95}}{{100}}$ &  $\nicefrac{{5}}{{9}}$ &  $\nicefrac{{5}}{{9}}$ &  $\nicefrac{{5}}{{9}}$ &  $\nicefrac{{5}}{{9}}$ \\
\cline{1-7}
\multirow{2}{*}{Client 51-60} & Train &  $\nicefrac{{95}}{{100}}$ &  $\nicefrac{{5}}{{9}}$ &  $\nicefrac{{5}}{{9}}$ &  $\nicefrac{{5}}{{9}}$ &  $\nicefrac{{5}}{{9}}$ \\
                  & Test  &  $\nicefrac{{5}}{{9}}$ &  $\nicefrac{{5}}{{9}}$ &  $\nicefrac{{5}}{{9}}$ &  $\nicefrac{{5}}{{9}}$ &  $\nicefrac{{5}}{{9}}$ \\
\cline{1-7}
\multirow{2}{*}{Client 61-70} & Train &  $\nicefrac{{5}}{{9}}$ &  $\nicefrac{{95}}{{100}}$ &  $\nicefrac{{5}}{{9}}$ &  $\nicefrac{{5}}{{9}}$ &  $\nicefrac{{5}}{{9}}$ \\
                  & Test  &  $\nicefrac{{5}}{{9}}$ &  $\nicefrac{{5}}{{9}}$ &  $\nicefrac{{5}}{{9}}$ &  $\nicefrac{{5}}{{9}}$ &  $\nicefrac{{5}}{{9}}$ \\
\cline{1-7}
\multirow{2}{*}{Client 71-80} & Train &  $\nicefrac{{5}}{{9}}$ &  $\nicefrac{{5}}{{9}}$ &  $\nicefrac{{95}}{{100}}$ &  $\nicefrac{{5}}{{9}}$ &  $\nicefrac{{5}}{{9}}$ \\
                  & Test  &  $\nicefrac{{5}}{{9}}$ &  $\nicefrac{{5}}{{9}}$ &  $\nicefrac{{5}}{{9}}$ &  $\nicefrac{{5}}{{9}}$ &  $\nicefrac{{5}}{{9}}$ \\
\cline{1-7}
\multirow{2}{*}{Client 81-90} & Train &  $\nicefrac{{5}}{{9}}$ &  $\nicefrac{{5}}{{9}}$ &  $\nicefrac{{5}}{{9}}$ &  $\nicefrac{{95}}{{100}}$ &  $\nicefrac{{5}}{{9}}$ \\
                  & Test  &  $\nicefrac{{5}}{{9}}$ &  $\nicefrac{{5}}{{9}}$ &  $\nicefrac{{5}}{{9}}$ &  $\nicefrac{{5}}{{9}}$ &  $\nicefrac{{5}}{{9}}$ \\
\cline{1-7}
\multirow{2}{*}{Client 91-100} & Train & $\nicefrac{{5}}{{9}}$ & $\nicefrac{{5}}{{9}}$ & $\nicefrac{{5}}{{9}}$ & $\nicefrac{{5}}{{9}}$ & $\nicefrac{{95}}{{100}}$ \\
              & Test & $\nicefrac{{5}}{{9}}$ & $\nicefrac{{5}}{{9}}$ & $\nicefrac{{5}}{{9}}$ & $\nicefrac{{5}}{{9}}$ & $\nicefrac{{5}}{{9}}$ \\
\bottomrule
\end{tabular}

\end{table*}

\end{document}